\documentclass[11pt]{article}
\usepackage{graphicx,amsmath,amsfonts,amssymb,bm,hyperref,url,breakurl,epsfig,epsf,color,MnSymbol,mathbbol, fmtcount,algorithmic,algorithm, semtrans
} 
\usepackage{fullpage}


\date{July 15, 2017; Revised July 2022}

\usepackage{titlesec}

\usepackage{tikz}
\usepackage{pgfplots}

\usetikzlibrary{pgfplots.groupplots}

\setcounter{secnumdepth}{4}

\titleformat{\paragraph}
{\normalfont\normalsize\bfseries}{\theparagraph}{1em}{}
\titlespacing*{\paragraph}
{0pt}{3.25ex plus 1ex minus .2ex}{1.5ex plus .2ex}

\usepackage{caption,subcaption}

\usepackage[bottom,hang,flushmargin]{footmisc} 

\setlength{\captionmargin}{30pt}

\usepackage{hyperref}
\definecolor{darkred}{RGB}{150,0,0}
\definecolor{darkgreen}{RGB}{0,150,0}
\definecolor{darkblue}{RGB}{0,0,200}
\hypersetup{colorlinks=true, linkcolor=darkred, citecolor=darkgreen, urlcolor=black}

\newtheorem{theorem}{Theorem}[section]
\newtheorem{lemma}[theorem]{Lemma}
\newtheorem{corollary}[theorem]{Corollary}
\newtheorem{proposition}[theorem]{Proposition}
\newtheorem{definition}[theorem]{Definition}

\newtheorem{assumption}[theorem]{Assumption}

\newtheorem{remark}[theorem]{Remark}
\newtheorem{example}[theorem]{Example}


\newcommand{\vb}{{\mtx{v}}}

\newcommand{\ub}{\mtx{u}}

\newcommand{\z}{{\mtx{z}}}
\newcommand{\A}{{\mtx{A}}}

\newcommand{\w}{\mtx{w}}



\newcommand{\opnorm}[1]{\left\|#1\right\|}
\newcommand{\fronorm}[1]{\left\|#1\right\|_{F}}

\newcommand{\twonorm}[1]{\left\|#1\right\|_{\ell_2}}

\newcommand{\infnorm}[1]{\left\|#1\right\|_{\ell_\infty}}
\newcommand{\nucnorm}[1]{\left\|#1\right\|_*}
\newcommand{\abs}[1]{\left|#1\right|}

\newcommand{\x}{\vct{x}}
\newcommand{\tx}{\vct{\tilde{x}}}

\newcommand{\W}{\mtx{W}}

\def\cL{\mathcal{L}}
\def\r{\vct{r}}
\def\z{\vct{z}}
\def\J{\mtx{J}}
\def\bJ{\mtx{\tilde{J}}}
\def\bJc{\mtx{\tilde{J}^c}}
\def\X{\mtx{X}}
\def\Dv{\mtx{D_v}}
\def\Gammab{\mtx{\Gamma}}
\def\normal{\mathcal{N}}

\def\diag{{\rm diag}}
\def\M{\mtx{M}}
\def\DM{\mtx{D_M}}
\def\A{\mtx{A}}
\def\B{\mtx{B}}
\def\DA{\mtx{D_A}}
\def\Q{\mtx{Q}}
\def\z{\vct{z}}

\def\de{{\rm d}}
\def\tJ{\tilde{\J}}
\def\mub{\vct{\mu}}
\def\wb{\vct{w}}
\def\mL{m_{\rm{L}}}
\def\mU{m_{\rm{U}}}
\def\tmU{\widetilde{m}_{\rm{U}}}

\def\tw{\widetilde{\wb}}
\def\s{\alpha}

\newcommand{\R}{\mathbb{R}}

\newcommand{\<}{\langle}
\renewcommand{\>}{\rangle}

\renewcommand{\P}{\operatorname{\mathbb{P}}}
\newcommand{\E}{\operatorname{\mathbb{E}}}

\newcommand{\vct}[1]{\bm{#1}}
\newcommand{\mtx}[1]{\bm{#1}}

\newcommand{\rank}{\operatorname{rank}}

\def\vec{{\rm vec}}
\def\tW{\widetilde{\mtx{W}}}
\def\tv{\widetilde{\vct{v}}}


\definecolor{ejc}{RGB}{0,0,255}

\numberwithin{equation}{section} 

\def \endprf{\hfill {\vrule height6pt width6pt depth0pt}\medskip}
\newenvironment{proof}{\noindent {\emph{Proof}} }{\endprf\par}

\newcommand*\samethanks[1][\value{footnote}]{\footnotemark[#1]}

\title{Theoretical insights into the optimization landscape of over-parameterized shallow neural networks}
\author{Mahdi Soltanolkotabi\thanks{Ming Hsieh Department of Electrical Engineering, University of Southern California, Los Angeles, CA} \quad  Adel Javanmard\thanks{Data Sciences and Operations Department, University of Southern California, Los Angeles, CA} \quad Jason D.~Lee\samethanks[2]}

  

\begin{document}
\maketitle
\begin{abstract}
In this paper we study the problem of learning a shallow artificial neural network that best fits a training data set. We study this problem in the over-parameterized regime where the number of observations are fewer than the number of parameters in the model. We show that with quadratic activations the optimization landscape of training such shallow neural networks has certain favorable characteristics that allow globally optimal models to be found efficiently using a variety of local search heuristics. This result holds for an arbitrary training data of input/output pairs. For differentiable activation functions we also show that gradient descent, when suitably initialized, converges at a linear rate to a globally optimal model. This result focuses on a realizable model where the inputs are chosen i.i.d.~from a Gaussian distribution and the labels are generated according to planted weight coefficients.
\end{abstract}
\vspace{0.3cm}
\begin{center}
\emph{\large{Dedicated to the memory of Maryam Mirzakhani.}}
\end{center}
\section{Introduction}
\subsection{Motivation}
Neural network architectures (a.k.a.~deep learning) have recently emerged as powerful tools for automatic knowledge extraction from raw data. These learning architectures have lead to major breakthroughs in applications such as visual object classification \cite{krizhevsky2012imagenet}, speech recognition \cite{mohamed2012acoustic} and natural language processing \cite{collobert2008unified}. Despite their wide empirical use the mathematical success of these architectures remains a mystery. Although the expressive ability of neural networks is relatively well-understood \cite{barron1994approximation}, computational tractability of training such networks remains a major challenge. In fact, training neural nets is known to be NP-hard even for very small networks \cite{blum1988training}. The main challenge is that training neural networks correspond to extremely high-dimensional and nonconvex optimization problems and it is not clear how to provably solve them to global optimality. Worst-case pessimism aside, these networks are trained successfully in practice via local search heuristics on real or randomly generated data. In particular, over-parameterized neural networks-where the number of parameters exceed the number of data samples-can be optimized to global optimality using local search heuristics such as gradient or stochastic gradient methods \cite{zhang2016understanding}. In this paper we wish to provide theoretical insights into this phenomenon by developing a better understanding of optimization landscape of such over-parameterized shallow neural networks.
\subsection{Problem formulation and models}
A fully connected artificial neural network is composed of computational units called neurons. The neurons are decomposed into layers consisting of one input layer, one output layer and a few hidden layers with the output of each layer fed in (as input) to the next layer. In this paper we shall focus on neural networks with only a single hidden layer with $d$ inputs, $k$ hidden neurons and a single output. The overall input-output relationship of the neural network in this case is a function $f:\R^d\rightarrow\R$ that maps the input vector $\vct{x}\in\R^d$ into a scalar output via the following equation
\begin{align}
\label{model}
\vct{x}\mapsto f_{\vct{v},\mtx{W}}(\vct{x}):=\sum_{\ell=1}^k\vct{v}_\ell\phi(\langle\mtx{w}_\ell,\vct{x}\rangle).
\end{align}
In the above the vectors $\vct{w}_\ell\in\R^d$ contains the weights of the edges connecting the input to the $\ell$th hidden node and $\vct{v}_\ell\in\R$ is the weight of the edge connecting the $\ell$th hidden node to the output. Finally, $\phi:\R\rightarrow\R$ denotes the activation function applied to each hidden node. For more compact notation we gathering the weights $\vct{w}_\ell/\vct{v}_\ell$ into larger matrices $\mtx{W}\in\R^{k\times d}$ and $\vct{v}\in\R^k$ of the form
\begin{align*}
\mtx{W}=\begin{bmatrix}\vct{w}_1^T\\\vct{w}_2^T\\\vdots\\\vct{w}_k^T\end{bmatrix}\quad\text{and}\quad\vct{v}=\begin{bmatrix}{v}_1\\{v}_2\\\vdots\\ {v}_k\end{bmatrix}.
\end{align*}
We can now rewrite our input-output model \eqref{model} in the more succinct form
\begin{align}
\label{model2}
\vct{x}\mapsto f_{\vct{v},\mtx{W}}(\vct{x}):=\vct{v}^T\phi(\mtx{W}\vct{x}).
\end{align}
Here, we have used the convention that when $\phi$ is applied to a vector is corresponds to applying $\phi$ to each entry of that vector. When training a neural network, one typically has access to a data set consisting of $n$ feature/label pairs $(\vct{x}_i,y_i)$ with $\vct{x}_i\in\R^d$ representing the feature and $y_i$ the associated label. We wish to infer the best weights $\vct{v},\mtx{W}$ such that the mapping $f_{\vct{v},\mtx{W}}$ best fits the training data. More specifically, we wish to minimize the misfit between $f_{\vct{v},\mtx{W}}(\vct{x}_i)$ and $y_i$ via an optimization problem of the form
\begin{align}
\underset{\vct{v}, \mtx{W}}{\min}\text{ }\mathcal{L}(\vct{v},\mtx{W})\,,
\end{align}
where
\begin{align}
\label{landscape}
\mathcal{L}(\vct{v},\mtx{W}):=\frac{1}{2n}\sum_{i=1}^n \left(y_i-f_{\vct{v},\mtx{W}}(\vct{x}_i)\right)^2=\frac{1}{2n}\sum_{i=1}^n\left(y_i-\vct{v}^T\phi(\mtx{W}\vct{x}_i)\right)^2.
\end{align}
In this paper we wish to understand the landscape of optimization problems of the form \eqref{landscape}. 
\section{Main results}
In this section we discuss the main results of this paper. The first set of results focus on understanding the global landscape of neural network optimization with one hidden layer with a particular activation function. We also discuss how this landscape characterization enables algorithms to find a global optima in polynomial time. The second set of results focuses on the local convergence of gradient descent but applies to a broad set of activation functions.
\subsection{Global landscape analysis with quadratic activations}\label{subsec:global}
We begin by discussing some global properties of the loss function of training neural networks.

\begin{theorem}\label{globthm}
Assume we have an arbitrary data set of input/label pairs $\vct{x}_i\in\R^d$ and $y_i$ for $i=1,2,\ldots,n$. Consider a neural network of the form
\begin{align*}
\vct{x}\mapsto \vct{v}^T\phi(\mtx{W}\vct{x}),
\end{align*}
 with $\phi(z)=z^2$ a quadratic activation and $\mtx{W}\in\R^{k\times d}$, $\vct{v}\in\R^k$ denoting the weights connecting input to hidden and hidden to output layers. We assume $k\ge 2d$ and set the weights of the output layer $\vct{v}$ so as to have at least $d$ positive entries and at least $d$ negative entries. Then, the training loss as a function of the weights $\mtx{W}$ of the hidden layer 
 \begin{align*}
\mathcal{L}(\mtx{W})=\frac{1}{2n}\sum_{i=1}^n\left(y_i-\vct{v}^T\phi(\mtx{W}\vct{x}_i)\right)^2,
 \end{align*}
 obeys the following two properties.
\begin{itemize}
\item There are no spurious local minima, i.e.~all local minima are global.
\item All saddle points have a direction of strictly negative curvature. That is, at a saddle point $\mtx{W}_s$ there is a direction $\mtx{U}\in\R^{k\times d}$ such that
\begin{align*}
\emph{vect}(\mtx{U})^T\nabla^2\mathcal{L}(\mtx{W}_s)\emph{vect}(\mtx{U})<0.
\end{align*}
\end{itemize} 
Furthermore, 
for almost every data inputs $\{\vct{x}_i\}_{i=1}^n$,
as long as
\begin{align*}
d\le n\le cd^2,
\end{align*}
the global optimum of $\mathcal{L}(\mtx{W})$ is zero. 
Here, $c>0$ is a fixed numerical constant.
\end{theorem}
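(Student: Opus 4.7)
The key reduction is that with $\phi(z)=z^2$, the network output collapses to a quadratic form: $\vct{v}^T\phi(\mtx{W}\vct{x}) = \sum_\ell v_\ell(\vct{w}_\ell^T\vct{x})^2 = \vct{x}^T \mtx{M}\vct{x}$, where $\mtx{M}:=\mtx{W}^T \mtx{D}_{\vct{v}} \mtx{W} \in \operatorname{Sym}(d)$ and $\mtx{D}_{\vct{v}}=\operatorname{diag}(\vct{v})$. Consequently $\mathcal{L}(\mtx{W}) = g(\mtx{M}(\mtx{W}))$, where $g(\mtx{M}) = \frac{1}{2n}\sum_i (y_i-\vct{x}_i^T \mtx{M}\vct{x}_i)^2$ is a \emph{convex} quadratic in $\mtx{M}$. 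The plan is to leverage this lift together with the residual matrix $\mtx{R}:=-\nabla g(\mtx{M})=\frac{1}{n}\sum_i r_i \vct{x}_i \vct{x}_i^T$. Note that $\mtx{W}\mapsto\mtx{M}(\mtx{W})$ is surjective onto $\operatorname{Sym}(d)$: given any target $\mtx{M}^\star$, take its spectral decomposition, split into at most $d$ positive and $d$ negative eigenpairs, and place the rescaled eigenvectors as rows of $\mtx{W}$ indexed respectively by $I^+=\{\ell : v_\ell>0\}$ and $I^-=\{\ell : v_\ell<0\}$ (possible because $|I^\pm|\ge d$), zeroing any remaining rows. In particular $\min_{\mtx{W}}\mathcal{L}=\min_{\mtx{M}}g$, and I may assume WLOG that all $v_\ell\ne 0$ (ineffective neurons contribute flat Hessian directions that do not affect a strict-saddle conclusion).

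Direct differentiation yields $\nabla_\mtx{W}\mathcal{L}=-2\mtx{D}_{\vct{v}}\mtx{W}\mtx{R}$ and
\begin{align*}
\nabla^2\mathcal{L}(\mtx{W})[\mtx{U},\mtx{U}] = \frac{4}{n}\sum_i (\vct{x}_i^T \mtx{W}^T \mtx{D}_{\vct{v}} \mtx{U}\vct{x}_i)^2 - 2\langle \mtx{R},\mtx{U}^T \mtx{D}_{\vct{v}}\mtx{U}\rangle.
\end{align*}
The first-order condition reads $\mtx{W}\mtx{R}=0$, i.e.\ $\operatorname{range}(\mtx{R})\subseteq\ker(\mtx{W})$. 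If $\mtx{R}=0$, then $\mtx{M}(\mtx{W})$ solves the convex lifted problem and $\mtx{W}$ is a global minimum of $\mathcal{L}$. Otherwise $\operatorname{range}(\mtx{R})\neq\{0\}$ forces $\ker(\mtx{W})\neq\{0\}$, so $\rank(\mtx{W})\le d-1$; this strict rank deficiency is the decisive gain from overparameterization. Let $\vct{u}\in\operatorname{range}(\mtx{R})$ be a unit eigenvector with eigenvalue $\lambda\neq 0$ and try the rank-one direction $\mtx{U}=\vct{\mu}\vct{u}^T$ with $\vct{\mu}\in\R^k$. The Hessian becomes
\begin{align*}
\nabla^2\mathcal{L}(\mtx{W})[\mtx{U},\mtx{U}] = \frac{4}{n}\sum_i ((\mtx{W}^T \mtx{D}_{\vct{v}}\vct{\mu})^T \vct{x}_i)^2(\vct{u}^T \vct{x}_i)^2 - 2\lambda\,\vct{\mu}^T \mtx{D}_{\vct{v}}\vct{\mu}.
\end{align*}
Choose $\vct{\mu}\in\mtx{D}_{\vct{v}}^{-1}\ker(\mtx{W}^T)$, a subspace of dimension $\ge k-d+1$, to kill the first term. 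Substituting $\vct{\nu}=\mtx{D}_{\vct{v}}\vct{\mu}$, the task reduces to finding $\vct{\nu}\in\ker(\mtx{W}^T)$ with $\operatorname{sgn}(\vct{\nu}^T \mtx{D}_{\vct{v}}^{-1}\vct{\nu})=\operatorname{sgn}(\lambda)$. Since $\mtx{D}_{\vct{v}}^{-1}$ has signature $(|I^+|,|I^-|,0)$ with $|I^\pm|\ge d$ and $|I^+|+|I^-|=k$, both its maximal non-positive and non-negative subspaces have dimension at most $k-d$; because $\dim\ker(\mtx{W}^T)\ge k-d+1$, $\ker(\mtx{W}^T)$ cannot sit in either cone and therefore contains vectors of either sign, so the desired $\vct{\nu}$ exists and the Hessian along the corresponding $\mtx{U}$ is strictly negative. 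This simultaneously proves that every local minimum is global and that every saddle admits a strictly negative curvature direction.

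For the final assertion, I show that generically over the $\vct{x}_i$ the convex problem $\min_\mtx{M}g(\mtx{M})$ attains value zero, equivalently the linear system $\vct{x}_i^T \mtx{M}\vct{x}_i=y_i$ admits a solution $\mtx{M}\in\operatorname{Sym}(d)$. The map $\mtx{M}\mapsto(\langle\vct{x}_i\vct{x}_i^T,\mtx{M}\rangle)_i$ from $\operatorname{Sym}(d)\cong\R^{d(d+1)/2}$ to $\R^n$ is surjective iff its adjoint $\vct{\alpha}\mapsto\sum_i\alpha_i\vct{x}_i\vct{x}_i^T$ is injective, iff $\{\vct{x}_i\vct{x}_i^T\}_{i=1}^n$ are linearly independent in $\operatorname{Sym}(d)$. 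The Veronese variety $\{\vct{x}\vct{x}^T:\vct{x}\in\R^d\}$ spans $\operatorname{Sym}(d)$ by standard polarization, so the Gram determinant of $(\vct{x}_1\vct{x}_1^T,\ldots,\vct{x}_n\vct{x}_n^T)$ is a nonzero polynomial in the entries of the $\vct{x}_i$ whenever $n\le d(d+1)/2$, and its zero set has Lebesgue measure zero. Choosing $c=1/2$ gives $n\le d^2/2\le d(d+1)/2$; combined with surjectivity of the lift this produces $\mtx{W}$ with $\mathcal{L}(\mtx{W})=0$.

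The most delicate step is the strict saddle analysis, since a naive Sylvester count leaves the boundary case $|I^+|=|I^-|=d$, $\rank(\mtx{W})=d$ equivocal. What unlocks the argument is the prior observation that $\mtx{R}\neq 0$ at a critical point automatically forces $\rank(\mtx{W})\le d-1$, supplying the single extra dimension of $\ker(\mtx{W}^T)$ required to guarantee that both signs of $\mtx{D}_{\vct{v}}^{-1}$ are realized within the constraint subspace.
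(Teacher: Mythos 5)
Your proof is correct, and while it shares the paper's starting point---lifting through $\mtx{M}=\mtx{W}^T\mtx{D}_{\vct{v}}\mtx{W}$ and exploiting convexity of the lifted objective $g(\mtx{M})$---your treatment of the critical-point analysis differs in a genuinely interesting way from the paper's. The paper splits on whether $(\mtx{D}_{\vct{v}}\mtx{W})_{\mathcal{S}_+}$ or $(\mtx{D}_{\vct{v}}\mtx{W})_{\mathcal{S}_-}$ has full column rank $d$: in that case the gradient condition forces $\mtx{R}=\vct{0}$ via a left inverse, and in the remaining case (both blocks rank-deficient) it plugs in two test directions, one supported on the positive indices and one on the negative, to sandwich $\mtx{R}\succeq \vct{0}$ and $\mtx{R}\preceq \vct{0}$. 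Your argument dispenses with this dichotomy entirely: you observe that the first-order condition $\mtx{W}\mtx{R}=\vct{0}$ with $\mtx{R}\neq\vct{0}$ \emph{itself} forces $\operatorname{rank}(\mtx{W})\le d-1$, so $\dim\ker(\mtx{W}^T)\ge k-d+1$ strictly exceeds both $|I^+|$ and $|I^-|$ (each bounded by $k-d$ thanks to the hypothesis that each set has size $\ge d$), and then a single Sylvester signature count inside $\ker(\mtx{W}^T)$ produces a vector of the sign matching any nonzero eigenvalue of $\mtx{R}$. This buys two things over the paper's route: (i) a single uniform case rather than Case~I/Case~II, because the full-rank case collapses to $\mtx{R}=\vct{0}$ automatically; and (ii) only one sign of the test direction is needed, matched to $\operatorname{sgn}(\lambda)$, rather than the paper's two-sided sandwich. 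Your final assertion via linear independence of $\{\vct{x}_i\vct{x}_i^T\}$ on the Veronese variety is the same generic-rank fact the paper proves as Lemma~\ref{XkrX}, just stated as surjectivity of the forward linear map rather than injectivity of its adjoint. Two small cosmetic points: the display defining $\mtx{R}:=-\nabla g(\mtx{M})$ carries a sign typo relative to the paper's $r_i:=\vct{v}^T\phi(\mtx{W}\vct{x}_i)-y_i$ convention, though your subsequent formulas for $\nabla_{\mtx{W}}\mathcal{L}$ and the Hessian are internally consistent with the opposite residual convention $r_i=y_i-\vct{x}_i^T\mtx{M}\vct{x}_i$, so nothing breaks; and your Hessian coefficient $4/n$ on the quadratic term is in fact the correct specialization of the paper's general formula~\eqref{partHess} (the paper's displayed~\eqref{partHess3} has a factor-of-two slip there that does not affect the sign analysis either).
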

The above result states that given an arbitrary data set, the optimization landscape of fitting neural networks have favorable properties that facilitate finding globally optimal models. In particular, by setting the weights of the last layer to have diverse signs all local minima are global minima and all saddles have a direction of negative curvature. This in turn implies that gradient descent on the input-to-hidden weights, when initialized at random, converges to a global optima. All of this holds as long as the neural network is sufficiently wide in the sense that the number of hidden units exceed the dimension of the inputs by a factor of two ($k\ge 2d$). 

An interesting and perhaps surprising aspect of  the first part of this theorem is its generality: it applies to any arbitrary data set of input/label pairs of any size! However, this result only guarantees convergence to a global optima but does not explain how good this global model is at fitting the data. Stated differently, it does not provide a bound on the optimal value. Such a bound may not be possible with adversarial data. However, at least intuitively, one expects the optimal value to be small when the input data $\vct{x}_i$ are sufficiently diverse and the neural network is sufficiently over-parameterized. The second part of Theorem \ref{globthm} confirms that this is indeed true. In particular assuming the input data $\vct{x}_i$ are distributed i.i.d.~$\mathcal{N}(0,\mtx{I}_d)$, and the total number of weights ($kd$) exceeds the number of data samples ($n$), the globally optimal model perfectly fits the labels (has optimal value of $0$). The interesting part of this result is that it still holds with arbitrary labels. Thus, this theorem shows that with random inputs, and a sufficiently diverse choice of the hidden-output weights, using gradient descent to iteratively update the input-hidden layer weights converges to a model that provably fits arbitrary labels! This result is also inline with recent numerical evidence in \cite{zhang2016understanding} that demonstrates that stochastic gradient descent learns deep, over-parametrized models with zero training error even with an arbitrary choice of labels.

While the above theorem shows that the saddles are strict and there is a direction of negative curvature at every saddle point, the margin of negativity is not quantified. Thus, the above theorem does not provide explicit convergence guarantees. In the theorem below we provide such a guarantee albeit for a more restrictive data model. 
\begin{theorem}\label{gthmrand}
Assume we have a data set of input/label pairs $\{(\vct{x}_i,y_i)\}_{i=1}^n$ with 
the labels $y_i\in\R$ generated according to a planted two layer neural network model of the form
\begin{align*}
y_i=\langle\vct{v}^*,\phi(\mtx{W}^*\vct{x}_i)\rangle,
\end{align*}
with $\phi(z)=z^2$ a quadratic activation and $\mtx{W}^*\in\R^{k\times d}$, $\vct{v}^*\in\R^k$ the weights of the input-hidden and hidden-output layer with $\sigma_{\min}(\mtx{W}^*)>0$.  Furthermore, assume $k\ge d$ and that all the non-zero entries of $\vct{v}^*$ have the same sign (positive or negative). We set the hidden-output weights to a vector $\vct{v}\in\R^k$ with non-zero entries also having the same sign with at least $d$ entries strictly nonzero (positive if the nonzero entries of $\vct{v}^*$ are positive and negative otherwise). Then, as long as
\begin{align*}
d\le n\le cd^2,
\end{align*}
with $c$ a fixed numerical constant, then the training loss as a function of the input-output weights $\mtx{W}$ of the hidden layer 
 \begin{align*}
\mathcal{L}(\mtx{W})=\frac{1}{2n}\sum_{i=1}^n\left(y_i-\vct{v}^T\phi(\mtx{W}\vct{x}_i)\right)^2,
 \end{align*}
 obeys the following two properties, for almost every input data $\{\vct{x}_i\}_{i=1}^n$.
 \begin{itemize}
\item There are no spurious local minima, i.e.~all local minima are global optima. 
\item All saddle points have a direction of negative curvature. That is, at a saddle point $\mtx{W}_s$ there is a direction $\mtx{U}\in\R^{k\times d}$ such that
\begin{align*}
\emph{vect}(\mtx{U})^T\nabla^2\mathcal{L}(\mtx{W}_s)\emph{vect}(\mtx{U})<0.
\end{align*}
\item 
The global optima has loss value equal to zero ($\mathcal{L}(\mtx{W})=0$).
\end{itemize}

Furthermore, suppose that the inputs $\vct{x}_i\in\R^d$ are distributed i.i.d.~$\mathcal{N}(\vct{0},\mtx{I}_d)$. Assume $k>d$ and $cd\log d\le n \le Cd^2$ for fixed constants $c$ and $C$. Also, assume we set all entries of $\vct{v}$ to $\nu$ (i.e.~$\vct{v}=\nu\vct{1}$) with $\nu$ having the same sign as the nonzero entries of $\vct{v}^*$.
\begin{itemize}
\item Then all points $\mtx{W}$ satisfying the approximate local minima condition
\begin{align}
\label{approxmin}
\fronorm{\nabla \mathcal{L}(\mtx{W})}\le \epsilon_g\quad\text{and}\quad\nabla^2\mathcal{L}(\mtx{W})\succeq -\epsilon_{H}\mtx{I},
\end{align}
obey
\begin{align*}
\mathcal{L}(\mtx{W})\le\frac{\epsilon_g}{\sqrt{\nu}}\max\left(\sqrt{1+14\nucnorm{{\mtx{W}^*}^T\mtx{D}_{\vct{v}^*}\mtx{W}^*}},4\frac{\epsilon_g}{\sqrt{\nu}}\right)+\frac{\epsilon_{H}}{2\nu}\nucnorm{{\mtx{W}^*}^T\mtx{D}_{\vct{v}^*}\mtx{W}^*},
\end{align*}
with probability at least $1-12de^{-\gamma d}-8/d$. Here $\gamma$ is a fixed numerical constant.
\end{itemize} 
\end{theorem}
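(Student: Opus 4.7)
The plan is to recast $\mathcal{L}(\mtx{W})$ as a PSD matrix-sensing objective and turn the approximate second-order condition into a loss bound. Under $\phi(z)=z^2$ and $\vct{v}=\nu\vct{1}$, the fit rewrites as $\vct{v}^T\phi(\mtx{W}\vct{x}_i)=\nu\vct{x}_i^T\mtx{W}^T\mtx{W}\vct{x}_i$ and the label as $y_i=\vct{x}_i^T\mtx{M}^*\vct{x}_i$ with $\mtx{M}^*:={\mtx{W}^*}^T\mtx{D}_{\vct{v}^*}\mtx{W}^*$. The sign condition on $\vct{v}^*$ forces $\mtx{M}^*/\nu\succeq\mtx{0}$ of rank at most $d$, so using $k\ge d$ one may factor $\mtx{M}^*/\nu={\mtx{U}^*}^T\mtx{U}^*$ with $\mtx{U}^*\in\R^{k\times d}$. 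Introducing the quadratic sensing operator $\mathcal{A}(\mtx{Z})_i:=\vct{x}_i^T\mtx{Z}\vct{x}_i$ and the residual $\mtx{N}:=\nu\mtx{W}^T\mtx{W}-\mtx{M}^*$ recasts the loss as $\mathcal{L}(\mtx{W})=\tfrac{1}{2n}\twonorm{\mathcal{A}(\mtx{N})}^2$; I denote $\alpha:=\nucnorm{\mtx{M}^*}$.

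Differentiation gives $\nabla\mathcal{L}(\mtx{W})=\tfrac{2\nu}{n}\mtx{W}\mathcal{A}^*\mathcal{A}(\mtx{N})$ and, for any $\mtx{U}\in\R^{k\times d}$,
\[
\emph{vect}(\mtx{U})^T\nabla^2\mathcal{L}(\mtx{W})\emph{vect}(\mtx{U})=\tfrac{\nu^2}{n}\twonorm{\mathcal{A}(\mtx{W}^T\mtx{U}+\mtx{U}^T\mtx{W})}^2+\tfrac{2\nu}{n}\langle\mathcal{A}(\mtx{N}),\mathcal{A}(\mtx{U}^T\mtx{U})\rangle.
\]
The key step is to test in the direction $\mtx{U}=\mtx{U}^*$: since $\mtx{U}^{*T}\mtx{U}^*=\mtx{M}^*/\nu$, the second summand equals $\tfrac{2}{n}\langle\mathcal{A}(\mtx{N}),\mathcal{A}(\mtx{M}^*)\rangle$, and substituting $\mtx{M}^*=\nu\mtx{W}^T\mtx{W}-\mtx{N}$ together with the identity $\langle\nabla\mathcal{L}(\mtx{W}),\mtx{W}\rangle=\tfrac{2}{n}\langle\mathcal{A}(\mtx{N}),\mathcal{A}(\nu\mtx{W}^T\mtx{W})\rangle$ collapses it to $\langle\nabla\mathcal{L}(\mtx{W}),\mtx{W}\rangle-4\mathcal{L}(\mtx{W})$. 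Plugging into the approximate-local-minimum condition $\emph{vect}(\mtx{U}^*)^T\nabla^2\mathcal{L}(\mtx{W})\emph{vect}(\mtx{U}^*)\ge-\epsilon_H\fronorm{\mtx{U}^*}^2=-\epsilon_H\alpha/\nu$ yields the controlling inequality
\[
4\mathcal{L}(\mtx{W})\le \tfrac{\nu^2}{n}\twonorm{\mathcal{A}(\mtx{W}^T\mtx{U}^*+\mtx{U}^{*T}\mtx{W})}^2+\langle\nabla\mathcal{L}(\mtx{W}),\mtx{W}\rangle+\tfrac{\epsilon_H\alpha}{\nu}.
\]

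To close, I bound the two positive right-hand terms via Gaussian concentration and bootstrap. Using the algebraic identity $\mtx{W}^T\mtx{U}^*+\mtx{U}^{*T}\mtx{W}=2\mtx{M}^*/\nu+\mtx{N}/\nu-\mtx{\Delta}^T\mtx{\Delta}$ (with $\mtx{\Delta}:=\mtx{W}-\mtx{U}^*$), expanding the squared norm, and invoking a restricted-isometry estimate $\tfrac{1}{n}\twonorm{\mathcal{A}(\mtx{Z})}^2\lesssim\fronorm{\mtx{Z}}^2+(\tr(\mtx{Z}))^2$ uniform over rank-$\mathcal{O}(d)$ matrices (a Hanson--Wright argument based on $\E[(\vct{x}^T\mtx{Z}\vct{x})^2]=2\fronorm{\mtx{Z}}^2+(\tr(\mtx{Z}))^2$, valid with probability at least $1-12de^{-\gamma d}-8/d$ when $n\ge cd\log d$) extracts $\tfrac{1}{n}\twonorm{\mathcal{A}(\mtx{N})}^2=2\mathcal{L}(\mtx{W})$ on the right and leaves residual terms linear in $\alpha$ and $\mathcal{L}(\mtx{W})$. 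Cauchy--Schwarz gives $\langle\nabla\mathcal{L}(\mtx{W}),\mtx{W}\rangle\le\epsilon_g\fronorm{\mtx{W}}$, and $\fronorm{\mtx{W}}$ is controlled by the trace identity $\nu\fronorm{\mtx{W}}^2=\tr(\mtx{N})+\alpha$ combined with $|\tr(\mtx{N})|\le\sqrt{2\mathcal{L}(\mtx{W})}+\mathrm{err}$ coming from Cauchy--Schwarz on $\tfrac{1}{n}\sum_i\vct{x}_i^T\mtx{N}\vct{x}_i$ and Gaussian concentration. Assembling these pieces produces a self-consistent inequality $\mathcal{L}(\mtx{W})\le\tfrac{\epsilon_g}{\sqrt{\nu}}\sqrt{1+14\alpha+\kappa\mathcal{L}(\mtx{W})}+\tfrac{\epsilon_H\alpha}{2\nu}$ for an explicit constant $\kappa$, and solving this quadratic-type inequality in $\sqrt{\mathcal{L}(\mtx{W})}$ delivers the two branches of the stated $\max$-bound (the $4\epsilon_g/\sqrt{\nu}$ branch when $\mathcal{L}(\mtx{W})$ dominates inside the square root, the $\sqrt{1+14\alpha}$ branch otherwise). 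The main obstacle is the probabilistic step---establishing the uniform restricted-isometry estimate for the Gaussian quartic sensing operator on rank-$\mathcal{O}(d)$ matrices at the claimed probability with only $n\ge cd\log d$ samples---together with careful bookkeeping of how the cross terms in the expansion of $\twonorm{\mathcal{A}(2\mtx{M}^*/\nu+\mtx{N}/\nu-\mtx{\Delta}^T\mtx{\Delta})}^2$ arrange so that the potentially large $\fronorm{\mtx{M}^*}^2$ contribution cancels, leaving only terms linear in $\alpha$ that yield the numerical factor $14$.
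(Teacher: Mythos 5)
Your proof addresses only the last bullet of the theorem (the quantitative bound under approximate second-order optimality); the first three claims (no spurious local minima, strict saddle, zero training error for almost every data) are not touched. More importantly, the approach you propose for the quantitative part has a fatal structural flaw in the choice of test direction for the Hessian.

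You test the Hessian condition at $\mtx{U}=\mtx{U}^*$ with ${\mtx{U}^*}^T\mtx{U}^*=\mtx{M}^*/\nu$. With $\mtx{D}_{\vct{v}}=\nu\mtx{I}$, the Hessian quadratic form (from the paper's \eqref{partHess3}) is
\begin{align*}
\text{vect}(\mtx{U})^T\nabla^2\mathcal{L}(\mtx{W})\,\text{vect}(\mtx{U}) = \frac{2\nu}{n}\sum_i r_i\,\vct{x}_i^T\mtx{U}^T\mtx{U}\vct{x}_i + \frac{2\nu^2}{n}\sum_i\left(\vct{x}_i^T\mtx{W}^T\mtx{U}\vct{x}_i\right)^2,
\end{align*}
and the second (Gauss--Newton) term, which is what you call $\frac{\nu^2}{n}\|\mathcal{A}(\mtx{W}^T\mtx{U}^*+\mtx{U}^{*T}\mtx{W})\|_{\ell_2}^2$ (the correct prefactor is $\frac{\nu^2}{2n}$), sits on the \emph{same side} of your controlling inequality as $\epsilon_g,\epsilon_H$ and is nonnegative, so $4\mathcal{L}(\mtx{W})\le (\text{GN term}) + \langle\nabla\mathcal{L},\mtx{W}\rangle + \epsilon_H\alpha/\nu$ is only useful if the Gauss--Newton term itself is $O(\epsilon_g+\epsilon_H)$. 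It is not: even at the global optimum $\mtx{W}=\mtx{U}^*$, where $\mtx{N}=\mtx{0}$ and $\mtx{\Delta}=\mtx{0}$, the argument inside $\mathcal{A}$ equals $2\mtx{M}^*/\nu$, so the Gauss--Newton term concentrates near $\frac{2}{n}\|\mathcal{A}(\mtx{M}^*)\|_{\ell_2}^2\approx 4\fronorm{\mtx{M}^*}^2+2\big(\text{tr}(\mtx{M}^*)\big)^2$, a quantity of order $\alpha^2$ that does not vanish. The ``cancellation of the $\fronorm{\mtx{M}^*}^2$ contribution'' you invoke does not occur; the controlling inequality reduces to $4\mathcal{L}(\mtx{W})\lesssim \alpha^2 + \epsilon_g\fronorm{\mtx{W}}+\epsilon_H\alpha/\nu$, which is vacuous.

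The paper sidesteps this entirely by testing the Hessian at a direction $\mtx{U}=\vct{a}\vct{b}^T$ where $\vct{a}$ lies in the null space of $\mtx{W}^T\mtx{D}_{\vct{v}}$ (such $\vct{a}$ exists since $\vct{v}=\nu\vct{1}$ and $k>d$). This makes $\mtx{W}^T\mtx{D}_{\vct{v}}\mtx{U}=\mtx{0}$, annihilating the Gauss--Newton term exactly and yielding the clean conclusion $\frac{1}{n}\sum_i r_i\vct{x}_i\vct{x}_i^T\succeq -\frac{\epsilon_H}{\nu}\mtx{I}$. Combined with Cauchy--Schwarz on $\langle\mtx{W},\nabla\mathcal{L}(\mtx{W})\rangle$ and the decomposition $\mathcal{L}(\mtx{W})=\frac{1}{2n}\langle\mtx{W}^T\mtx{D}_{\vct{v}}\mtx{W}-\mtx{M}^*,\sum_i r_i\vct{x}_i\vct{x}_i^T\rangle$, this gives the required bound without ever confronting an uncontrolled Gauss--Newton contribution. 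Your plan is missing this null-space trick, which is the central idea of the argument.
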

The above result considers the setting where the input-output data set is generated according to a neural network model with Gaussian random input vectors. We show that if the data is generated according this model, then as long as the neural network is over-parameterized ($n\lesssim kd$) then all points obeying condition \eqref{approxmin} have small objective value. 

The reason this result is useful is that points obeying the approximate local minimum condition \eqref{approxmin} can be found in time depending polynomially on $\frac{1}{\epsilon_g}$ and $\frac{1}{\epsilon_{H}}$. Algorithms that provably work in this setting include cubic regularization \cite{nesterov2006cubic}, trust region methods \cite{curtis2014trust}, approximate cubic regularization schemes \cite{agarwal2016finding, carmon2016gradient, carmon2016accelerated}, randomly initialized and perturbed (stochastic) gradients \cite{jin2017escape, ge2015escaping, lee2016gradient, levy2016power, jin2017escape}. Therefore, the above theorem demonstrates that a weight matrix with small training error(i.e.~$\mathcal{L}(\mtx{W})\le\epsilon$) can be found in a computationally tractable manner (specifically with poly$(\frac{1}{\epsilon})$ computational effort).
\subsection{Local convergence analysis with general activations}
We now extend our analysis to general activations functions. However, our results in this section require a sufficiently close initialization to the global optimum. Starting from such a sufficiently close initialization we apply gradient descent updates based on the loss function \eqref{landscape}. 
Our results apply to any differentiable activation function with bounded first and second order derivatives. We believe our result will eventually extend also to non-differentiable activations by smoothing techniques but we do not pursue this direction in this paper. 

Before we state our theorem, we make a technical assumption regarding the activation $\phi$. 
\begin{assumption}\label{ass-phi}
For $\sigma\in \R_{\ge 0 }$, define $\mu(\sigma) = E[\phi'(\sigma g)]$ and $\gamma(\sigma) = E[\phi''(\sigma g)]$ where $g\sim\mathcal{N}(0,1)$. We consider activations $\phi$ such that $\mu(\sigma)$ is zero/nonzero everywhere. Likewise, we assume that one of the following holds true for $\gamma(\sigma)$:
\begin{itemize}
\item [$(a)$] Function $\gamma(\sigma)$ is nonzero everywhere.
\item [$(b)$] Function $\gamma(\sigma)$ is identical to zero. 
\end{itemize}
\end{assumption}
We note that $\mu(\sigma)$ and $\gamma(\sigma)$ can be thought of as the average slope and curvature of the activation function. Thus the assumption on $\mu(\sigma)$ can be interpreted as the activation should have a nonzero average slope for all $\sigma>0$ (i.e.~the mapping is somewhat correlated with the identity mapping $\phi(x)=x$, under any gaussian measure) or has average slope equal to zero for all $\sigma>0$ (i.e.~the mapping has no correlation with the identity mapping, under any gaussian measure).

\begin{example}\label{myex}
We provide several examples of an activation function that satisfy Assumption~\ref{ass-phi}.
\begin{enumerate}
\item (Softplus) $\phi_b(z) = \frac{1}{b} \log(1+e^{bz})$, for a fixed $b >0$.
\item (Sigmoid) $\phi_b(z) = \frac{1}{1+e^{-bz}}$, for a fixed $b>0$.
\item (Erf) $\phi(z) =\frac{2}{\sqrt{\pi}} \int_0^z e^{-t^2/2} \de t$.
\item (Hyperbolic Tangent) $\phi(z) = \tanh(z)$. 
\end{enumerate}
Note that all of these activations obey $\mu(\sigma) > 0$ as they are all strictly increasing. Furthermore, the Softplus activation satisfies Assumption~\ref{ass-phi} (a) because it is strictly convex, while the other three activations satisfy  Assumption~\ref{ass-phi} (b), because $\phi''$ is an odd function in these cases. These activations along with the popular ReLU activation ($\phi(z)=\max(0,z)$) are depicted in Figure \ref{fig:softplus-relu}. 
\end{example}
\begin{figure}
\centering
\begin{tikzpicture}[scale=1] 
 \begin{groupplot}[scale=1,group style={group size=2 by 1,horizontal sep=1cm,xlabels at=edge bottom, ylabels at=edge left,xticklabels at=edge bottom},xlabel=$z$,
        ylabel=$\phi(z)$, legend style={font=\tiny,at={(0.25,0.95)},anchor=north,legend cell align=left}]
    \nextgroupplot
     
     \addplot [blue,line width=2pt] table[x index=0,y index=1]{./Activations};
     \addlegendentry{ReLU}
     
     \addplot [magenta,line width=2pt] table[x index=0,y index=2]{./Activations};
     \addlegendentry{Softplus}
    
    \nextgroupplot 
     \addplot [red,line width=2pt] table[x index=0,y index=3]{./Activations};
     \addlegendentry{Sigmoid}
     
     \addplot [black,line width=2pt] table[x index=0,y index=4]{./Activations};
     \addlegendentry{Erf}
     
     \addplot [teal,line width=2pt] table[x index=0,y index=5]{./Activations};
     \addlegendentry{Hyperbolic tangent}   
\end{groupplot}
\end{tikzpicture}
\caption{Different activations from Example \ref{myex} with $b=4$ along with the ReLU activation.}
\label{fig:softplus-relu}
\end{figure}
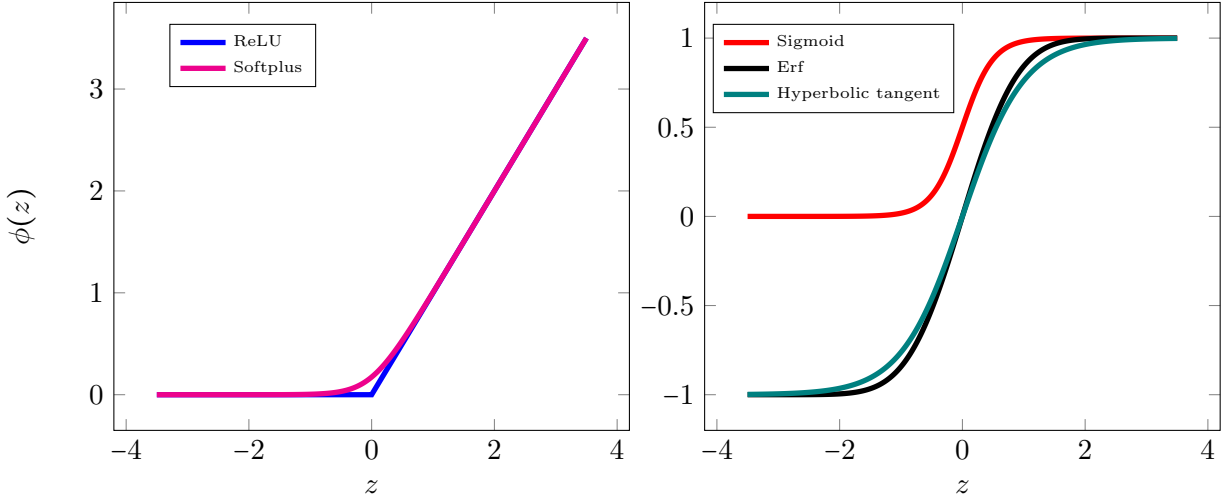
We now have all the elements to state our local convergence result.
\begin{theorem}\label{localthm}
Assume we have a data set of input/label pairs $\{(\vct{x}_i,y_i)\}_{i=1}^n$ with the inputs $\vct{x}_i\in\R^d$ distributed i.i.d.~$\mathcal{N}(\vct{0},\mtx{I}_d)$ and the labels $y_i\in\R$ generated according to a planted two layer neural network model of the form
\begin{align*}
y_i=\langle\vct{v}^*,\phi(\mtx{W}^*\vct{x}_i)\rangle,
\end{align*} 
Here, $\phi:\R\rightarrow \R$ is any activation function with bounded first and second derivatives, satisfying Assumption~\ref{ass-phi}($a$). Further, $\mtx{W}^*\in\R^{k\times d}$, $\vct{v}^*\in\R^k$ are the weights of the input-hidden and hidden-output layer. We also assume that the planted weight vector/matrix $\vct{v}^*/\mtx{W}^*$ obey
\begin{align}
\label{boundassthm}
0<v_{\min} \le |v^*_\ell| \le v_{\max}\quad0<w_{\min} \le \twonorm{\wb^*_\ell} \le w_{\max}\quad\text{for }\ell=1,2,\ldots,k\,,
\end{align}
for some fixed constants $v_{\min}, v_{\max}, w_{\min}$, and $w_{\max}$. Further, assume that $k\ge d$ and
\begin{align}
d \le n \le  c_0 \frac{\sigma_{\min}^4(\W^*)}{\sigma_{\max}^8(\W^*)} d^2 \label{samplesize}
\end{align}  
 for a sufficiently small constant $c_0>0$.

To estimate the weights $\vct{v}^*$ and $\mtx{W}^*$, we start from initial weights $\vct{v}_0$ and $\vct{W}_0$
\begin{align}
\fronorm{\W_0-\W^*} &\le C_0\frac{\sigma_{\min}^3(\W^*)}{\sigma_{\max}(\W^*)}\cdot \frac{d^{2.5}}{n^{1.5} k}  \,, \label{eq:initW0}\\ 
\opnorm{\vb_0-\vb^*}_\infty &\le C_0\frac{\sigma_{\min}^3(\W^*)}{\sigma_{\max}(\W^*) }\cdot \frac{d^{2.5}}{n^{1.5} k^{1.5}} \,, \label{eq:initv0}
\end{align}
and apply gradient descent updates of the form
\begin{align}
\label{gradd}
\vct{v}_{\tau+1} &= \vct{v}_{\tau}-\s\nabla_{\vct{v}} \mathcal{L}(\vct{v}_\tau,\mtx{W}_\tau),\nonumber\\
\vct{W}_{\tau+1} &=\vct{W}_{\tau}-\s\nabla_{\vct{W}} \mathcal{L}(\vct{v}_\tau,\mtx{W}_\tau),
\end{align}
with the learning rate obeying $\s\le 1/\beta$. Then there is an event of probability at least $1  - n^{-1} - 2ne^{-b \sqrt{d}}$, such that on this event starting from any initial point obeying \eqref{eq:initW0}-\eqref{eq:initv0} the gradient descent updates \eqref{gradd} satisfy
\begin{align*}
\cL(\vb_\tau,\W_\tau) \le \left(1- c \s \frac{d}{n} \right)^\tau \cL(\vb_0,\W_0)\,.
\end{align*} 
Here, $\beta$ is given by
\begin{align}
\beta: = C \left(\sigma_{\max}^2(\W^*) +  1\right) k\,,
\end{align}
and $b, C_0, c_0, C, c>0$ are fixed numerical constants.
\end{theorem}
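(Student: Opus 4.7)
My overall strategy is the Polyak--Lojasiewicz template for nonlinear least squares. Writing $\theta:=(\vb,\vec(\W))$ and the residual $r_i(\theta):=f_{\vb,\W}(\vct{x}_i)-y_i$, we have $\cL(\theta)=\tfrac{1}{2n}\twonorm{r(\theta)}^2$ and $\nabla\cL(\theta)=\tfrac{1}{n}\mtx{J}(\theta)^T r(\theta)$ with $\mtx{J}(\theta)\in\R^{n\times k(d+1)}$ the Jacobian of the residual map. Since $r(\theta^*)=0$,
\begin{align*}
\twonorm{\nabla\cL(\theta)}^2\ge \frac{1}{n^2}\sigma_{\min}^2(\mtx{J}(\theta))\,\twonorm{r(\theta)}^2=\frac{2}{n}\sigma_{\min}^2(\mtx{J}(\theta))\cL(\theta),
\end{align*}
which combined with the descent inequality $\cL(\theta_{\tau+1})\le \cL(\theta_\tau)-\tfrac{\s}{2}\twonorm{\nabla\cL(\theta_\tau)}^2$ (valid whenever $\s\le 1/\beta$) yields $\cL(\theta_{\tau+1})\le(1-\s\sigma_{\min}^2(\mtx{J}(\theta_\tau))/n)\cL(\theta_\tau)$. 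The stated rate $1-c\s d/n$ therefore reduces to proving on a small Euclidean ball $B$ around $\theta^*$: (i) $\opnorm{\nabla^2\cL(\theta)}\le\beta$; (ii) $\sigma_{\min}^2(\mtx{J}(\theta))\ge cd$; and (iii) the iterates never leave $B$.

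For (i), I would expand $\nabla^2\cL$ explicitly and control its operator norm using that $\phi',\phi''$ are bounded together with the standard Gaussian estimates $\max_i\twonorm{\vct{x}_i}\lesssim\sqrt{d}$ and $\opnorm{\tfrac{1}{n}\sum_i \vct{x}_i\vct{x}_i^T}\le O(1)$, on the event of probability at least $1-n^{-1}-2ne^{-b\sqrt d}$ appearing in the statement. The a priori control $\opnorm{\W}\le 2\sigma_{\max}(\W^*)$ on $B$ then produces $\beta\lesssim k(\sigma_{\max}^2(\W^*)+1)$. This step is mostly bookkeeping.

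The main obstacle is (ii). The rows of $\mtx{J}(\theta^*)$ are
\begin{align*}
\mtx{J}_i(\theta^*)=\bigl[\,\phi(\W^*\vct{x}_i)^T,\; (\vb^*\odot\phi'(\W^*\vct{x}_i))^T\otimes \vct{x}_i^T\,\bigr],
\end{align*}
so $\mtx{J}(\theta^*)\mtx{J}(\theta^*)^T$ is a Gaussian-dependent Gram matrix. I would first treat its expectation via the Hermite expansion of $\phi$ and $\phi'$ under the Gaussian law of $\vct{x}_i$; Assumption~\ref{ass-phi}(a), which forces $\mu(\twonorm{\wb_\ell^*})$ and $\gamma(\twonorm{\wb_\ell^*})$ to be bounded away from zero on the range allowed by~\eqref{boundassthm}, guarantees a non-degenerate leading Hermite component and yields $\E[\mtx{J}(\theta^*)\mtx{J}(\theta^*)^T]\succeq c d\,\mtx{I}_n$ after accounting separately for the $\vb$- and $\W$-blocks and their cross-terms. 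A matrix concentration bound (Hanson--Wright or matrix Bernstein) then transfers this lower bound to the realized Gram; the restriction $n\le c_0\sigma_{\min}^4(\W^*)/\sigma_{\max}^8(\W^*)\,d^2$ in~\eqref{samplesize} is precisely the regime in which the deviation is absorbed into $cd$, and the powers of $\sigma_{\min}(\W^*)/\sigma_{\max}(\W^*)$ record the dependence of the Hermite coefficients on the row norms and conditioning of $\W^*$. A Lipschitz/Weyl inequality $\sigma_{\min}(\mtx{J}(\theta))\ge \sigma_{\min}(\mtx{J}(\theta^*))-L\twonorm{\theta-\theta^*}$, with $L$ a polynomial in $\sqrt{d},\sqrt{k},\sigma_{\max}(\W^*)$, then propagates (ii) from $\theta^*$ to $B$; the radii~\eqref{eq:initW0}--\eqref{eq:initv0} are calibrated precisely so that $L\twonorm{\theta-\theta^*}\le \tfrac12\sigma_{\min}(\mtx{J}(\theta^*))$ on $B$.

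For (iii), the converse Lipschitz bound $\twonorm{r(\theta)}^2\ge \tfrac12\sigma_{\min}^2(\mtx{J}(\theta^*))\twonorm{\theta-\theta^*}^2$ (obtained from the fundamental theorem of calculus for $r$ on $B$) turns the monotone decrease of $\cL$ into a non-increase of $\twonorm{\theta_\tau-\theta^*}$, keeping every iterate inside $B$. Combining (i)--(iii) with the PL recursion gives $\cL(\theta_\tau)\le(1-c\s d/n)^\tau\cL(\theta_0)$ and closes the argument. I expect the heart of the work to be (ii): extracting the sharp $\Omega(d)$ lower bound on $\sigma_{\min}^2(\mtx{J}(\theta^*))$ with the correct $\sigma_{\min}(\W^*)/\sigma_{\max}(\W^*)$ dependence is where Assumption~\ref{ass-phi}(a) and the upper bound $n\lesssim d^2$ are simultaneously used, and where the peculiar $d^{2.5}/n^{1.5}k^{(\cdot)}$ scaling of the initial-ball radii originates.
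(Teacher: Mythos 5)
Your high-level framework — PL inequality, lower bound on $\sigma_{\min}$ of the Jacobian, then confinement of the iterates to a small ball — is exactly the paper's. The genuinely hard step is the one you flag as the ``main obstacle,'' namely (ii), and this is where your proposed route has a real gap. The columns of $\J = \Dv\phi'(\W\X)\ast\X$ are \emph{not} centered: $\E[\J_i]=\vec(\Dv\Gammab\W)$ has $\ell_2$-norm of order $\sqrt{kd}$. Consequently the sub-exponential norm of a single uncentered column is dominated by its mean, $\gtrsim \sqrt{kd}$, and any off-the-shelf concentration for independent-column matrices (Adamczak-type, matrix Bernstein, or direct operator-norm control of $\J\J^T-\E[\J\J^T]$) produces a fluctuation term of order $(\text{column }\psi_1\text{-norm})\cdot\sqrt{n}\gtrsim\sqrt{kdn}$, which swamps the $\Theta(kd)$ expectation unless $n\lesssim d$. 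That defeats the theorem, whose whole point is the over-parameterized window $d\le n\lesssim d^2$. Moreover, Hanson--Wright / matrix Bernstein cannot be applied to $\vct{u}^T\J\J^T\vct{u}$ directly since it is not a quadratic form in the Gaussians $(\x_1,\dots,\x_n)$: the $\phi'$ nonlinearity enters, so there is no chaos-order-two structure to exploit. What the paper actually does to get past this is the nontrivial \emph{whitening + row-dropping} device: it left-multiplies $\tJ$ by a block-diagonal copy of the left inverse of $\Dv\Gammab\W$ and then drops the $d$ ``diagonal index'' rows that carry all of the nonzero mean. This produces a matrix with \emph{centered} columns whose $\psi_1$-norm is $O(L^2\sigma_{\max}^2(\W)+1)$ (a constant, no longer scaling with $k,d$), so Adamczak's theorem (Proposition~\ref{pro:mineig}) yields the desired $\sigma_{\min}(\tJ)\gtrsim\sigma_{\min}(\Dv\Gammab\W)\,d$ precisely when $\sqrt{n}\lesssim d$, i.e.\ $n\lesssim d^2$. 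Your Hermite-expansion computation of $\E[\J\J^T]$ is fine as far as it goes, but it does not substitute for this centering trick, and without it the concentration step breaks.

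A second, smaller issue: step (iii) as written is not quite correct. The converse Lipschitz bound $\twonorm{\vct{r}(\theta)}^2\gtrsim\sigma_{\min}^2(\J)\twonorm{\theta-\theta^*}^2$, together with $\cL$ decreasing, only gives \emph{boundedness} of $\twonorm{\theta_\tau-\theta^*}$ (up to the condition number of the bi-Lipschitz relation), not monotone non-increase, and the bound itself needs (ii) to hold along the segment — a circular dependence unless set up as an induction. The paper side-steps this cleanly by bounding $\fronorm{\W_\tau-\W_0}\le\sum_t\s\fronorm{\nabla_\W\cL(\vb_t,\W_t)}\le\s\sqrt{\mU}\sum_t\sqrt{\cL_t}$, using the upper bound $\fronorm{\nabla_\W\cL}^2\le\mU\cL$ (Lemma~\ref{lem:aux1}) and the geometric decay of $\cL_t$, so the total travel is controlled by a geometric series and the initial-radius conditions~\eqref{eq:initW0}--\eqref{eq:initv0} are chosen to make that series fit inside the ball. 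If you want to follow your step (iii), you would still need something like this quantitative induction.
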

\begin{remark} We note that the above theorem is still valid if Assumption~\ref{ass-phi} (b) holds in lieu of Assumption~\ref{ass-phi} (a). The only change is that the term $\sigma_{\min}(\W)$ should be now replaced by one in Equations~\eqref{samplesize}, \eqref{eq:initW0} and \eqref{eq:initv0}.
\end{remark}
The above result considers the setting where the input-output data set is generated according to a neural network model with a general activation and Gaussian random input vectors. We show that if the data is generated according this model, then as long as the neural network is over-parameterized ($n\lesssim kd$) then gradient descent converges to the planted model when initialized close to this planted model. This implies that a training error of zero can be achieved locally, using gradient descent. We would like to note that assumptions \eqref{boundassthm} on the weights of the planted neural networks are only made to avoid unnecessarily complicated expressions in the statement of the theorem. As it will become clear in the proofs (Section \ref{proofs}) our result continues to hold without these assumptions.

\section{Numerical experiments}
\begin{figure}[!t]
    \centering
   
        \begin{subfigure}[b]{.43\textwidth}
        \includegraphics[width=\linewidth]{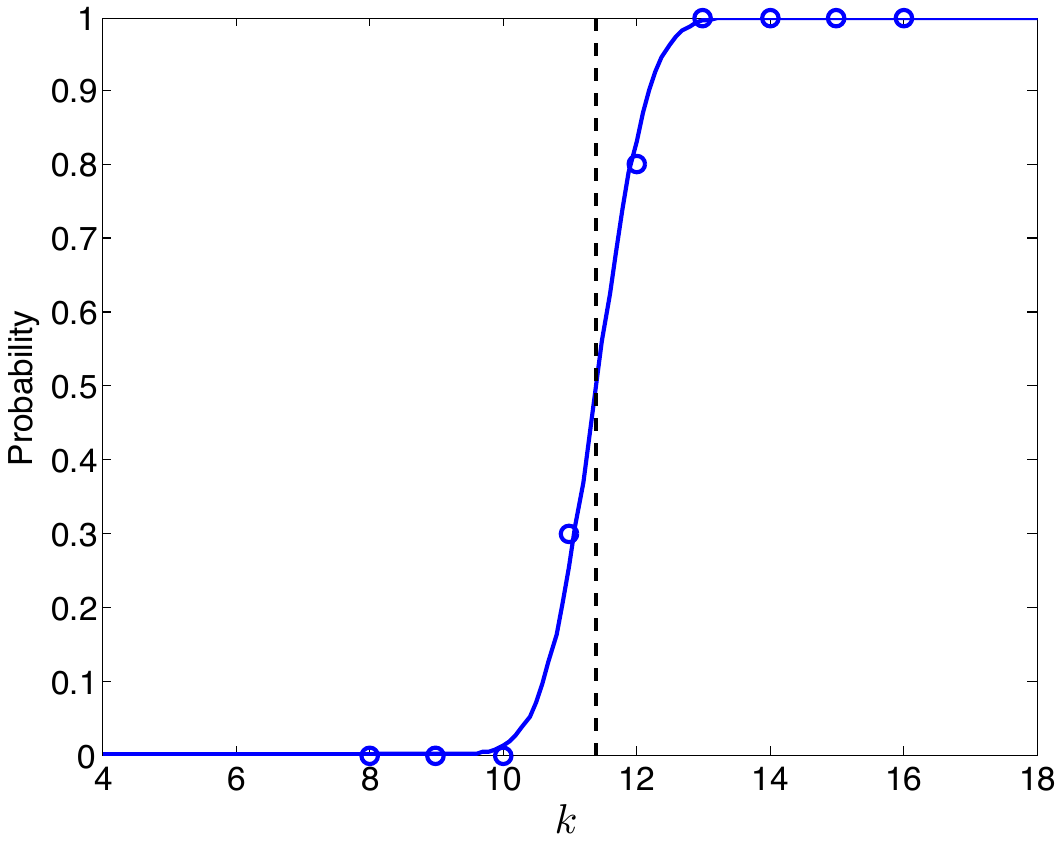}
           \hspace{1cm}
        \caption{$d =10$}\label{SoftPlus-varyK}
         \end{subfigure}    
         \hspace{1cm} 
   \begin{subfigure}[b]{.43\textwidth}
        \includegraphics[width=\linewidth]{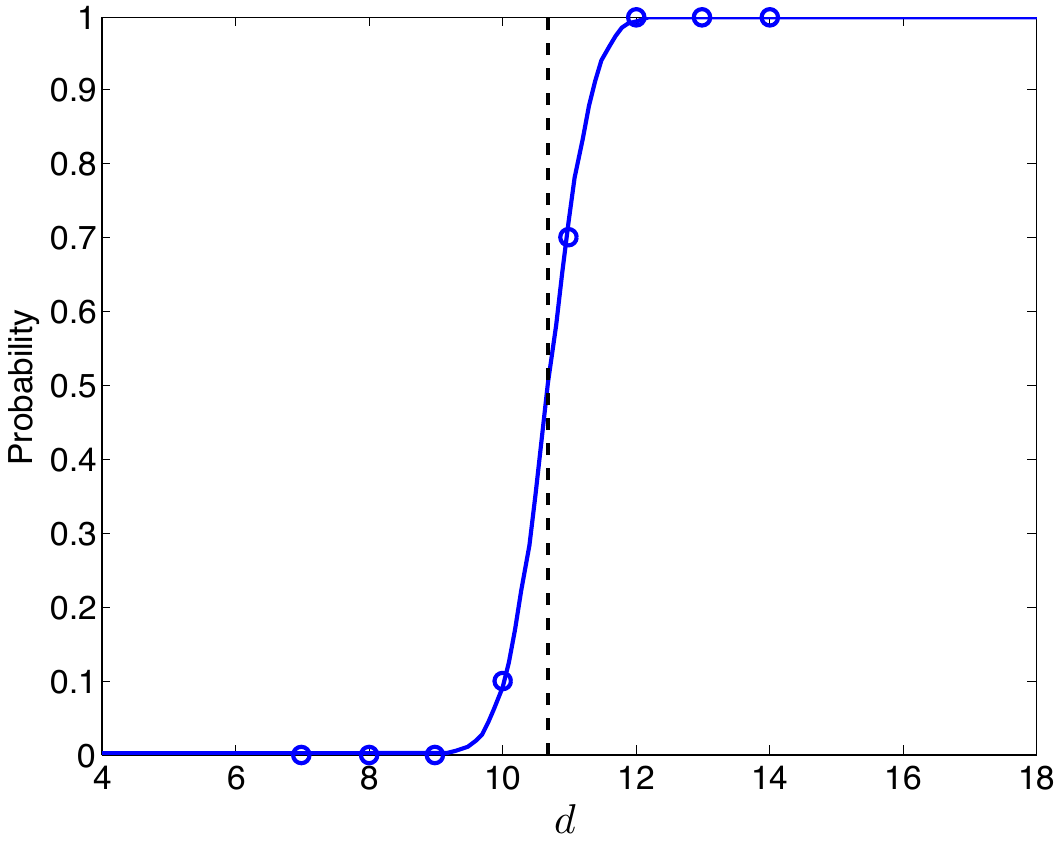}
        \hspace{1cm}
        \caption{$k=10$}\label{SoftPlus-varyD}
         \end{subfigure}

          \caption{Empirical probability of having no spurious local minimizers for a one-hidden layer neural network with softplus activation. Dotted points depict the empirical probabilities, the solid line is obtained by fitting a logistic model to the dotted points, and the dashed line depicts the point where the probability is $1/2$.  The number of samples are equal to $n=100$. In (a) we fix $d=10$ and vary $k$ and in (b) we fix $k=10$ and vary $d$. In both experiments as the number of parameters $(kd)$ increases beyond the sample size $n$, the probability of having no spurious local minimizer increases.}
               \label{fig:softplus}

   \end{figure} 
  {
In this section, we numerically investigate whether gradient descent finds the global optimum for various configurations of $n,k,d$. In our first experiment, the data is generated from a planted Gaussian model of the form
\begin{align*}
y_i=\langle\vct{v}^*,\phi(\mtx{W}^*\vct{x}_i)\rangle\quad\text{with}\quad\vct{x}_i \sim \mathcal{N}(\vct{0},\mtx{I}_d),
\end{align*}
and $\vct{v}^*$ the all-one vector.  We consider two activations, namely the softplus $\phi(z)=\frac{1}{10}\log\left(1+e^{10z}\right)$ and the quadratic activation $\phi(z) =z^2$.}

{In Figure \ref{fig:softplus}, we show the results for the softplus activation with $n=100$. In Figure~\ref{SoftPlus-varyK}, we fix the input dimension at $d = 10$ and vary the number of hidden nodes $k$. For each value of ($n$, $k$, $d$) we carryout $10$ experiments. In each experiment, we generate $\mtx{W}^*$ at random with i.i.d.~$\mathcal{N}(0,1)/\sqrt{d}$ entries. For each value of  $\mtx{W}^*$ we carry out $10$ trials where in each trial we run gradient descent starting from a random initialization pair $(\vct{v}_0,\mtx{W}_0)$ generated at random with $\vct{v}_0$ consisting of i.i.d.~Rademacher $\pm 1$ entries and $\mtx{W}_0$ with i.i.d.~$\mathcal{N}(0,1)/\sqrt{d}$ entries. If a global minimum was obtained for every single initialization, we declare that the loss function has no spurious local minima for the corresponding $\mtx{W}^*$. In Figure \ref{fig:softplus} we plot the empirical probability that the loss function has no spurious local optima for the softplus activation. Dots correspond to the simulation results and the solid curve is obtained by fitting a logistic model to the results. In Figure \ref{SoftPlus-varyK} we focus on different number of hidden units, with the input dimension fixed at $d=10$. The dashed line indicates the value of $k$ ($k= 11.40$) at which the fitted logistic model crosses the probability $1/2$. As $k$ grows we enter a region that the gradient descent frequently converges to global minimizers, and for $k\ge 13$ every single local minimizer found by gradient descent is a global minimizer. Note that in this regime the number of parameters $(kd)$ exceeds the sample size $n$. This suggests that a modest amount of over-parameterization is sufficient for the no spurious local optima conclusion to hold and the global convergence to occur. In Figure~\ref{SoftPlus-varyD}, we fix the number of hidden nodes at $k = 10$ and vary the input dimension $d$. We observe a similar trend: As $d$ grows we enter a region where every single local minimizer found by gradient descent is a global minimizer.  Here, the dashed line is located at $d = 10.70$, corresponding to a $1/2$ probability of converging to a global minima. For $d \ge12 $, gradient descent always finds a global minimizer.} 

{We also repeat a similar experiment with quadratic activations and report the results in Figure~\ref{fig:quad}. In Figure \ref{Quad activation-varyK}, we fix the input dimension at $d = 20$ and the number of samples at $n=100$ and vary $k$. The dashed line here is located at $k= 5.5$. In Figure \ref{Quad activation-varyD}, we fix the number of hidden units at $k = 5$ and vary $d$. The dashed line in this experiment is located at $d= 25.40$. These experiments further corroborate our theory that over-parameterization helps gradient descent find a global minimizer with quadratic activations.}

   \begin{figure}[!t]
    \centering
   
        \begin{subfigure}[b]{.43\textwidth}
        \includegraphics[width=\linewidth]{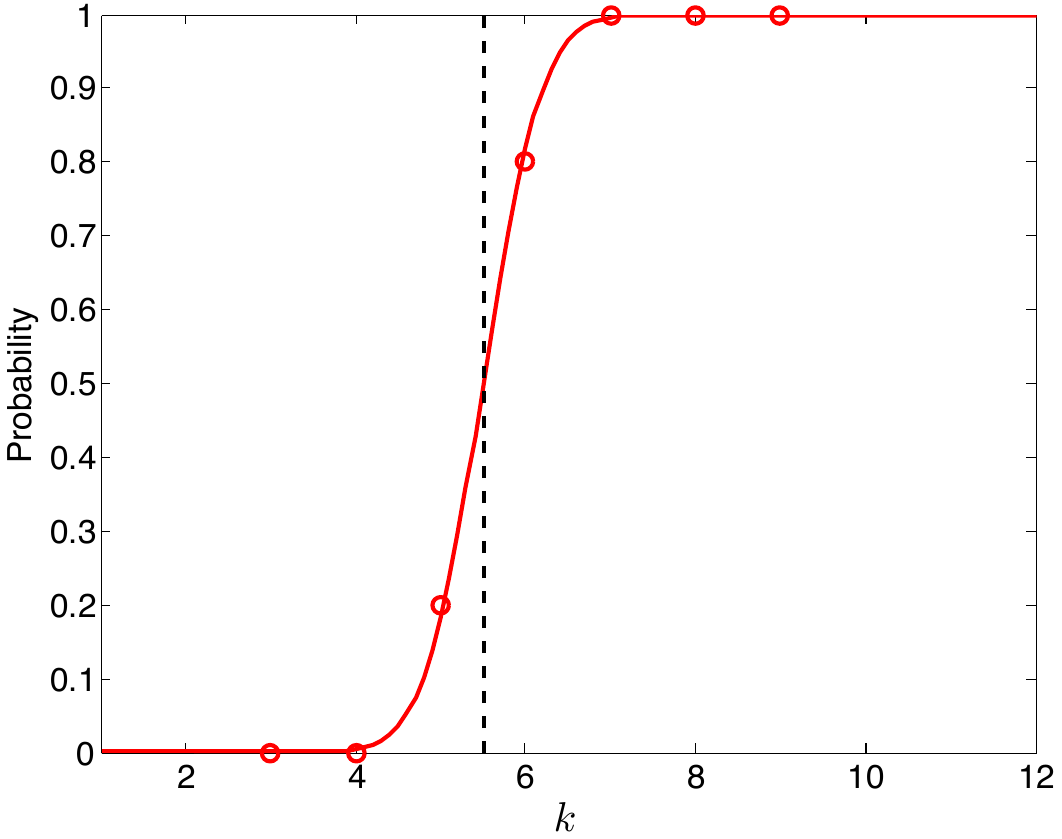}
           \hspace{1cm}
        \caption{$d = 20$}\label{Quad activation-varyK}
         \end{subfigure}     
         \hspace{1cm}
   \begin{subfigure}[b]{.43\textwidth}
        \includegraphics[width=\linewidth]{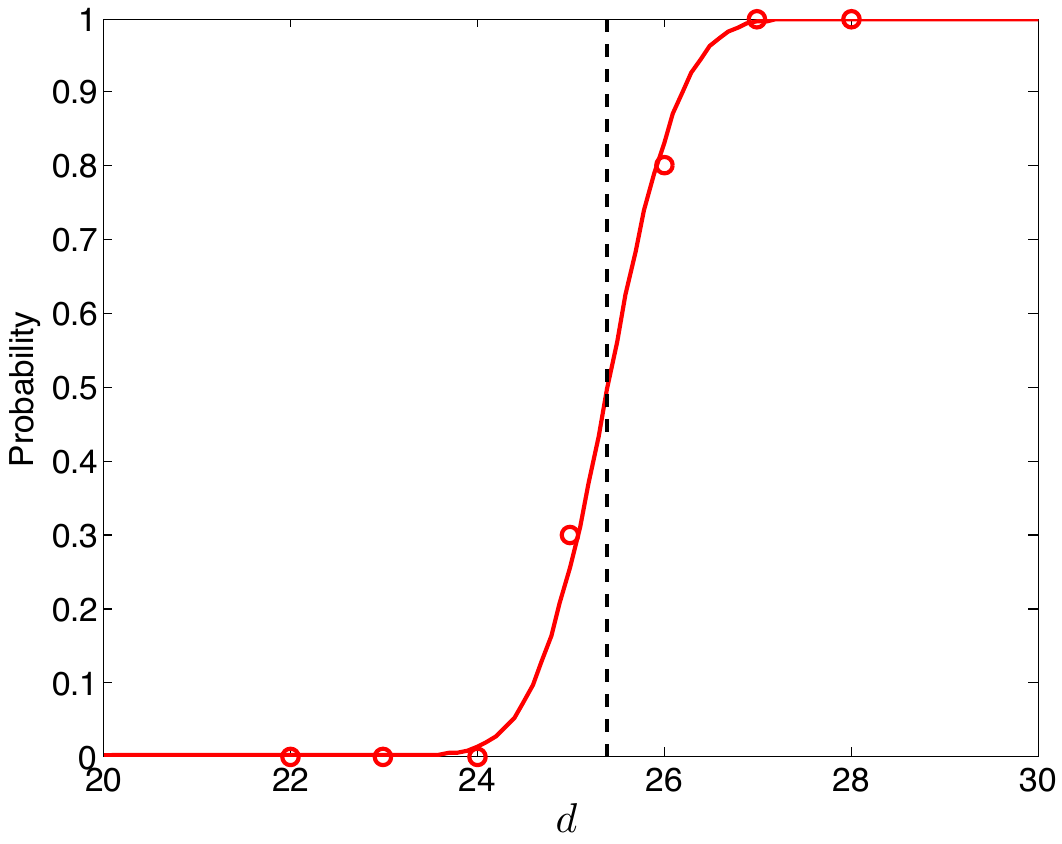}
        \hspace{1cm}
        \caption{$k=5$}\label{Quad activation-varyD}
         \end{subfigure}     
         
     \caption{Empirical probability of having no spurious local minimizers for a one-hidden layer neural network with quadratic activations. Dotted points depict the empirical probabilities, the solid line is obtained by fitting a logistic model to the dotted points, and the dashed line depicts the point where the probability is $1/2$. The number of samples is $n=100$. In (a) we fix $d=10$ and vary $k$ and in (b) we fix $k=10$ and vary $d$. In both experiments as the number of parameters $(kd)$ increases beyond the sample size $n$, the probability of having no spurious local minimizer increases.}
               \label{fig:quad}  
   \end{figure}
{In the next set of experiments, we generate the labels and features i.i.d.~at random
\begin{align*}
\vct{x}_i \sim \mathcal{N}(\vct{0},\mtx{I}_d)\quad\text{and}\quad y_i\sim \mathcal{N}( 0, 1).
\end{align*}
We fit a softplus one-hidden layer network of the form $\vct{x}\rightarrow\vct{1}^T\phi\left(\mtx{W}\vct{x}\right)$ to this data with varying number of hidden units $k$. In Figure \ref{fig:softplus-quad-rand-data} we plot the square root of the objective function $\mathcal{L}$ as a function of iterations. Interestingly, we observe that even for randomly labeled data when $kd>1.5n$, gradient descent is able to find a global minimizer. These experiments show that simple gradient descent is almost always able to find a global minimizer in the sufficiently over-parametrized regime, regardless of whether the data is generated from a planted model.}
\begin{figure}[!t]
    \centering
        \begin{subfigure}[b]{.49\textwidth}
        \includegraphics[width=\linewidth]{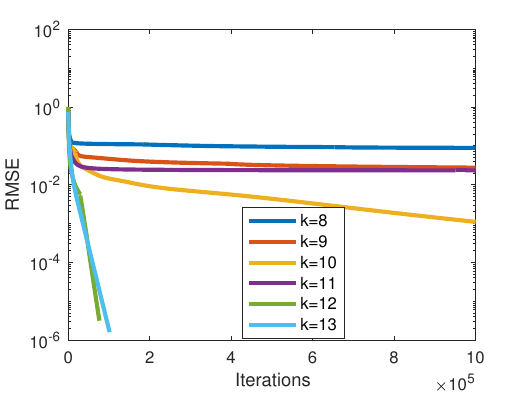}
           \hspace{1cm}
        \caption{Softplus with random data $(d,n)=(10,100)$.}
         \end{subfigure}     
   \begin{subfigure}[b]{.49\textwidth}
        \includegraphics[width=\linewidth]{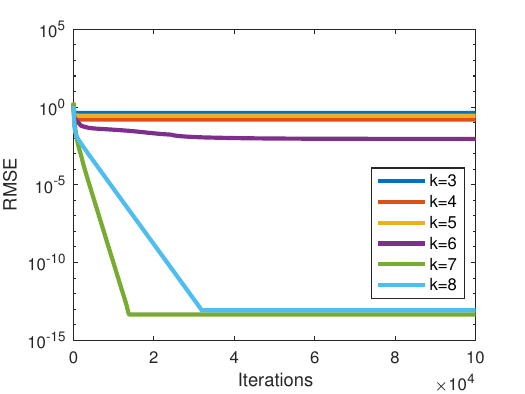}
        \hspace{1cm}
        \caption{Quadratic with random data $(d,n)=(20,100)$.}
         \end{subfigure}     
          \caption{Plot of RMSE=$\sqrt{\mathcal{L}(\vct{v},\mtx{W})}$ vs. the number of gradient descent iterations. As the number of hidden units $k$ increases, gradient descent converges to a solution with zero RMSE. }
               \label{fig:softplus-quad-rand-data}          
   \end{figure}

\section{Related Work}
\label{sec:related}
As mentioned earlier, neural networks have enjoyed great empirical success \cite{krizhevsky2012imagenet, mohamed2012acoustic, collobert2008unified}. To explain this success, many papers have studied the expressive ability of shallow neural networks dating back to the 80s (e.g.~see \cite{barron1994approximation}). More recently, interesting results have focused on the expressive ability of deeper and sparser architectures \cite{telgarsky2016benefits, bolcskei2017optimal, kuo2017cnn, wiatowski2017energy}. Computational tractability of training networks however is still a major challenge. In fact, training neural nets is known to be NP-hard even for very small networks \cite{blum1988training}. Despite this worst-case pessimism, local search heuristics such as gradient descent are surprisingly effective. For instance, \cite{zhang2016understanding} empirically demonstrated that sufficiently over-parametrized networks can be efficiently optimized to near global optimality with stochastic gradient descent.

For a neural network with zero hidden units and a single output with a monotonic activation function $\sigma$, numerous authors \cite{mei2016landscape,hazan2015beyond,kakade2011efficient,kalai2009isotron, soltanolkotabi2017learning} have shown that gradient-based methods converge to the global optimum under various assumptions and models. As a result of this literature a good understanding of the optimization landscape of learning single activations have emerged (at least when the data set is generic). Roughly stated, in this case the loss function only has a single local optima that coincides with the global optima and the gradient descent direction is sufficiently correlated with the direction pointing to the global optimum. Thus these results are able to explain the success of gradient-based methods. However, when there are hidden units, the loss function exhibits a completely different landscape, so that such analyses do not easily generalize. 

We now turn our attention to global convergence results known for deeper architectures. For a 2-layered network with leaky ReLU activation,~\cite{soudry2016no} showed that gradient descent on a modified loss function can obtain a global minimum of the modified loss function; however, this does not imply reaching a global minimum of the original loss function. Under the same setting, \cite{xie2016diversity} showed that critical points with large ``diversity" are near global optimality. \cite{choromanska2015loss} used several assumptions to simplify the loss function to a polynomial with i.i.d.~Gaussian coefficients. They then showed that every local minima of the simplified loss has objective value comparable to the global minima. \cite{kawaguchi2016deep} used similar assumptions to show that all local minimum are global minimum in a nonlinear network. However, \cite{choromanska2015loss,kawaguchi2016deep} require an \emph{independent activations} assumption meaning the activations of the hidden units are independent of the input and/or mutually independent, which is violated in practice. In comparison, our global convergence results hold for any arbitrary data set, without any additional assumptions. However, our global convergence results (see Section~\ref{subsec:global}) only focus on quadratic activations with a single hidden layer. 
 
We would like to mention a few interesting results regarding asymptotic characterizations of the dynamics of training shallow neural networks with Gaussian inputs using ideas from statistical physics \cite{saad1995line, biehl1996transient, vicente1997functional}. Saad and Solla \cite{saad1995line} study the online dynamics of learning a fully connected neural network with one-hidden layer in a Gaussian planted model (or Gaussian student-teacher model). In this model the output labels are generated according to planted weight vectors (a.k.a.~a teacher network) with Gaussian input data. Then a new ``student" network is trained to find the teacher weights. This result focuses on the case where the hidden-to-output weights are fixed to $+1$ (a.k.a.~soft committee machines) and the asymptotic regime where the number of inputs and the iterations tend to infinity ($d\rightarrow +\infty$ and $\tau\rightarrow \infty$). 
The authors provide certain dynamical equations that describes the asymptotic behavior of the training process for one-hidden layer neural networks using an ``online SGD" or ``resampled SGD" procedure where the SGD update is performed using a fresh and independent data point per new iteration. {These dynamical equations do not admit a closed form solution, but do enable the analysis of the convergence behavior close to stationarity.} Furthermore, by simulating these dynamical equations it is possible to gain some insights about the evolution of the online SGD updates. In particular, the authors use such simulations to gain insights into the convergence/divergence of such neural networks in the special case where diagonal input-hidden weights ($\mtx{W}$) are used. The paper \cite{biehl1996transient} also studies studies soft-committee machines under similar assumptions to those of \cite{saad1995line}. The authors of \cite{biehl1996transient} use tools from statistical physics (and in particular the aforementioned dynamical equations) to demonstrate that the dynamics of training such neural networks has several fixed points and find learning-rate dependent phenomena, such as splitting and disappearing of fixed points. This paper also differs from \cite{saad1995line} in that it also applies to transient iterations (finite $\tau$). Finally, the paper \cite{vicente1997functional} also studies soft committee machines but instead of online SGD, they present the analysis for a locally optimized online learning algorithm that focuses on extracting the largest possible amount of information from each new sample. The authors also demonstrate numerically that this choice leads to faster escape from the plateaux and hence faster decay of the generalization error. Our results are not directly comparable to this literature as it differs in terms of assumptions and conclusions in a variety of ways. In particular we focus on (1) analytic formulas, (2) gradient descent without resampling, (3) a non-asymptotic regime (finite $d$) and transient dynamics (finite $\tau$), (4) the over-parameterized case $n<kd$, and (5) general hidden-output weights. Some of the results presented in this paper also apply without assuming a planted model or random/Gaussian input data (e.g.~see Theorem \ref{globthm}).

We would like to note that there is also interesting growing literature on learning shallow neural networks with a single hidden layer with i.i.d.~inputs, and under a realizable model (i.e.~the labels are generated from a network with planted weights) \cite{tian2017analytical, brutzkus2017globally, zhang2017electron, li2017convergence, janzamin2015beating, zhong2017recovery}. For isotropic Gaussian inputs, \cite{tian2017analytical} shows that with two hidden unites ($k=2$) there are no critical points for configurations where both weight vectors fall into (or outside) the cone of ground truth weights. With the same assumptions, \cite{brutzkus2017globally} proves that for a single-hidden ReLU network with a single non-overlapping convolutional filter, all local minimizers of the population loss are global; they also give counter-examples in the overlapping case  and prove the problem is NP-hard when inputs are not Gaussian. \cite{zhang2017electron} show that for single-hidden layer networks with non-standard activation functions gradient descent converges to global minimizers. \cite{li2017convergence} focuses on a Recursive Neural Net (RNN) model where the hidden to output weights are close to the identity and shows that stochastic methods converge to the planted model over the population objective. \cite{zhong2017recovery} studies general single-hidden layer networks and shows that a version of gradient descent which uses a fresh batch of samples in each iteration converges to the planted model. This holds using an initialization obtained via a tensor decomposition method. Our approach and local convergence results differ from this literature in a variety of different ways. First, unlike some of these results such as \cite{brutzkus2017globally,li2017convergence}, we study the optimization properties of the empirical function, not its expected value. Second, we focus on the over-parametrized regime ($n<<kd$) which is the regime where most popular neural networks are trained. Mathematically, this is an important distinction as in this data-poor regime the empirical loss is no longer close to the population loss and therefore one can no longer infer the convergence behavior of gradient descent based on connecting the empirical loss to the population loss. Third, we optimize over both weights $\vct{v}$ and $\mtx{W}$, while most of this literature assumes $\vct{v}=\vct{v}^*=\vct{1}$ \cite{li2017convergence, zhong2017recovery}. Finally, our framework does not require a fresh batch of samples per new gradient iteration as in \cite{zhong2017recovery}.

Several publications study the effect of over-parametrization on the training of neural networks \cite{poston1991local,haeffele2015global, nguyen2017loss}. These results require a large amount of over-parametrization, mainly that the width of one of the hidden layers to be greater than the number of training samples, which is unrealistic for commonly used networks. For instance, for the case of a single hidden layer using the notation of this paper these publications require $n\lesssim k$. These results also require additional technical assumptions which we do not detail here. However, these results work for fairly general activations and also deep architectures. In comparison, our global optimality results only allow for quadratic activations and a single hidden layer. However, our results are completely deterministic and allow for modest over-parameterization ($n\lesssim kd$). Generalizing these result to other activations and deeper architectures is an interesting future direction. While discussing quadratic activations we would like to mention the interesting paper \cite{livni2014computational}. This paper is perhaps the first result to clearly state that over-parameterization is helpful for optimization purposes. The authors further showed that the Frank-Wolfe algorithm can optimize a neural net with quadratic activations of the form $\phi(\vct{x})=b+\langle \vct{w}_0, x\rangle + \sum_{\ell=1}^K\alpha_k (\langle w_\ell, x\rangle)^2$ under norm constraints of the form $\abs{\alpha_\ell}\le1$ and $\twonorm{\vct{w}_\ell}=1$. In comparison we focus on slightly less general activations (no linear term) and study landscape properties that are directly useful for analyzing gradient descent/local search methods in lieu of Franke-Wolfe type schemes. Finally, in the case of over-parameterized neural networks not all global optima may generalize to new data instances. Understanding the generalization capability of the solutions reached by (stochastic) gradient descent is an important research direction. See \cite{bartlett2017spectrally} for an interesting result in this direction under margin assumptions. 

Another line of research \cite{goel2016reliably, shalev2011learning, zhang2016l1} focuses on improper learning models using kernel-based approaches. These results hold under much more general data models than the realizable case. However, the practical success of deep learning is intricately tied to using gradient-based training procedures, and the learnability of these networks using improper learning does not explain the  success of gradient-based methods. Related, \cite{janzamin2015beating} proposes a method of moments estimator using tensor decomposition.

Finally, we would like to mention a few interesting recent developments that appeared after the initial arxiv submission of this paper. The authors of \cite{safran2017spurious} demonstrated analytically (with variable precision arithmetics) that without over-parameterization the landscape of one-hidden layer neural networks has bad local minima as soon as the number of hidden unites exceeds six ($k\ge 6$). This interesting result clearly demonstrates that the over-parametrization is necessary to ensure a favorable optimization landscape when training one-hidden layer neural networks. Another recent paper \cite{soudry2017exponentially} by Soudry and Hoffer suggests that ``most" local minima are global in the over-parameterized regime. This paper provides further evidence that over-parametrizing neural networks leads to more favorable optimization landscapes when training such neural nets. Finally, the recent paper \cite{li2017algorithmic} proves that gradient descent when initialized at zero can lead to training errors close to (but not equal to) zero. This result holds for one-hidden layer neural networks with quadratic activations and Gaussian inputs. Developing rigorous convergence guarantees to a global optima with training error equal to zero which also applies to other activations is an important future research direction.

\section{Preliminaries and notations}
Before we dive into the proofs in the next two sections we collect some useful results and notations in this section.
\subsection{Notations}
For two matrices $\mtx{A} = [\mtx{A}_1,\dotsc, \mtx{A}_p]\in\R^{m\times p}$ and $\mtx{B} =[\mtx{B}_1,\dotsc, \mtx{B}_p]\in \R^{n\times p}$, we define their Khatri-Rao product as $\mtx{A} * \mtx{B}  = [\mtx{A}_1\otimes \mtx{B}_1,\dotsc, \mtx{A}_p\otimes \mtx{B}_p]\in\R^{mn\times p}$, where $\otimes$ denotes the Kronecher product. For two matrices $\A, \mtx{B}$,
we denote their Hadamard (entrywise) product by $\A\circ\mtx{B}$.
For a matrix $\A$, we denote its maximum and minim singular values by $\sigma_{\max}(\A)$ and $\sigma_{\min}(\A)$, respectively. For a random variable $Y$, its sub-exponential norms is defined as
\[
\|Y\|_{\psi_1} = \inf\{C>0:\; \E \exp(|Y|/C) \le 2\}\,.
\]
Further, for a centered random vector $\vct{x}\in \R^d$, we define its sub-exponential norm as
\begin{align}
\|\vct{x}\|_{\psi_1}  = \sup_{\vct{y}\in S^{d-1}} \|\<\vct{x},\vct{y}\>\|_{\psi_1}\label{def2:subexp}
\end{align}
Throughout the paper we use $c, C$ to refer to constants whose values may change from line to line.
\subsection{Derivative calculations}
In this section we gather some derivative calculations that we will use throughout the proof. As a reminder the loss function is given by
\begin{align*}
\mathcal{L}(\vct{v},\mtx{W})=\frac{1}{2n}\sum_{i=1}^n\left(\vct{v}^T\phi(\mtx{W}\vct{x}_i)-y_i\right)^2.
\end{align*}
To continue let us define the residual vector $\vct{r}\in\R^n$ with the $i$th entry given by
\begin{align*}
r_i=\vct{v}^T\phi(\mtx{W}\vct{x}_i)-y_i.
\end{align*}
\subsubsection{First order derivatives}
We begin by calculating the gradient with respect to the weight matrix $\mtx{W}$. 
To this aim we begin by calculating the gradient with respect to the $q$th row of $\mtx{W}$ denoted by $\vct{w}_q$. This is equal to
\begin{align}
\label{gradw}
\nabla_{\vct{w}_q} \mathcal{L}(\vct{v},\mtx{W})=\frac{v_q}{n}\sum_{i=1}^n\left(\vct{v}^T\phi(\mtx{W}\vct{x}_i)-y_i\right)\phi'(\langle\vct{w}_q,\vct{x}_i\rangle)\vct{x}_i=\frac{v_q}{n}\sum_{i=1}^nr_i\phi'(\langle\vct{w}_q,\vct{x}_i\rangle)\vct{x}_i.
\end{align}
Aggregating these gradients as a row vector and setting $\mtx{D}_{\vct{v}}=\diag(v_1,\dotsc,v_k)$ we arrive at
\begin{align}
\label{gradmat}
\nabla_{\mtx{W}} \mathcal{L}(\vct{v},\mtx{W})=\mtx{D}_{\vct{v}}\left(\frac{1}{n}\sum_{i=1}^nr_i\phi'(\mtx{W}\vct{x}_i)\vct{x}_i^T\right).
\end{align}
We also define the Jacobian matrix $\mtx{J}=\begin{bmatrix}\mtx{J}_1 & \mtx{J}_2 & \ldots & \mtx{J}_n\end{bmatrix}\in\R^{kd\times n}$ with
\begin{align*}
\mtx{J}_i=\begin{bmatrix}  v_1 \phi'(\w_1^T\x_i) \x_i \\  v_2 \phi'(\w_2\x_i) \x_i \\ \vdots\\  v_k \phi'(\w_k^T\x_i) \x_i \end{bmatrix}.
\end{align*}
Let $\mtx{X}\in\R^{d\times n}$ be the data matrix with the $i$th column equal to $\vct{x}_i$. Using the Khatri-Rao product we can rewrite the Jacobian matrix in the form
\begin{align}
\label{myJ}
\mtx{J}=\mtx{D}_{\vct{v}}\phi'(\mtx{W}\mtx{X})\ast\mtx{X}.
\end{align}
 Using this Jacobian matrix we can write the vectorized version of the gradient, i.e.~gradient with respect to vect$(\mtx{W})$ as
\begin{align}
\label{gradvec}
\nabla_{\text{vect}(\mtx{W})} \mathcal{L}(\vct{v},\mtx{W})=\frac{1}{n} \J \r.
\end{align}
Taking the derivative with respect to $v_q$ we arrive at
\begin{align*}
\frac{\partial}{\partial v_q}\mathcal{L}(\vct{v},\mtx{W})=\frac{1}{n}\sum_{i=1}^n\left(\vct{v}^T\phi(\mtx{W}\vct{x}_i)-y_i\right)\phi(\langle \vct{w}_q,\vct{x}_i\rangle)=\frac{1}{n}\sum_{i=1}^nr_i\phi(\langle \vct{w}_q,\vct{x}_i\rangle).
\end{align*}
Thus the gradient with respect to $\vct{v}$ is equal to 
\begin{align}
\label{gradv}
\nabla_{\vct{v}}\mathcal{L}(\vct{v},\mtx{W})=\frac{1}{n}\phi(\mtx{W}\mtx{X})\vct{r}.
\end{align}
\subsubsection{Second order derivatives}
Using \eqref{gradw} we have
\begin{align*}
\frac{\partial^2}{\vct{w}_p^2} \mathcal{L}(\vct{v},\mtx{W})=\frac{v_p}{n}\sum_{i=1}^n\left(\vct{v}^T\phi(\mtx{W}\vct{x}_i)-y_i\right)\phi''(\langle\vct{w}_p,\vct{x}_i\rangle)\vct{x}_i\vct{x}_i^T+\frac{v_p^2}{n}\sum_{i=1}^n\left(\phi'(\langle\vct{w}_p,\vct{x}_i\rangle)\right)^2\vct{x}_i\vct{x}_i^T.
\end{align*}
Also using \eqref{gradw}, for $p\neq q$
\begin{align*}
\frac{\partial^2}{\partial\vct{w}_p\vct{w}_q} \mathcal{L}(\vct{v},\mtx{W})=\frac{v_pv_q}{n}\sum_{i=1}^n\phi'(\langle\vct{w}_p,\vct{x}_i\rangle)\phi'(\langle\vct{w}_q,\vct{x}_i\rangle)\vct{x}_i\vct{x}_i^T.
\end{align*}
Thus
\begin{align}
\label{partHess}
\left(\text{vect}(\mtx{U})\right)^T&\nabla_{\mtx{W}}^2\mathcal{L}(\vct{v},\mtx{W}) \left(\text{vect}(\mtx{U})\right)\nonumber\\&=\frac{1}{n}\sum_{i=1}^n\left(\vct{v}^T\phi(\mtx{W}\vct{x}_i)-y_i\right)\left(\sum_{p=1}^kv_p\phi''(\langle\vct{w}_p,\vct{x}_i\rangle)(\langle\vct{u}_p,\vct{x}_i\rangle)^2\right)\nonumber\\
&+\frac{1}{n}\sum_{i=1}^n \left(\sum_{p=1}^kv_p\phi'(\langle\vct{w}_q,\vct{x}_i\rangle)\langle\vct{u}_p,\vct{x}_i\rangle\right)^2\nonumber\\
=&\frac{1}{n}\sum_{i=1}^nr_i\left(\vct{x}_i^T\mtx{U}^T\mtx{D}_{\vct{v}}\mtx{D}_{\phi''(\mtx{W}\vct{x}_i)}\mtx{U}\vct{x}_i\right)+\frac{1}{n}\sum_{i=1}^n \left( \vct{v}^T\mtx{D}_{\phi'(\mtx{W}\vct{x}_i)}\mtx{U}\vct{x}_i\right)^2.
\end{align}
Using the expression for the Jacobian matrix the latter can also be written in the form
\begin{align}
\label{partHess2}
\left(\text{vect}(\mtx{U})\right)^T&\nabla_{\mtx{W}}^2\mathcal{L}(\vct{v},\mtx{W}) \left(\text{vect}(\mtx{U})\right)\nonumber\\
&=\frac{1}{n}\sum_{i=1}^n\langle \mtx{U}^T\mtx{D}_{\vct{v}}\mtx{D}_{\phi''(\mtx{W}\vct{x}_i)}\mtx{U}, r_i\vct{x}_i\vct{x}_i^T\rangle+\frac{1}{n}\text{vect}(\mtx{U})^T\mtx{J}\mtx{J}^T\text{vect}(\mtx{U}).
\end{align}
Finally, note that
\begin{align}
\label{Hessianv}
\nabla_{\vct{v}}^2\mathcal{L}(\vct{v},\mtx{W})=\phi\left(\mtx{W}\mtx{X}\right) \phi\left(\mtx{W}\mtx{X}\right)^T.
\end{align}

\subsection{Useful identities involving matrix products}
In this section we gather a few preliminary results regarding matrix products that will be useful throughout our proofs. Most of these identities are trivial and we thus skip their proof. The only exception is Lemma \ref{Pre4} which is proved in Appendix \ref{pfprem}.

\begin{lemma}\label{Pre1}
For two vectors $\ub\in\R^{m\times 1}$ and $\vb\in \R^{n\times 1}$, and a matrix $\M\in\R^{k\times n}$, we have
\[\ub\otimes \M\vb = \DM (\ub\otimes \vb)\,,\]
where $\DM\in \R^{mk\times mn}$ is the block diagonal matrix with $m$ copies of $\M$ in the diagonal blocks.
\end{lemma}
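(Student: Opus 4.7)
The plan is to recognize that the block diagonal matrix $\DM$ with $m$ copies of $\M$ on the diagonal is nothing other than the Kronecker product $\Iden_m \otimes \M$, and then apply the mixed-product property of Kronecker products. Specifically, once we write $\DM = \Iden_m \otimes \M$, we use the identity $(\A\otimes \B)(\C\otimes \D) = (\A\C)\otimes (\B\D)$, applied with $\A=\Iden_m$, $\B=\M$, $\C=\ub$ and $\D=\vb$, to obtain
\[
\DM(\ub\otimes\vb) = (\Iden_m\otimes \M)(\ub\otimes \vb) = (\Iden_m \ub)\otimes (\M\vb) = \ub\otimes \M\vb.
\]
This gives the claim in essentially one line.

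As a backup (or as a self-contained justification avoiding the mixed-product lemma), the plan is to verify the identity blockwise. Writing out the definition of the Kronecker product of a vector with another vector, the left-hand side stacks the blocks $u_1\M\vb, u_2\M\vb, \ldots, u_m\M\vb$, while the right-hand side applies the block-diagonal matrix $\DM$ to the stacked vector $[u_1\vb^T,\ldots,u_m\vb^T]^T$, and the $i$th block of the product is exactly $\M(u_i\vb)=u_i\M\vb$. Matching blocks confirms equality.

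I do not anticipate any genuine obstacle here: this is a one-step consequence of a standard Kronecker algebra identity, and the purpose of the lemma is merely to fix notation ($\DM$ as block-diagonal with $m$ copies of $\M$) in a form that will be convenient later in the paper. The only thing worth being careful about is the orientation convention, namely that $\ub \otimes \vct{w}$ stacks $u_i \vct{w}$ in the order $i=1,\ldots,m$, which matches the convention $\Iden_m\otimes \M$ for the block-diagonal structure; once this convention is fixed, both the mixed-product proof and the blockwise proof go through immediately.
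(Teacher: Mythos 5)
Your proof is correct. The paper itself regards Lemma~\ref{Pre1} as one of the "trivial" identities and omits its proof entirely, so there is no paper argument to compare against; both your routes (recognizing $\DM = \Iden_m \otimes \M$ and invoking the mixed-product identity $(\A\otimes\B)(\C\otimes\D)=(\A\C)\otimes(\B\D)$, or simply matching the $m$ blocks $u_i\M\vb$ on each side) are the standard one-line verifications the authors had in mind.
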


\begin{lemma}\label{Pre2}
For vectors $\vct{a}\in\R^{m\times 1}$ and $\vct{b}\in \R^{n\times 1}$, and a matrix $\M\in\R^{n\times m}$, we have 
\[(\vct{a}\otimes \vct{b})^T \vec(\M) = \vct{b}^T \M \vct{a}\,,\]
where $\vec(M)$ denotes the vectorization of the matrix $\M$ formed by stacking the columns of $\M$ into a single column vector.
\end{lemma}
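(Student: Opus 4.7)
The plan is to verify this identity by a direct index computation, exploiting the block structure of the Kronecker product together with the column-stacking definition of $\vec(\M)$. Both sides are linear in each of $\vct{a}$, $\vct{b}$, and $\M$, so any of several equivalent routes will work; I will pick the cleanest one.

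First I would unpack the two objects on the left-hand side. By definition of the Kronecker product,
\[
\vct{a}\otimes\vct{b} \;=\; \begin{bmatrix} a_1 \vct{b}\\ a_2 \vct{b}\\ \vdots\\ a_m \vct{b}\end{bmatrix}\in\R^{mn},
\]
partitioned into $m$ blocks of length $n$. Writing $\M=[\M_1,\M_2,\dotsc,\M_m]$ in terms of its columns $\M_i\in\R^n$, the column-stacking definition of $\vec$ gives
\[
\vec(\M)=\begin{bmatrix}\M_1\\ \M_2\\ \vdots\\ \M_m\end{bmatrix}\in\R^{mn},
\]
partitioned in the same way. Taking the Euclidean inner product block by block then yields
\[
(\vct{a}\otimes\vct{b})^T\vec(\M) \;=\; \sum_{i=1}^m a_i\,\vct{b}^T\M_i \;=\; \vct{b}^T\!\left(\sum_{i=1}^m a_i \M_i\right) \;=\; \vct{b}^T\M\vct{a},
\]
which is exactly the right-hand side.

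As a sanity check I would also recover the identity via the trace/vec formula $\vec(\A)^T\vec(\B)=\tr(\A^T\B)$ combined with the standard equality $\vct{a}\otimes\vct{b}=\vec(\vct{b}\vct{a}^T)$, so that
\[
(\vct{a}\otimes\vct{b})^T\vec(\M)=\tr\!\bigl((\vct{b}\vct{a}^T)^T\M\bigr)=\tr(\vct{a}\vct{b}^T\M)=\vct{b}^T\M\vct{a}.
\]
There is essentially no obstacle here; the only thing to be careful about is keeping the column-stacking convention for $\vec$ consistent with the block-layout of $\vct{a}\otimes\vct{b}$ (the entry pattern must match: outer index from $\vct{a}$ selects a block, inner index from $\vct{b}$ indexes within the block, mirroring the column/row indexing of $\M$). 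Once those conventions are aligned the identity is immediate.
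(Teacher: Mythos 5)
Your proof is correct. The paper explicitly skips the proof of Lemma~\ref{Pre2}, stating that it (along with the other identities in that section besides Lemma~\ref{Pre4}) is trivial, so there is no in-paper argument to compare against; your block-by-block inner-product computation, together with the alternative trace/vec derivation, is a clean and complete verification of the identity.
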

\begin{lemma}\label{Pre4}
Consider an arbitrary matrix $\A\in \R^{k\times d}$ and an arbitrary vector $\vb \in \R^k$ and define $\Dv:=\diag(\vb)$. Then the following identity holds
\begin{align}
\|\vb\|_\infty \sigma_{\min}(\A) \le \sigma_{\max}(\Dv\A)\,.
\end{align}
\end{lemma}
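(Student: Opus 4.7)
The plan is to reduce the inequality to a single-row calculation. Let $p \in \arg\max_i |v_i|$ so that $|v_p| = \|\vb\|_\infty$, and let $\vct{a}_p \in \R^d$ denote the $p$-th row of $\A$, viewed as a column vector. My target is the two-link chain
\[
\sigma_{\max}(\Dv\A) \;\ge\; |v_p|\,\|\vct{a}_p\|_2 \;\ge\; \|\vb\|_\infty\,\sigma_{\min}(\A).
\]

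For the first link I would use that the operator norm of a matrix dominates the Euclidean norm of any of its rows. Because $\Dv$ is diagonal, $e_p^T \Dv = v_p\, e_p^T$, so the $p$-th row of $\Dv \A$ is exactly $v_p \vct{a}_p^T$, which gives
\[
\sigma_{\max}(\Dv\A) \;=\; \|\Dv\A\|_{\mathrm{op}} \;\ge\; \|e_p^T\Dv\A\|_2 \;=\; |v_p|\,\|\vct{a}_p\|_2 \;=\; \|\vb\|_\infty\,\|\vct{a}_p\|_2.
\]
An equivalent PSD-ordering route is available: since $\Dv^2 \succeq v_p^2\, e_p e_p^T$, conjugating by $\A$ yields $\A^T \Dv^2 \A \succeq v_p^2\, \vct{a}_p \vct{a}_p^T$, and taking the largest eigenvalue of both sides gives $\sigma_{\max}^2(\Dv\A) = \lambda_{\max}(\A^T \Dv^2 \A) \ge v_p^2 \,\|\vct{a}_p\|_2^2$.

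For the second link I would use the variational lower bound $\|\A^T u\|_2 \ge \sigma_{\min}(\A^T)\|u\|_2 = \sigma_{\min}(\A)\|u\|_2$, which I specialize at $u = e_p$. Since $\vct{a}_p = \A^T e_p$, this immediately produces $\|\vct{a}_p\|_2 \ge \sigma_{\min}(\A)$. Chaining with the previous display completes the proof.

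The main (and really only) obstacle is this second link, because $\|\A^T u\|_2 \ge \sigma_{\min}(\A)\|u\|_2$ is automatic in $u$ only when $\A$ has full row rank, i.e.\ when $\A^T$ is tall and injective so that its nontrivial singular value spectrum coincides with that of $\A$. In the rank-deficient case, the natural convention sets $\sigma_{\min}(\A)=0$ and the lemma is vacuous, so the bound still holds; an SVD of $\A$ handles any remaining edge case uniformly. Either way, the two-step chain above delivers the stated inequality.
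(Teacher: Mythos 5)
Your first link is clean --- the operator norm of $\Dv\A$ dominates the Euclidean norm of any of its rows, and row $p$ of $\Dv\A$ is exactly $v_p\vct{a}_p^T$ --- and it is arguably a tidier reduction than the paper's. The gap is in the second link: the claim $\twonorm{\vct{a}_p} = \twonorm{\A^T e_p} \ge \sigma_{\min}(\A)$ fails precisely in the regime where this lemma is deployed, namely when $\A\in\R^{k\times d}$ is tall ($k> d$). There $\A^T$ is wide with a $(k-d)$-dimensional kernel, and $e_p$ may lie in it. Your closing justification --- that in the rank-deficient case $\sigma_{\min}(\A)=0$ so the claim is vacuous --- conflates ``not full row rank'' with ``rank-deficient'': a tall $\A$ with full column rank has $\sigma_{\min}(\A)>0$ under the convention used throughout this paper (the smallest of the $\min(k,d)$ singular values), yet does not have full row rank, and an SVD does not rescue the bound because it only shows $\twonorm{\A^T e_p}\ge \sigma_{\min}(\A)\twonorm{U^Te_p}$, where $\twonorm{U^Te_p}$ can be zero. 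Concretely, with $\A = (1,0)^T\in\R^{2\times 1}$ and $\vb = (0,1)^T$ one has $\sigma_{\min}(\A)=1$ and $\|\vb\|_\infty=1$, but $\Dv\A = \vct{0}$, so $\sigma_{\max}(\Dv\A)=0$ and both your chain and the stated inequality fail.

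For what it is worth, the paper's own proof has the mirror-image defect: it assumes $\A$ has full column rank and then ``chooses $\ub$ such that $\A\ub = e_{i^*}$,'' but for tall $\A$ this overdetermined system has a solution only if $e_{i^*}$ happens to lie in the $d$-dimensional column space of $\A$, which is not guaranteed. Both arguments --- and Lemma~\ref{Pre4} as stated --- are valid only when $\A$ has full row rank (in particular $k\le d$), which is the opposite of the setting in which the lemma is later invoked. In that one invocation the damage is only to constants: since the paper assumes $v_{\min}\le |v_\ell|\le v_{\max}$ with $v_{\min},v_{\max}$ fixed, the correct and elementary bound $v_{\min}\,\sigma_{\min}(\A)\le\sigma_{\min}(\Dv\A)\le\sigma_{\max}(\Dv\A)$ suffices in place of Lemma~\ref{Pre4}.
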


\begin{lemma}\label{Pre5}
For any two matrices $\A$ and $\B$, we have
\begin{align}
(\A\ast \B)^T (\A\ast \B) = (\A^T \A) \circ (\B^T\B)\,.
\end{align}
\end{lemma}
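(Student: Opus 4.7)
The plan is to verify the identity entry-by-entry. Using the column-wise definition of the Khatri--Rao product, the $j$th column of $\A\ast\B$ is $\A_j\otimes\B_j$, so the $(i,j)$-entry of the $p\times p$ matrix $(\A\ast\B)^T(\A\ast\B)$ equals $(\A_i\otimes\B_i)^T(\A_j\otimes\B_j)$.

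The key step is to apply the standard mixed-product property of the Kronecker product, namely $(\vct{u}\otimes\vct{v})^T(\vct{x}\otimes\vct{y})=(\vct{u}^T\vct{x})(\vct{v}^T\vct{y})$, which collapses the $(i,j)$-entry to the scalar $(\A_i^T\A_j)(\B_i^T\B_j)$. This is exactly $(\A^T\A)_{ij}\cdot(\B^T\B)_{ij}$, i.e.\ the $(i,j)$-entry of the Hadamard product $(\A^T\A)\circ(\B^T\B)$. Since both sides agree entry-by-entry, the identity follows.

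There is essentially no obstacle: the argument is a direct consequence of the bilinearity of the Kronecker product and its interaction with transposition. The only thing worth being careful about is that $\A_j\in\R^m$ and $\B_j\in\R^n$ are vectors (columns of the respective matrices), so that $\A_i^T\A_j$ and $\B_i^T\B_j$ are genuine scalars and the collapse of the Kronecker inner product into a product of scalars is legitimate. Accordingly, the proof plan essentially reduces to citing the mixed-product identity and matching entries, and a complete write-up should be only a few lines long.
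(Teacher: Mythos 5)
Your proof is correct: it is the standard entrywise verification using the column definition of the Khatri--Rao product and the Kronecker mixed-product rule $(\vct{u}\otimes\vct{v})^T(\vct{x}\otimes\vct{y})=(\vct{u}^T\vct{x})(\vct{v}^T\vct{y})$. The paper itself deems this lemma trivial and omits its proof, so there is nothing to compare against; your write-up is exactly the argument one would expect.
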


\subsection{Useful probabilistic identities involving random vectors and matrices}
In this Section we gather some useful probabilistic identities that will be used throughout our proofs. We defer the proof of these results to Appendix \ref{pfprem}. The first result, proven in Appendix \ref{prfPre3}, relates the tail of a random variable to a proper Orlicz norm.
\begin{lemma}\label{Pre3}
Suppose that a random variable $Y$ satisfies the following tail bound
\begin{align}
\P(|Y|\ge t) \le 2e^{-c\min(t^2/A, t/B)}\,.
\end{align}
Then, $\|Y\|_{\psi_1} \le 9\max(\sqrt{A/(2c)},B/c)$.
\end{lemma}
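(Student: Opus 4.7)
The plan is to work directly from the definition $\|Y\|_{\psi_1}=\inf\{C>0:\E\exp(|Y|/C)\le 2\}$, and verify that the candidate value $C = 9\max(\sqrt{A/(2c)},B/c)$ achieves $\E\exp(|Y|/C)\le 2$. The starting point is the layer-cake identity for nonnegative random variables, which in the form needed reads
\begin{align*}
\E\exp(|Y|/C)=1+\int_0^\infty \frac{1}{C}e^{t/C}\,\P(|Y|\ge t)\,dt.
\end{align*}
Inserting the hypothesized tail bound and pulling the factor $2$ outside gives an integral of $(2/C)\exp(t/C-c\min(t^2/A,t/B))\,dt$ over $[0,\infty)$.

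Next I would split the integral at the crossover point $t^*=A/B$ where $t^2/A=t/B$, obtaining $I_1$ over $[0,A/B]$ (sub-Gaussian regime, exponent $-ct^2/A$) and $I_2$ over $[A/B,\infty)$ (sub-exponential regime, exponent $-ct/B$). The piece $I_2$ is an honest exponential integral: with $\alpha:=c/B-1/C$, which is positive and in fact at least $8c/(9B)$ because $C\ge 9B/c$, one gets $I_2\le (2/C)\cdot (1/\alpha)\le 9B/(4cC)\le 1/4$.

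For $I_1$ the natural move is to complete the square in $t/C-ct^2/A$, which yields a constant factor $\exp(A/(4cC^2))$ times a centered Gaussian density integrated over a subset of $\R$. Using the bound $C\ge 9\sqrt{A/(2c)}$ makes this constant factor at most $e^{1/162}\approx 1.006$, and upper-bounding the Gaussian integral by the full real-line value $\sqrt{\pi}$ gives $I_1\le (2\sqrt{\pi A/c}/C)\cdot e^{1/162}$; the size constraint on $C$ then yields $I_1\le (2\sqrt{2\pi}/9)\cdot e^{1/162}<0.57$.

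Adding $1+I_1+I_2\le 1+0.57+0.25<2$ closes the argument and proves $\|Y\|_{\psi_1}\le C$ by definition. There is no real obstacle here — the only slightly delicate step is tracking the constants tightly enough to get the final sum below $2$, and the key quantitative choice is balancing the two terms: the factor $9$ in the statement is tuned precisely so that $I_2$ loses a factor of about $1/4$ and the $I_1$ contribution absorbs the Gaussian constant $\sqrt{2\pi}$ comfortably. Everything else reduces to routine integration and the completion-of-the-square identity, so no auxiliary machinery beyond the layer-cake formula is needed.
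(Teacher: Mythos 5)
Your proof is correct, and it takes a genuinely different route from the paper's. The paper proceeds via moments: it uses the layer-cake identity to bound $\E|Y|^p$ for each $p$, splitting the integral at $t=A/B$ to get $\E|Y|^p\le 2p^p\xi^p$ with $\xi=\max(\sqrt{A/(2c)},B/c)$, and then feeds these moment bounds into the power-series expansion $\E e^{|Y|/C}=1+\sum_{p\ge 1}\E|Y|^p/(C^p p!)$, closing with the Stirling-type bound $p!\ge(p/e)^p$ to sum the resulting geometric series; the constant $9$ is an over-approximation of the $3e$ that emerges there. You instead apply the layer-cake identity directly to the MGF, writing $\E e^{|Y|/C}=1+\int_0^\infty\frac{1}{C}e^{t/C}\P(|Y|\ge t)\,dt$, splitting at the same crossover $t^*=A/B$, and handling the sub-Gaussian piece by completing the square and the sub-exponential piece as a plain exponential integral. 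Both arguments split at the same threshold and produce the same constant, but the emphasis differs: the paper's moment route yields the reusable intermediate fact that $Y$ has well-controlled $p$th moments (which is the more standard way sub-exponential equivalences are packaged, see Vershynin), while your route is a single self-contained integral computation that avoids factorials, the Gamma function, and the series manipulation entirely. Your $I_1$, $I_2$ bounds check out numerically ($I_1\lesssim 0.57$, $I_2\le 1/4$, total $<2$), so the argument is sound as written.
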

The second result concerns the minimum eigenvalue of the Khatrio-Rao product of a generic matrix with itself.
\begin{lemma}\label{XkrX} 
For almost every data input matrix $\mtx{X}\in \R^{d\times n}$, as long as 
$d\le n\le d(d+1)/2$, the following holds
\begin{align*}
\sigma_{\min}(\mtx{X}\ast\mtx{X})>0.
\end{align*}
\end{lemma}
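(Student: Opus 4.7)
The plan is to prove this via a standard ``polynomial not identically zero'' argument combined with an explicit construction exhibiting one matrix $\mtx{X}_0$ achieving the conclusion. First, I would observe that each column of $\mtx{X}\ast\mtx{X}$ equals $\vct{x}_i\otimes\vct{x}_i=\text{vec}(\vct{x}_i\vct{x}_i^T)$, which lives in the subspace of $\R^{d^2}$ corresponding to vectorized symmetric matrices, a subspace of dimension $d(d+1)/2$. Thus the bound $n\le d(d+1)/2$ is exactly the dimensional threshold beyond which we could never hope for linear independence.

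Next, I would observe that the condition $\sigma_{\min}(\mtx{X}\ast\mtx{X})>0$ is equivalent to $\mtx{X}\ast\mtx{X}$ having full column rank, which in turn is equivalent to the nonvanishing of the polynomial
\[
p(\mtx{X}) := \det\!\bigl((\mtx{X}\ast\mtx{X})^T(\mtx{X}\ast\mtx{X})\bigr),
\]
whose entries, by Lemma~\ref{Pre5}, are polynomial in the entries of $\mtx{X}$. Since a polynomial in real variables is either identically zero or vanishes on a set of Lebesgue measure zero, it suffices to exhibit a single $\mtx{X}_0\in\R^{d\times n}$ with $p(\mtx{X}_0)\ne 0$.

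For the construction, I would take $\mtx{X}_0$ to consist of (some $n$ of) the following $d(d+1)/2$ vectors: the $d$ standard basis vectors $\vct{e}_i$ for $i=1,\ldots,d$, together with the $\binom{d}{2}$ pairwise sums $\vct{e}_i+\vct{e}_j$ for $1\le i<j\le d$. The corresponding symmetric outer products are $E_{ii}:=\vct{e}_i\vct{e}_i^T$ and $E_{ii}+E_{jj}+E_{ij}+E_{ji}$. To verify their linear independence, suppose a linear combination vanishes, say $\sum_i a_i E_{ii}+\sum_{i<j} b_{ij}(E_{ii}+E_{jj}+E_{ij}+E_{ji})=0$. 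Reading off the $(i,j)$ off-diagonal entry for $i<j$ gives $b_{ij}=0$, and then the $(i,i)$ diagonal entry gives $a_i=0$. Hence these $d(d+1)/2$ symmetric rank-one matrices are linearly independent, so for $n\le d(d+1)/2$ any $n$ of them yield $\mtx{X}_0\ast\mtx{X}_0$ with linearly independent columns, i.e.~$p(\mtx{X}_0)\ne 0$.

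The main (and essentially only) obstacle is just producing the explicit construction above; once that is in hand, the polynomial-genericity argument immediately delivers the ``almost every'' statement. Combining the two steps, $p$ is a nonzero polynomial in the $dn$ entries of $\mtx{X}$, so $\{\mtx{X}:\sigma_{\min}(\mtx{X}\ast\mtx{X})=0\}=\{\mtx{X}:p(\mtx{X})=0\}$ has Lebesgue measure zero in $\R^{d\times n}$, proving the lemma.
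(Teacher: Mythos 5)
Your proof is correct and follows essentially the same strategy as the paper's: exhibit the explicit matrix $\mtx{X}_0$ built from the standard basis vectors $\vct{e}_i$ and pairwise sums $\vct{e}_i+\vct{e}_j$, verify linear independence of the corresponding symmetric rank-one matrices, and then invoke the fact that a nonvanishing polynomial (yours is $\det((\mtx{X}\ast\mtx{X})^T(\mtx{X}\ast\mtx{X}))$, which by Cauchy--Binet equals the paper's sum of squared $n\times n$ row-minors) has a measure-zero zero set. The only cosmetic differences are that you phrase the independence check via the entries of the symmetric matrices $E_{ii}$, $E_{ii}+E_{jj}+E_{ij}+E_{ji}$ rather than via coordinates of the Kronecker products, and you describe the certifying polynomial as a single Gram determinant rather than as a sum of squared minors.
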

We refer to Appendix~\ref{app:XkrX} for the proof of Lemma~\ref{XkrX}.

Finally, we state a Lemma based on \cite{WF, soltanolkotabi2014algorithms, soltanolkotabi2017structured} that allows us to lower bound the loss function $\mathcal{L}(\vct{v},\mtx{W})$ in terms of how close $(\vct{v},\mtx{W})$ is to $(\vct{v}^*,\mtx{W}^*)$. We prove this Lemma in Appendix \ref{PRlemmapf}.
\begin{lemma}\label{PRlemma} Let $\mtx{A}\in\R^{d\times d}$ be a symmetric positive semidefinite matrix. Also for $i=1,2,\ldots,n$, let $\vct{x}_i$ be i.i.d.~random vectors distributed as $\mathcal{N}(\vct{0},\mtx{I}_d)$. Furthermore, assume 
\begin{align*}
n\ge c(\delta)d \log d,
\end{align*}
with $c(\delta)$ a constant only depending on $\delta$. Then for all matrices $\mtx{M}\in\R^{d\times d}$ we have
\begin{align*}
\nucnorm{\frac{1}{n}\sum_{i=1}^n \left(\vct{x}_i^T\mtx{A}\vct{x}_i\right)\vct{x}_i\vct{x}_i^T-\left(2\mtx{A}+\emph{trace}(\mtx{A})\mtx{I}\right)}\le\delta\cdot\nucnorm{\mtx{A}},
\end{align*}
holds with probability at least $1-10de^{-\gamma d}-8/d$. Here, $\nucnorm{\cdot}$ denotes the nuclear norm i.e.~sum of singular values of the input matrix.
\end{lemma}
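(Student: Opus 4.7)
The plan is to combine a PSD spectral reduction, nuclear/operator-norm duality, Hanson--Wright tail bounds, and the sub-exponential Bernstein machinery delivered by Lemma~\ref{Pre3}. Define the centered deviation
$\mtx{F}(\mtx{A}):=\frac{1}{n}\sum_{i=1}^{n}(\vct{x}_i^{T}\mtx{A}\vct{x}_i)\vct{x}_i\vct{x}_i^{T}-(2\mtx{A}+\mathrm{tr}(\mtx{A})\mtx{I})$. Isserlis' formula yields $\E[\mtx{F}(\mtx{A})]=\mtx{0}$ for every $\mtx{A}$, and $\mtx{F}$ is linear in $\mtx{A}$. Writing a PSD $\mtx{A}=\sum_{j}\lambda_j\vct{u}_j\vct{u}_j^{T}$ with $\lambda_j\ge 0$ and $\sum_j\lambda_j=\nucnorm{\mtx{A}}$, and applying the triangle inequality for the nuclear norm, reduces the lemma to a uniform rank-one estimate of the form $\sup_{\vct{u}\in S^{d-1}}\nucnorm{\mtx{F}(\vct{u}\vct{u}^{T})}\le\delta$.

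For each fixed unit vector $\vct{u}$ I dualize via $\nucnorm{\mtx{F}(\vct{u}\vct{u}^{T})}=\sup_{\mtx{N}=\mtx{N}^{T},\,\opnorm{\mtx{N}}\le 1}\langle\mtx{F}(\vct{u}\vct{u}^{T}),\mtx{N}\rangle$. For any fixed pair $(\vct{u},\mtx{N})$ this inner product is an i.i.d.\ average $\frac{1}{n}\sum_i Y_i$ of centered fourth-order Gaussian polynomials $Y_i=(\vct{u}^{T}\vct{x}_i)^{2}(\vct{x}_i^{T}\mtx{N}\vct{x}_i)-2\vct{u}^{T}\mtx{N}\vct{u}-\mathrm{tr}(\mtx{N})$. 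Treating $Y_i$ as a product of two sub-exponential Gaussian quadratic forms and applying Hanson--Wright to each factor (together with Cauchy--Schwarz in the cross term) yields a mixed sub-exponential/sub-Weibull tail $\mathbb{P}(|Y_i|>t)\le 2\exp(-c\min(t^{2}/A,\,t/B))$, with $A$ and $B$ polynomial in $\opnorm{\mtx{N}}$ and $\fronorm{\mtx{N}}$. Lemma~\ref{Pre3} converts this into a sub-exponential norm bound $\|Y_i\|_{\psi_1}\le 9\max(\sqrt{A/(2c)},B/c)$, and Bernstein for sub-exponentials then delivers the pointwise concentration $\mathbb{P}(|\tfrac{1}{n}\sum_i Y_i|>t)\le 2\exp(-c'n\min(t^{2},t))$.

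The remaining step is to promote this pointwise bound to a uniform one over $(\vct{u},\mtx{N})$ via an $\eps$-net and union bound. The sphere $S^{d-1}$ admits an $\eps$-net of cardinality $(3/\eps)^{d}$, and a routine Lipschitz estimate for $\vct{u}\mapsto\mtx{F}(\vct{u}\vct{u}^{T})$, whose Lipschitz constant is controlled by $\max_i\|\vct{x}_i\|$ (Gaussian tail, contributing the $8/d$ term in the failure probability), transfers the pointwise bound uniformly over $\vct{u}$ at cost $e^{O(d)}$ in the union bound. \emph{The main obstacle I foresee} is the analogous discretization in the test matrix $\mtx{N}$: a naive $\eps$-net of the symmetric operator-norm unit ball has cardinality $e^{O(d^{2})}$, which would push the sample complexity to $n\gtrsim d^{2}\log d$ rather than $d\log d$. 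To absorb this I plan to bypass the $\mtx{N}$-supremum entirely by invoking the non-commutative matrix Bernstein inequality directly on the symmetric sum $\frac{1}{n}\sum_i(\vct{u}^{T}\vct{x}_i)^{2}\vct{x}_i\vct{x}_i^{T}-(2\vct{u}\vct{u}^{T}+\mtx{I})$; this gives an operator-norm deviation for each $\vct{u}$ using only the sphere's $e^{O(d)}$ entropy, and the step from operator to nuclear norm is supplied by the PSD decomposition above, since the $\sum_j\lambda_j=\nucnorm{\mtx{A}}$ factor plays the role of the effective rank in the triangle inequality over rank-one pieces. Combining the two $e^{O(d)}$ net failures with the Lipschitz tail yields the claimed probability $1-10de^{-\gamma d}-8/d$.
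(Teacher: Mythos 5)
Your overall decomposition instinct (eigendecompose the PSD matrix $\mtx{A}$ into rank-one pieces, triangle inequality, concentrate per piece) is the same skeleton the paper uses, but you have introduced two substantial and unnecessary obstacles, and missed the simple route.

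First, the paper does \emph{not} need uniform control over all $\vct{u}\in S^{d-1}$. Lemma~\ref{PRlemma} is stated for a fixed matrix $\mtx{A}$, so after writing $\mtx{A}=\sum_{j=1}^d\lambda_j\vct{v}_j\vct{v}_j^T$ one only needs the rank-one concentration inequality for the $d$ specific eigenvectors $\vct{v}_1,\dots,\vct{v}_d$. A union bound over $d$ events suffices, and this is exactly where the factor of $d$ in the failure probability $1-10de^{-\gamma d}-8/d$ comes from (each application of the per-vector lemma fails with probability $10e^{-\gamma d}+8/d^2$, and there are $d$ of them). The $\eps$-net over the sphere, the Lipschitz transfer, the entropy bookkeeping, and the worry about $e^{O(d^2)}$ nets over the test matrix $\mtx{N}$ are all complications you created by trying to prove a uniform statement that is not asked for.

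Second, and more seriously, your rank-one reduction targets the wrong norm. You state that the lemma reduces to $\sup_{\vct{u}}\nucnorm{\mtx{F}(\vct{u}\vct{u}^T)}\le\delta$, but this is too strong and almost certainly false: once $\mtx{F}(\vct{u}\vct{u}^T)$ has concentrated, its singular values are each of magnitude roughly $\delta$, so its nuclear norm is $\Theta(d\delta)$, not $\delta$. Your final paragraph asserts that the step from operator to nuclear norm ``is supplied by the PSD decomposition,'' but the triangle inequality $\nucnorm{\sum_j\lambda_j\mtx{F}(\vct{v}_j\vct{v}_j^T)}\le\sum_j\lambda_j\nucnorm{\mtx{F}(\vct{v}_j\vct{v}_j^T)}$ would then yield $d\delta\nucnorm{\mtx{A}}$, a factor of $d$ worse than claimed. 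What is actually needed (and what the paper proves and subsequently uses in the body of the proof of Theorem~\ref{gthmrand}) is an \emph{operator-norm} bound on the left-hand side: per-vector one shows $\opnorm{\mtx{F}(\vct{v}_j\vct{v}_j^T)}\le\delta$, and the triangle inequality gives $\opnorm{\mtx{F}(\mtx{A})}\le\delta\sum_j\lambda_j=\delta\nucnorm{\mtx{A}}$. The $\nucnorm{\cdot}$ on the left-hand side of the lemma statement is an inconsistency in the paper (its own proof and its application both use $\opnorm{\cdot}$), and your proposal inherits and compounds that confusion.

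Finally, the per-vector concentration $\opnorm{\frac{1}{n}\sum_i(\vct{x}_i^T\vct{a})^2\vct{x}_i\vct{x}_i^T-(2\vct{a}\vct{a}^T+\twonorm{\vct{a}}^2\mtx{I})}\le\delta\twonorm{\vct{a}}^2$ is not rederived in the paper at all; it is cited as \cite[Lemma 7.4]{WF}. Your sketch of re-deriving it via Hanson--Wright plus matrix Bernstein glosses over the real difficulty: the summands $(\vct{u}^T\vct{x}_i)^2\vct{x}_i\vct{x}_i^T$ are heavy-tailed (fourth-order Gaussian chaos) and have unbounded operator norm, so matrix Bernstein cannot be applied directly. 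The standard fix is a truncation/conditioning argument, and the $8/d^2$ term in the per-vector failure probability is precisely the probability that the truncation event fails. Without addressing this, the matrix Bernstein step in your plan does not go through.
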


\section{Proof of global landscape results}
\label{proofs}
In this section we prove the two global theorems. 
\subsection{Derivative calculations for quadratic activations}
We begin by gathering the derivatives of the loss function for quadratic activations. We note that for quadratic activations the loss function \eqref{landscape} as a function of $\mtx{W}$ takes the form
\begin{align*}
\mathcal{L}(\mtx{W})=&\frac{1}{2n}\sum_{i=1}^n\left(y_i-\sum_{\ell=1}^k\vct{v}_\ell\abs{\langle\mtx{w}_\ell,\vct{x}_i\rangle}^2\right)^2,\\
=&\frac{1}{2n}\sum_{i=1}^n\left(\vct{x}_i^T\mtx{W}^T\diag(\vct{v})\mtx{W}\vct{x}_i-y_i\right)^2,\\
:=&\frac{1}{2n}\sum_{i=1}^nr_i^2.
\end{align*}
Using \eqref{gradmat} we have
\begin{align}
\label{gradmat2}
\nabla_{\mtx{W}} \mathcal{L}(\vct{v},\mtx{W})=2\mtx{D}_{\vct{v}}\mtx{W}\left(\frac{1}{n}\sum_{i=1}^nr_i\vct{x}_i\vct{x}_i^T\right).
\end{align}
Using \eqref{myJ} the Jacobian matrix is given by
\begin{align}
\label{myJ2}
\mtx{J}=2\mtx{D}_{\vct{v}}(\mtx{W}\mtx{X})\ast\mtx{X}.
\end{align}
Using this Jacobian matrix we can write the vectorized version of the gradient, i.e.~gradient with respect to vect$(\mtx{W})$ as
\begin{align}
\label{gradvec2}
\nabla_{\text{vect}(\mtx{W})} \mathcal{L}(\vct{v},\mtx{W})=\frac{1}{n} \J \r.
\end{align}
Also \eqref{partHess2} specialized to quadratic activations results in the partial Hessian
\begin{align}
\label{partHess3}
\left(\text{vect}(\mtx{U})\right)^T\nabla_{\mtx{W}}^2\mathcal{L}(\vct{v},\mtx{W}) \left(\text{vect}(\mtx{U})\right)=\frac{2}{n}\sum_{i=1}^nr_i\left(\vct{x}_i^T\mtx{U}^T\mtx{D}_{\vct{v}}\mtx{U}\vct{x}_i\right)+\frac{2}{n}\sum_{i=1}^n \left( \vct{x}_i^T\mtx{W}^T\mtx{D}_{\vct{v}}\mtx{U}\vct{x}_i\right)^2.
\end{align}
\subsection{Proof of Theorem \ref{globthm}}
We begin by proving a simple lemma.
\begin{lemma}\label{globalMlem}
Any point $\widetilde{\mtx{W}}\in\R^{k\times d}$ obeying
\begin{align}
\label{gradcondW}
\frac{1}{n}\sum_{i=1}^n\left(\vct{x}_i^T\widetilde{\mtx{W}}^T\diag(\vct{v})\widetilde{\mtx{W}}\vct{x}_i-y_i\right)\vct{x}_i\vct{x}_i^T=0,
\end{align}
is a global optimum of the loss function
\begin{align*}
\mathcal{L}(\mtx{W})=\frac{1}{2n}\sum_{i=1}^n\left(\vct{x}_i^T\mtx{W}^T\diag(\vct{v})\mtx{W}\vct{x}_i-y_i\right)^2.
\end{align*}
\end{lemma}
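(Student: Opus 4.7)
The plan is to lift the optimization to the space of symmetric $d\times d$ matrices, where the loss becomes convex, and then invoke the standard fact that a vanishing gradient in a convex problem implies global optimality. More precisely, with $D_{\vct{v}} = \diag(\vct{v})$ and $\phi(z)=z^2$ the identity $\vct{v}^T\phi(\mtx{W}\vct{x}_i) = \vct{x}_i^T\mtx{W}^T D_{\vct{v}}\mtx{W}\vct{x}_i = \langle \mtx{W}^T D_{\vct{v}}\mtx{W},\,\vct{x}_i\vct{x}_i^T\rangle$ shows that the loss factors as $\mathcal{L}(\mtx{W}) = f(\mtx{M})$ with $\mtx{M} = \mtx{W}^T D_{\vct{v}}\mtx{W}$, where
\begin{align*}
f(\mtx{M}) := \frac{1}{2n}\sum_{i=1}^n\bigl(\langle \vct{x}_i\vct{x}_i^T,\mtx{M}\rangle - y_i\bigr)^2
\end{align*}
is defined on the space of symmetric $d\times d$ matrices.

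First I would observe that $f$ is a convex quadratic in $\mtx{M}$, since it is a sum of squares of affine functionals of $\mtx{M}$. A direct computation gives
\begin{align*}
\nabla f(\mtx{M}) \;=\; \frac{1}{n}\sum_{i=1}^n \bigl(\vct{x}_i^T\mtx{M}\vct{x}_i - y_i\bigr)\vct{x}_i\vct{x}_i^T .
\end{align*}
Setting $\widetilde{\mtx{M}} := \widetilde{\mtx{W}}^T D_{\vct{v}}\widetilde{\mtx{W}}$, the hypothesis \eqref{gradcondW} is exactly the statement $\nabla f(\widetilde{\mtx{M}}) = \mathbf{0}$.

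Next I would invoke convexity of $f$: any critical point of a convex function is a global minimizer, so $f(\widetilde{\mtx{M}}) \le f(\mtx{M})$ for every symmetric matrix $\mtx{M}\in\R^{d\times d}$. In particular this holds for any $\mtx{M}$ of the form $\mtx{W}^T D_{\vct{v}}\mtx{W}$ with $\mtx{W}\in\R^{k\times d}$, yielding
\begin{align*}
\mathcal{L}(\widetilde{\mtx{W}}) \;=\; f(\widetilde{\mtx{M}}) \;\le\; f\bigl(\mtx{W}^T D_{\vct{v}}\mtx{W}\bigr) \;=\; \mathcal{L}(\mtx{W}) \qquad \text{for every } \mtx{W}\in\R^{k\times d},
\end{align*}
which is the desired global optimality.

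There is no real obstacle here: the whole argument rests on recognizing the convex lift $\mtx{M}=\mtx{W}^T D_{\vct{v}}\mtx{W}$, and the key conceptual point is that the condition \eqref{gradcondW} is \emph{stronger} than the vanishing of the Euclidean gradient $\nabla_{\mtx{W}}\mathcal{L}$ in \eqref{gradmat2} (the latter only requires the matrix in \eqref{gradcondW} to vanish after multiplication by $D_{\vct{v}}\widetilde{\mtx{W}}$). It is this strengthening — a vanishing gradient in the $\mtx{M}$ parametrization rather than merely in the $\mtx{W}$ parametrization — that upgrades a critical point into a global optimum, and subsequent sections will presumably show that at a bona fide local minimum or strict saddle of $\mathcal{L}(\mtx{W})$ the stronger condition \eqref{gradcondW} must in fact hold.
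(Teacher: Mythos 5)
Your proposal is correct and is essentially identical to the paper's proof: both lift to the convex quadratic $f(\mtx{M})$ on symmetric matrices via $\mtx{M}=\mtx{W}^T D_{\vct{v}}\mtx{W}$, observe that \eqref{gradcondW} is exactly $\nabla f(\widetilde{\mtx{M}})=\mtx{0}$, and conclude global optimality by convexity. The closing remark about \eqref{gradcondW} being stronger than $\nabla_{\mtx{W}}\mathcal{L}=\mtx{0}$ is accurate and anticipates precisely how the lemma is used in the rest of Theorem~\ref{globthm}.
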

\begin{proof}
Consider the optimization problem
\begin{align*}
\underset{\mtx{M}\in\R^{d\times d}}{\min}\text{ }f(\mtx{M}):=\frac{1}{2n}\sum_{i=1}^n\left(\vct{x}_i^T\mtx{M}\vct{x}_i-y_i\right)^2.
\end{align*}
The objective value is convex in terms of $\mtx{M}$ (in fact its a quadratic). Therefore, at any point where the gradient with respect to $\widetilde{\mtx{M}}$ is zero i.e.~$\nabla f(\widetilde{\mtx{M}})=0$, that point must be a global optimum of $f$. More specifically, any point $\widetilde{\mtx{M}}$ obeying
\begin{align}
\label{gradM}
\frac{1}{n}\sum_{i=1}^n\left(\vct{x}_i^T\widetilde{\mtx{M}}\vct{x}_i-y_i\right)\vct{x}_i\vct{x}_i^T=0,
\end{align}
is a global optimum of $f(\mtx{M})$. That is, for any arbitrary matrix $\mtx{M}\in\R^{d\times d}$ and any point $\widetilde{\mtx{M}}\in\R^{d\times d}$ obeying \eqref{gradM}  we have
\begin{align}
\label{globM}
f(\mtx{M})\ge f(\widetilde{\mtx{M}}).
\end{align}
Now note that if \eqref{gradcondW} holds for some $\widetilde{\mtx{W}}$, then \eqref{gradM} holds with $\widetilde{\mtx{M}}=\widetilde{\mtx{W}}^T$diag($\vct{v})\widetilde{\mtx{W}}$. Thus for any $\mtx{W}\in\R^{k\times d}$, using \eqref{globM} with $\mtx{M}=\mtx{W}^T$diag$(\vct{v})\mtx{W}$ we have
\begin{align*}
\mathcal{L}(\mtx{W})=f\left(\mtx{W}^T\diag(\vct{v})\mtx{W}\right)\ge f(\widetilde{\mtx{M}})=\mathcal{L}(\widetilde{\mtx{W}}).
\end{align*}
Thus any $\widetilde{\mtx{W}}$ obeying \eqref{gradcondW} is a global optima of $\mathcal{L}(\mtx{W})$, concluding the proof.
\end{proof}
\subsubsection{Proof of no spurious local minima and strict saddle property}
We will prove the first two conclusions of the theorem (i.e.~no spurious local optima and saddles have a direction of negative curvature) simultaneously. To show this note that since the function $\mathcal{L}$ is twice differentiable all local optima or saddles that do not have a direction of negative curvature obey
\begin{align}
\label{secord}
\nabla \mathcal{L}(\mtx{W})=\mtx{0}\quad\text{and}\quad \nabla^2\mathcal{L}(\mtx{W})\succeq \mtx{0}.
\end{align}
We will prove that all points obeying \eqref{secord} are a global optima of $\mathcal{L}(\mtx{W})$. To this aim let us define the sets
\begin{align*}
\mathcal{S}_{+}=\{i: \vct{v}_i>0\}\quad\text{and}\quad\mathcal{S}_{-}=\{i: \vct{v}_i<0\},
\end{align*}
i.e.~the indices of the positive and negative entries of $\vct{v}$. Now note that two cases are possible
\begin{itemize}
\item \textbf{Case I:} At least one of the sub-matrices $\left(\mtx{D}_{\vct{v}}\mtx{W}\right)_{\mathcal{S}_{+}}$ and  $\left(\mtx{D}_{\vct{v}}\mtx{W}\right)_{\mathcal{S}_{-}}$ is not rank deficient. That is,
\begin{align}
\label{fullrank}
\text{rank}\left(\mtx{D}_{\vct{v}}\mtx{W}\right)_{\mathcal{S}_{+}}=d\quad\text{or}\quad \text{rank}\left(\mtx{D}_{\vct{v}}\mtx{W}\right)_{\mathcal{S}_{-}}=d.
\end{align}
\item \textbf{Case II:} Both of the sub-matrices $\left(\mtx{D}_{\vct{v}}\mtx{W}\right)_{\mathcal{S}_{+}}$ and  $\left(\mtx{D}_{\vct{v}}\mtx{W}\right)_{\mathcal{S}_{-}}$ are rank deficient. That is,
\begin{align}
\label{cond2}
\text{rank}\left(\left(\mtx{D}_{\vct{v}}\mtx{W}\right)_{\mathcal{S}_{+}}\right)<d\quad\text{or}\quad \text{rank}\left(\left(\mtx{D}_{\vct{v}}\mtx{W}\right)_{\mathcal{S}_{-}}\right)<d.
\end{align}
\end{itemize}
In both cases we will show that for any $\mtx{W}$ obeying \eqref{secord}, we have
\begin{align*}
\sum_{i=1}^n r_i\vct{x}_i\vct{x}_i^T=\frac{1}{n}\sum_{i=1}^n\left(\vct{x}_i^T\mtx{W}^T\diag(\vct{v})\mtx{W}\vct{x}_i-y_i\right)\vct{x}_i\vct{x}_i^T=0.
\end{align*}
The latter together with Lemma \ref{globalMlem} immediately implies that any $\mtx{W}$ obeying \eqref{secord} is a global optimum, proving that there are no spurious local minima and no saddles which do not have a direction of strict negative curvature. We now proceed to show that indeed $\sum_{i=1}^n r_i\vct{x}_i\vct{x}_i^T=0$ holds in both cases mentioned above.

\noindent\textbf{Case I: } Note that \eqref{fullrank} together with the fact that $k\ge d$ implies that the matrix $\mtx{D}_{\vct{v}}\mtx{W}$ has a left inverse i.e.~there exists a matrix $\mtx{M}\in\R^{d\times k}$ such that $\mtx{M}\mtx{D}_{\vct{v}}\mtx{W}=\mtx{I}$. Furthermore, note that by \eqref{gradmat2}, $\nabla \mathcal{L}(\mtx{W})=\mtx{0}$ is equivalent to
\begin{align*}
\mtx{D}_{\vct{v}}\mtx{W}\left(\frac{1}{n}\sum_{i=1}^nr_i\vct{x}_i\vct{x}_i^T\right)=0.
\end{align*}
Multiplying both sides of the above equality on the left by $\mtx{M}$ we conclude that
\begin{align*}
\frac{1}{n}\sum_{i=1}^nr_i\vct{x}_i\vct{x}_i^T=0,
\end{align*}
concluding the proof in case I.

\noindent\textbf{Case II: } First, note that for any $\mtx{W}$ obeying $\nabla^2\mathcal{L}(\mtx{W})\succeq \mtx{0}$ and any $\mtx{U}\in\R^{k\times d}$ by \eqref{partHess2} we have
\begin{align}
\label{69}
0\le&\frac{1}{2} \left(\text{vect}(\mtx{U})\right)^T\nabla_{\mtx{W}}^2\mathcal{L}(\vct{v},\mtx{W}) \left(\text{vect}(\mtx{U})\right),\nonumber\\
=&\frac{1}{n}\sum_{i=1}^nr_i\left(\vct{x}_i^T\mtx{U}^T\mtx{D}_{\vct{v}}\mtx{U}\vct{x}_i\right)+\frac{1}{n}\sum_{i=1}^n \left( \vct{x}_i^T\mtx{W}^T\mtx{D}_{\vct{v}}\mtx{U}\vct{x}_i\right)^2.
\end{align}
Let us choose $\mtx{U}$ of the form $\mtx{U}=\vct{a}\vct{b}^T$ with $\vct{a}\in\R^k$ obeying $\mtx{W}^T\mtx{D}_{\vct{v}}\vct{a}=0$ (i.e.~$\mtx{D}_{\vct{v}}\vct{a}\in$Null($\mtx{W}^T$)) and $\vct{b}\in\R^d$ an arbitrary vector. Plugging such a $\mtx{U}$ into \eqref{69} we conclude that
\begin{align}
\label{ineq}
0\le&\frac{1}{n}\sum_{i=1}^nr_i\left(\vct{x}_i^T\mtx{U}^T\mtx{D}_{\vct{v}}\mtx{U}\vct{x}_i\right)+\frac{1}{n}\sum_{i=1}^n \left( \vct{x}_i^T\mtx{W}^T\mtx{D}_{\vct{v}}\mtx{U}\vct{x}_i\right)^2,\nonumber\\
=&(\vct{a}^T\mtx{D}_{\vct{v}}\vct{a})\vct{b}^T\left(\sum_{i=1}^nr_i\vct{x}_i\vct{x}_i^T\right)\vct{b}+\frac{1}{n}\sum_{i=1}^n \left( \vct{x}_i^T\mtx{W}^T\mtx{D}_{\vct{v}}\vct{a}\vct{b}^T\vct{x}_i\right)^2,\nonumber\\
=&\left(\vct{a}^T\mtx{D}_{\vct{v}}\vct{a}\right)\vct{b}^T\left(\sum_{i=1}^nr_i\vct{x}_i\vct{x}_i^T\right)\vct{b},
\end{align}
holds for all $\vct{b}\in\R^d$ and all $\vct{a}\in\R^k$ obeying $\mtx{W}^T\mtx{D}_{\vct{v}}\vct{a}=0$.
 Next, we note that by the assumptions of Theorem \ref{globthm} $\abs{\mathcal{S}_{+}}\ge d$ and $\abs{\mathcal{S}_{+}}\ge d$. This together with \eqref{cond2} implies that there exists non-zero vectors $\vct{u},\vct{w}\in\R^k$ with non-zero entries belonging to $\mathcal{S}_{+}$ and $\mathcal{S}_{-}$ such that
\begin{align*}
\mtx{W}^T\mtx{D}_{\vct{v}}\vct{u}=\mtx{W}^T\mtx{D}_{\vct{v}}\vct{w}=0.
\end{align*}
Furthermore, 
\begin{align*}
\vct{u}^T\mtx{D}_{\vct{v}}\vct{u}=\sum_{i\in\mathcal{S}_{+}} v_iu_i^2>0.
\end{align*}
Thus using \eqref{ineq} with $\vct{a}=\vct{u}$ we conclude that for all $\vct{b}\in\R^d$ we have
\begin{align}
\label{myeq1}
\vct{b}^T\left(\sum_{i=1}^nr_i\vct{x}_i\vct{x}_i^T\right)\vct{b}\ge 0.
\end{align}
Similarly, 
\begin{align*}
\vct{w}^T\mtx{D}_{\vct{v}}\vct{w}=\sum_{i\in\mathcal{S}_{-}} v_iw_i^2< 0.
\end{align*}
Thus using \eqref{ineq} with $\vct{a}=\vct{w}$ we conclude that for all $\vct{b}\in\R^d$ we have
\begin{align}
\label{myeq2}
\vct{b}^T\left(\sum_{i=1}^nr_i\vct{x}_i\vct{x}_i^T\right)\vct{b}\le 0.
\end{align}
Equations \eqref{myeq1} and \eqref{myeq2} together imply that 
\begin{align*}
\sum_{i=1}^nr_i\vct{x}_i\vct{x}_i^T=0,
\end{align*}
concluding the proof in case II.
\subsubsection{Proof of zero training error}
Note that in the previous section we proved that all global optima of $\mathcal{L}(\mtx{W})$ obey
\begin{align*}
\sum_{i=1}^nr_i\vct{x}_i\vct{x}_i^T=0.
\end{align*}
The latter identity can be rewritten in the form
\begin{align}
\label{tempglob}
\left(\mtx{X}\ast\mtx{X}\right)\vct{r}=0.
\end{align}
Using Lemma \ref{XkrX}, $\sigma_{\min}(\mtx{X}\ast\mtx{X})>0$ 
for almost every data input matrix $\mtx{X}\in \R^{d\times n}$,
as long as $n\le c d^2$ with $c$ a fixed numerical constant. This combined with \eqref{tempglob} implies that $\vct{r}=\vct{0}$, completing the proof for zero training error.


\subsection{Proof of Theorem \ref{gthmrand}}
First note that since we assume both $\vct{v}$ and $\vct{v}^*$ have the same sign pattern we can without loss of generality assume both $\vct{v}$ and $\vct{v}^*$ have non-negative entries. We will first prove that there are no spurious local minima, and all saddles have a direction of negative curvature, and all global optima have zero loss value. We then proceed to show all approximate local minima have small objective value in Section \ref{pfneg}.
\subsubsection{Proof of no spurious local minima, zero training error, and strict saddle}
We begin by noting that the function $\mathcal{L}$ is twice differentiable and thus all local optima or saddle points that do not have a direction of negative curvature obey
\begin{align}
\label{secordp}
\nabla \mathcal{L}(\mtx{W})=\mtx{0}\quad\text{and}\quad \nabla^2\mathcal{L}(\mtx{W})\succeq\mtx{0}.
\end{align}
Note that $\nabla \mathcal{L}(\mtx{W})=\mtx{0}$ implies that
\begin{align}
\label{tmppp2}
\mtx{D}_{\vct{v}}\mtx{W}\left(\sum_{i=1}^n r_i\vct{x}_i\vct{x}_i^T\right)=\mtx{0}.
\end{align}
Define the set
\begin{align*}
\mathcal{S}=\{i: v_i>0\}.
\end{align*}
Since we assume $k\ge d$, two cases can occur.
\begin{itemize}
\item \textbf{Case I:} rank($\left(\mtx{D}_{\vct{v}}\mtx{W}\right)_{\mathcal{S}})=d$.
\item \textbf{Case II:} rank($\left(\mtx{D}_{\vct{v}}\mtx{W}\right)_{\mathcal{S}})<d$.
\end{itemize}
In both cases we will show that any $\mtx{W}$ obeying \eqref{secord}, must be a global optimum. 

\noindent\textbf{Case I: }rank($\left(\mtx{D}_{\vct{v}}\mtx{W}\right)_{\mathcal{S}})=d$.\\
In this case the matrix $\mtx{D}_{\vct{v}}\mtx{W}$ has a left inverse, i.e.~there exists a $\mtx{M}\in\R^{d\times k}$ such that $\mtx{M}\mtx{D}_{\vct{v}}\mtx{W}=\mtx{I}$. Multiplying both sides of \eqref{tmppp2} on the left by $\mtx{M}$ yields 
\begin{align}
\label{tmpthm22}
\sum_{i=1}^n r_i\vct{x}_i\vct{x}_i^T=\left(\mtx{X}\ast\mtx{X}\right)\vct{r}=\frac{1}{n}\sum_{i=1}^n\left(\vct{x}_i^T\mtx{W}^T\diag(\vct{v})\mtx{W}\vct{x}_i-y_i\right)\vct{x}_i\vct{x}_i^T=0.
\end{align}
The latter together with Lemma \ref{globalMlem} immediately implies that any $\mtx{W}$ obeying \eqref{secordp} and the condition of Case I is a global optimum, proving that there are no spurious local minima. Furthermore using Lemma \ref{XkrX}, $\sigma_{\min}(\mtx{X}\ast\mtx{X})>0$ 
as long as $n\le c d^2$ with 
$c$ a fixed numerical constant. This combined with \eqref{tmpthm22} implies that $\vct{r}=\vct{0}$, completing the proof for zero training error in this case. 

\noindent\textbf{Case II: }rank($\left(\mtx{D}_{\vct{v}}\mtx{W}\right)_{\mathcal{S}})<d$.\\
First, note that for any $\mtx{W}$ obeying $\nabla^2\mathcal{L}(\mtx{W})\succeq \mtx{0}$ and any $\mtx{U}\in\R^{k\times d}$ by \eqref{partHess2} we have
\begin{align}
\label{692}
0\le&\frac{1}{2} \left(\text{vect}(\mtx{U})\right)^T\nabla_{\mtx{W}}^2\mathcal{L}(\vct{v},\mtx{W}) \left(\text{vect}(\mtx{U})\right),\nonumber\\
=&\frac{1}{n}\sum_{i=1}^nr_i\left(\vct{x}_i^T\mtx{U}^T\mtx{D}_{\vct{v}}\mtx{U}\vct{x}_i\right)+\frac{1}{n}\sum_{i=1}^n \left( \vct{x}_i^T\mtx{W}^T\mtx{D}_{\vct{v}}\mtx{U}\vct{x}_i\right)^2.
\end{align}
In this case since $\left(\mathcal{D}_{\vct{v}}\right)_{\mathcal{S}}$ is invertible and $\left(\mtx{D}_{\vct{v}}\mtx{W}\right)_{\mathcal{S}}$ is rank deficient there exists a nonzero vector $\vct{a}\in\R^k$ supported on $\mathcal{S}$ obeying $\mtx{W}^T\mtx{D}_{\vct{v}}\vct{a}=0$ (i.e.~$\mtx{D}_{\vct{v}}\vct{a}\in$Null($\mtx{W}^T$)). Thus plugging $\mtx{U}$ of the form $\mtx{U}=\vct{a}\vct{b}^T$ with $\vct{b}\in\R^d$ in \eqref{692} we conclude that
\begin{align}
\label{ineqthm22}
0\le&\frac{1}{n}\sum_{i=1}^nr_i\left(\vct{x}_i^T\mtx{U}^T\mtx{D}_{\vct{v}}\mtx{U}\vct{x}_i\right)+\frac{1}{n}\sum_{i=1}^n \left( \vct{x}_i^T\mtx{W}^T\mtx{D}_{\vct{v}}\mtx{U}\vct{x}_i\right)^2,\nonumber\\
=&(\vct{a}^T\mtx{D}_{\vct{v}}\vct{a})\vct{b}^T\left(\sum_{i=1}^nr_i\vct{x}_i\vct{x}_i^T\right)\vct{b}+\frac{1}{n}\sum_{i=1}^n \left( \vct{x}_i^T\mtx{W}^T\mtx{D}_{\vct{v}}\vct{a}\vct{b}^T\vct{x}_i\right)^2,\nonumber\\
=&\left(\vct{a}^T\mtx{D}_{\vct{v}}\vct{a}\right)\vct{b}^T\left(\sum_{i=1}^nr_i\vct{x}_i\vct{x}_i^T\right)\vct{b},
\end{align}
holds for all $\vct{b}\in\R^d$. Also note that
\begin{align*}
\vct{a}^T\mtx{D}_{\vct{v}}\vct{a}=\sum_{i\in\mathcal{S}} v_ia_i^2>0.
\end{align*}
The latter together with \eqref{ineqthm22} implies that 
\begin{align}
\label{rxxpos}
\sum_{i=1}^n r_i\vct{x}_i\vct{x}_i^T\succeq \mtx{0}.
\end{align}
Thus for any $\mtx{W}$ obeying \eqref{secordp} and the condition of Case II we have
\begin{align*}
\mathcal{L}(\mtx{W})&=\frac{1}{2n}\langle \mtx{W}^T\mtx{D}_{\vct{v}}\mtx{W}-{\mtx{W}^*}^T\mtx{D}_{\vct{v}^*}\mtx{W}^*,\sum_{i=1}^n r_i\vct{x}_i\vct{x}_i^T\rangle,\\
&\overset{(a)}{=}-\frac{1}{2n}\langle{\mtx{W}^*}^T\mtx{D}_{\vct{v}^*}\mtx{W}^*,\sum_{i=1}^n r_i\vct{x}_i\vct{x}_i^T\rangle,\\
&\overset{(b)}{\le} 0.
\end{align*}
where (a) follows from \eqref{tmppp2} and (b) follows from \eqref{rxxpos} combined with the fact that $\vct{v}^*$ has nonnegative entries. Thus, $\mathcal{L}(\mtx{W})=0$ and therefore such a $\mtx{W}$ must be a global optimum. This completes the proof in Case II.

\subsubsection{Bounding the objective value for approximate local minima}
\label{pfneg}
Assume $\mtx{W}$ is an approximate local minima. That is it obeys
\begin{align}
\label{secordp3}
\fronorm{\nabla \mathcal{L}(\mtx{W})}\le\epsilon_g\quad\text{and}\quad \nabla^2\mathcal{L}(\mtx{W}_s)\succeq -\epsilon_{H}\mtx{I}.
\end{align}
For such a point by Cauchy-Schwarz we have
\begin{align}
\label{trivial}
\langle\mtx{W},\nabla \mathcal{L}(\mtx{W})\rangle\le \epsilon_g\fronorm{\mtx{W}}.
\end{align}
Furthermore,  for such a point using \eqref{partHess2}
\begin{align}
\label{6923}
-\epsilon_{H}\fronorm{\mtx{U}}^2\le&\frac{1}{2} \left(\text{vect}(\mtx{U})\right)^T\nabla_{\mtx{W}}^2\mathcal{L}(\vct{v},\mtx{W}) \left(\text{vect}(\mtx{U})\right),\nonumber\\
=&\frac{1}{n}\sum_{i=1}^nr_i\left(\vct{x}_i^T\mtx{U}^T\mtx{D}_{\vct{v}}\mtx{U}\vct{x}_i\right)+\frac{1}{n}\sum_{i=1}^n \left( \vct{x}_i^T\mtx{W}^T\mtx{D}_{\vct{v}}\mtx{U}\vct{x}_i\right)^2,
\end{align}
holds for all $\mtx{U}\in\R^{k\times d}$. Since $\vct{v}=\nu\vct{1}$ there exists a unit norm, non-zero vector $\vct{a}\in\R^k$ supported on $\mathcal{S}$ such that $\mtx{W}^T\mtx{D}_{\vct{v}}\vct{a}=0$. We now set $\mtx{U}=\vct{a}\vct{b}^T$ in \eqref{6923} with $\vct{b}\in\R^d$. We conclude that 
\begin{align}
\label{tmpsecthm}
\left(\vct{a}^T\mtx{D}_{\vct{v}}\vct{a}\right)\left(\vct{b}^T\left(\frac{1}{n}\sum_{i=1}^n r_i\vct{x}_i\vct{x}_i^T\right)\vct{b}\right)\ge -\epsilon_{H}\twonorm{\vct{b}}^2.
\end{align}
holds for all $\vct{b}\in\R^d$. Also note that
\begin{align*}
\vct{a}^T\mtx{D}_{\vct{v}}\vct{a}=\nu\twonorm{\vct{a}}^2=\nu.
\end{align*}
Thus, \eqref{tmpsecthm} implies that
\begin{align}
\label{condrxx}
\frac{1}{n}\sum_{i=1}^n r_i\vct{x}_i\vct{x}_i^T\succeq -\frac{\epsilon_{H}}{\nu}\mtx{I}.
\end{align}
Now note that the gradient is equal to
\begin{align}
\label{gradthm22}
\nabla\mathcal{L}(\mtx{W})=\mtx{D}_{\vct{v}}\mtx{W}\left(\sum_{i=1}^n r_i\vct{x}_i\vct{x}._i^T\right).
\end{align}
Thus using \eqref{gradthm22}, \eqref{trivial}, and \eqref{condrxx} we conclude that for any point $\mtx{W}$ obeying \eqref{secordp3} we have
\begin{align}
\label{tmpbndLW}
\mathcal{L}(\mtx{W})&=\frac{1}{2n}\langle \mtx{W}^T\mtx{D}_{\vct{v}}\mtx{W}-{\mtx{W}^*}^T\mtx{D}_{\vct{v}^*}\mtx{W}^*,\sum_{i=1}^n r_i\vct{x}_i\vct{x}_i^T\rangle,\nonumber\\
&\overset{(a)}{=}\langle\mtx{W},\nabla\mathcal{L}(\mtx{W})\rangle-\frac{1}{2n}\langle{\mtx{W}^*}^T\mtx{D}_{\vct{v}^*}\mtx{W}^*,\sum_{i=1}^n r_i\vct{x}_i\vct{x}_i^T\rangle,\nonumber\\
&\overset{(b)}{\le} \epsilon_g\fronorm{\mtx{W}}-\frac{1}{2}\langle{\mtx{W}^*}^T\mtx{D}_{\vct{v}^*}\mtx{W}^*,\frac{1}{n}\sum_{i=1}^n r_i\vct{x}_i\vct{x}_i^T\rangle,\nonumber\\
&\overset{(c)}{\le}\epsilon_g\fronorm{\mtx{W}}+\frac{\epsilon_{H}}{2\nu}\nucnorm{{\mtx{W}^*}^T\mtx{D}_{\vct{v}^*}\mtx{W}^*}.
\end{align}
Here, (a) follows from \eqref{gradthm22}, (b) from \eqref{trivial} and (c) from Holder's inequality combined with \eqref{condrxx}.

We proceed by showing that for a point $\mtx{W}$ obeying $\fronorm{\nabla \mathcal{L}(\mtx{W})}\le\epsilon_g$ it's Frobenius norm norm is also sufficiently small. To this aim note that for such a point by Cauchy-Schwarz have
\begin{align}
\label{simple}
\langle\nabla \mathcal{L}(\mtx{W}),\mtx{W}\rangle\le \Delta\fronorm{\mtx{W}}.
\end{align}
Also
\begin{align}
\label{myineqgg}
\langle\nabla \mathcal{L}(\mtx{W}),\mtx{W}\rangle=&\frac{1}{n}\langle\mtx{D}_{\vct{v}}\mtx{W}\sum_i \left(\vct{x}_i^T\left(\mtx{W}^T\mtx{D}_{\vct{v}}\mtx{W}-{\mtx{W}^*}^T\mtx{D}_{\vct{v}^*}\mtx{W}^*\right)\vct{x}_i\right)\vct{x}_i\vct{x}_i^T,\mtx{W}\rangle\nonumber\\
=&\frac{1}{n}\sum_{i=1}^n \left(\vct{x}_i^T\mtx{W}^T\mtx{D}_{\vct{v}}\mtx{W}\vct{x}_i\right)^2-\frac{1}{n}\sum_{i=1}^n\left(\vct{x}_i^T\mtx{W}^T\mtx{D}_{\vct{v}}\mtx{W}\vct{x}_i\right)\left(\vct{x}_i^T{\mtx{W}^*}^T\mtx{D}_{\vct{v}^*}\mtx{W}^*\vct{x}_i\right)\nonumber\\
\ge&\left(\frac{1}{n}\sum_{i=1}^n\vct{x}_i^T\mtx{W}^T\mtx{D}_{\vct{v}}\mtx{W}\vct{x}_i\right)^2-\frac{1}{n}\sum_{i=1}^n\left(\vct{x}_i^T\mtx{W}^T\mtx{D}_{\vct{v}}\mtx{W}\vct{x}_i\right)\left(\vct{x}_i^T{\mtx{W}^*}^T\mtx{D}_{\vct{v}^*}\mtx{W}^*\vct{x}_i\right).
\end{align}
Using Holder's inequality for matrices we have
\begin{align*}
\frac{1}{n}\sum_{i=1}^n\vct{x}_i^T\mtx{W}^T\mtx{D}_{\vct{v}}\mtx{W}\vct{x}_i-\text{trace}\left(\mtx{W}^T\mtx{D}_{\vct{v}}\mtx{W}\right)=&\langle \mtx{W}^T\mtx{D}_{\vct{v}}\mtx{W},\frac{1}{n}\sum_{i=1}^n\vct{x}_i\vct{x}_i^T-\mtx{I}\rangle\\
\le&\text{trace}(\mtx{W}^T\mtx{D}_{\vct{v}}\mtx{W})\opnorm{\frac{1}{n}\sum_{i=1}^n\vct{x}_i\vct{x}_i^T-\mtx{I}}\\
=&\text{trace}(\mtx{W}^T\mtx{D}_{\vct{v}}\mtx{W})\opnorm{\frac{1}{n}\sum_{i=1}^n\vct{x}_i\vct{x}_i^T-\mtx{I}}.
\end{align*}
Thus by standard concentration of sample covariance matrices we conclude that for any $\delta>0$, as long as $n\ge \frac{c}{\delta^2} d$ for a fixed numerical constant $c$, then for all $\mtx{W}$
\begin{align*}
\abs{\frac{1}{n}\sum_{i=1}^n\vct{x}_i^T\mtx{W}^T\mtx{D}_{\vct{v}}\mtx{W}\vct{x}_i-\text{trace}\left(\mtx{W}^T\mtx{D}_{\vct{v}}\mtx{W}\right)}\le \delta\cdot\text{trace}(\mtx{W}^T\mtx{D}_{\vct{v}}\mtx{W}),
\end{align*}
holds with probability at least $1-2e^{-b d}$, for a fixed constant $b > 0$. Thus with high probability for all $\mtx{W}$ we have
\begin{align*}
\frac{1}{n}\sum_{i=1}^n\vct{x}_i^T\mtx{W}^T\mtx{D}_{\vct{v}}\mtx{W}\vct{x}_i\ge (1-\delta)\text{trace}(\mtx{W}^T\mtx{D}_{\vct{v}}\mtx{W}).
\end{align*}
Plugging the latter into \eqref{myineqgg} we conclude that
\begin{align}
\label{myineq2gg}
\langle\nabla \mathcal{L}(\mtx{W}),\mtx{W} \rangle \ge (1-\delta)^2\left(\text{trace}(\mtx{W}^T\mtx{D}_{\vct{v}}\mtx{W})\right)^2-\frac{1}{n}\sum_{i=1}^n\left(\vct{x}_i^T\mtx{W}^T\mtx{D}_{\vct{v}}\mtx{W}\vct{x}_i\right)\left(\vct{x}_i^T{\mtx{W}^*}^T\mtx{D}_{\vct{v}^*}\mtx{W}^*\vct{x}_i\right).
\end{align}
We now apply Lemma \ref{PRlemma} to conclude that as long as $n\ge c(\delta) d\log d$ 
\begin{multline*}
\opnorm{\frac{1}{n}\sum_{i=1}^n\left(\vct{x}_i^T{\mtx{W}^*}^T\mtx{D}_{\vct{v}^*}\mtx{W}^*\vct{x}_i\right)\vct{x}_i\vct{x}_i^T-\left(2{\mtx{W}^*}^T\mtx{D}_{\vct{v}^*}\mtx{W}^*+\text{trace}\left({\mtx{W}^*}^T\mtx{D}_{\vct{v}^*}\mtx{W}^*\right)\mtx{I}\right)}\\\le\delta\cdot\text{trace}\left({\mtx{W}^*}^T\mtx{D}_{\vct{v}^*}\mtx{W}^*\right),
\end{multline*}
holds with probability at least $1-10de^{-b d}-8/d$. The latter implies that with high probability for all $\mtx{W}$ we have
\begin{align*}
\Bigg|\frac{1}{n}\sum_{i=1}^n\left(\vct{x}_i^T\mtx{W}^T\mtx{D}_{\vct{v}}\mtx{W}\vct{x}_i\right)&\left(\vct{x}_i^T{\mtx{W}^*}^T\mtx{D}_{\vct{v}^*}\mtx{W}^*\vct{x}_i\right)\\
&-\left(2\langle{\mtx{W}}^T\mtx{D}_{\vct{v}}\mtx{W}  ,{\mtx{W}^*}^T\mtx{D}_{\vct{v}^*}\mtx{W}^* \rangle+\text{trace}\left({\mtx{W}^*}^T\mtx{D}_{\vct{v}^*}\mtx{W}^*\right)\text{trace}\left({\mtx{W}}^T\mtx{D}_{\vct{v}}\mtx{W}\right)\right)\bigg|\\
\le& \delta\cdot\text{trace}\left({\mtx{W}^*}^T\mtx{D}_{\vct{v}^*}\mtx{W}^*\right)\text{trace}\left({\mtx{W}}^T\mtx{D}_{\vct{v}}\mtx{W}\right).
\end{align*}
Plugging the latter into \eqref{myineq2gg} and we conclude that with high probability for all $\mtx{W}$ we have
\begin{align*}
\langle\nabla \mathcal{L}(\mtx{W}),\mtx{W} \rangle \ge& (1-\delta)^2\left(\text{trace}(\mtx{W}^T\mtx{D}_{\vct{v}}\mtx{W})\right)^2-\frac{1}{n}\sum_{i=1}^n\left(\vct{x}_i^T\mtx{W}^T\mtx{D}_{\vct{v}}\mtx{W}\vct{x}_i\right)\left(\vct{x}_i^T{\mtx{W}^*}^T\mtx{D}_{\vct{v}^*}\mtx{W}^*\vct{x}_i\right).\\
\ge&(1-\delta)^2\left(\text{trace}(\mtx{W}^T\mtx{D}_{\vct{v}}\mtx{W})\right)^2-\delta\cdot\text{trace}\left({\mtx{W}^*}^T\mtx{D}_{\vct{v}^*}\mtx{W}^*\right)\text{trace}\left({\mtx{W}}^T\mtx{D}_{\vct{v}}\mtx{W}\right)\\
&-\left(2\langle{\mtx{W}}^T\mtx{D}_{\vct{v}}\mtx{W}  ,{\mtx{W}^*}^T\mtx{D}_{\vct{v}^*}\mtx{W}^* \rangle+\text{trace}\left({\mtx{W}^*}^T\mtx{D}_{\vct{v}^*}\mtx{W}^*\right)\text{trace}\left({\mtx{W}}^T\mtx{D}_{\vct{v}}\mtx{W}\right)\right)\\
\ge&(1-\delta)^2\left(\text{trace}(\mtx{W}^T\mtx{D}_{\vct{v}}\mtx{W})\right)^2-(3+\delta)\cdot\text{trace}\left({\mtx{W}^*}^T\mtx{D}_{\vct{v}^*}\mtx{W}^*\right)\text{trace}\left({\mtx{W}}^T\mtx{D}_{\vct{v}}\mtx{W}\right).
\end{align*}
Plugging the latter into \eqref{simple} implies that
\begin{align*}
(1-\delta)^2\left(\text{trace}(\mtx{W}^T\mtx{D}_{\vct{v}}\mtx{W})\right)^2-(3+\delta)\cdot\text{trace}\left({\mtx{W}^*}^T\mtx{D}_{\vct{v}^*}\mtx{W}^*\right)\text{trace}\left({\mtx{W}}^T\mtx{D}_{\vct{v}}\mtx{W}\right)\le \epsilon_g\fronorm{\mtx{W}}.
\end{align*}
Using the assumption that $\vct{v}=\nu\vct{1}$, the latter is equivalent to
\begin{align*}
&\fronorm{\mtx{W}}\left((1-\delta)^2\nu^2\fronorm{\mtx{W}}^2-\nu(3+\delta)\text{trace}\left({\mtx{W}^*}^T\mtx{D}_{\vct{v}^*}\mtx{W}^*\right)\right)\\
&=(1-\delta)^2\nu^2\fronorm{\mtx{W}}^3-(3+\delta)\nu\fronorm{\mtx{W}}\text{trace}\left({\mtx{W}^*}^T\mtx{D}_{\vct{v}^*}\mtx{W}^*\right)\le \epsilon_g.
\end{align*}
From the latter we can conclude that for all $a>0$ we have
\begin{align*}
\fronorm{\mtx{W}}\le& \max\left(\sqrt{\frac{\epsilon_g}{a\nu^2(1-\delta)^2}+\frac{3+\delta}{\nu(1-\delta)^2}\text{trace}\left({\mtx{W}^*}^T\mtx{D}_{\vct{v}^*}\mtx{W}^*\right)}, a\right)
\end{align*}
Setting $a=\frac{\epsilon_g}{\nu(1-\delta)^2}$ and $\delta=1/2$, the latter inequality implies that
\begin{align*}
\fronorm{\mtx{W}}\le \max\left(\frac{1}{\sqrt{\nu}}\sqrt{1+14\text{trace}\left({\mtx{W}^*}^T\mtx{D}_{\vct{v}^*}\mtx{W}^*\right)},\frac{4}{\nu}\epsilon_g\right).
\end{align*}
Plugging the latter into \eqref{tmpbndLW} we conclude that for all $\mtx{W}$ obeying \eqref{secordp3}
\begin{align*}
\mathcal{L}(\mtx{W})\le&\epsilon_g\fronorm{\mtx{W}}+\frac{\epsilon_{H}}{2\nu}\nucnorm{{\mtx{W}^*}^T\mtx{D}_{\vct{v}^*}\mtx{W}^*},\\
\le&\frac{\epsilon_g}{\sqrt{\nu}}\max\left(\sqrt{1+14\nucnorm{{\mtx{W}^*}^T\mtx{D}_{\vct{v}^*}\mtx{W}^*}},4\frac{\epsilon_g}{\sqrt{\nu}}\right)+\frac{\epsilon_{H}}{2\nu}\nucnorm{{\mtx{W}^*}^T\mtx{D}_{\vct{v}^*}\mtx{W}^*},
\end{align*}
holds with high probability, concluding the proof.

\section{Proof of local convergence results}
In this section we prove Theorem \ref{localthm}. We begin by explaining a crucial component of our proof which involves bounding the spectrum of the Jacobian matrix. These results are the subject of the next section with the corresponding proofs appearing in Section \ref{proofJ}. We then utilize these intermediate results to finalize the proof of Theorem \ref{localthm} in Section \ref{pfthmloc}.

\subsection{Key lemmas on spectrum of the Jacobian Matrix}\label{sec:key-lemma}
We start with a few definitions. As defined previously in Assumption \ref{ass-phi} for $\sigma\in \R$ and $g\sim\normal(0,\sigma^2)$ we define
$\mu_\phi(\sigma):= \E[\phi'(g)]$, 
and let 
\begin{align}\label{mub}
\mub=\left(\mu_\phi(\twonorm{\vct{w}_1}),\mu_\phi(\twonorm{\vct{w}_2}),\ldots,\mu_\phi(\twonorm{\vct{w}_k})\right)\,.
\end{align}
Similarly, 
\[\gamma_\phi(\sigma):= \frac{1}{\sigma^2} \E[\phi'(g)g] \,, \]
and note that by Stein's lemma we have
\begin{align*}
\gamma_\phi(\sigma)=\frac{1}{\sigma^2}\E[\phi'(g)g]=\E[\phi''(g)]\,.
\end{align*}
Also define
\begin{align}\label{Gammab}
\Gammab=\text{diag}\left(\gamma_\phi(\twonorm{\vct{w}_1}),\gamma_\phi(\twonorm{\vct{w}_2}),\ldots,\gamma_\phi(\twonorm{\vct{w}_k})\right)\,.
\end{align}
Recall that the Jacobian matrix is given by $\J =  \Dv \phi'(\W\X) \ast \X$. To bound the spectrum of the Jacobian we first introduce a new matrix obtained by centering the $\phi'(\W\X)$ component. Precisely, we let
\begin{align}\label{mycenter}
\tJ =  \X \ast \Dv(\phi'(\W\X) - \E[\phi'(\W\X)])\,.
\end{align}

Our first result, proven in in Appendix~\ref{app:J-Jt}, relates the spectrum of $\J$ to that of $\tJ$. Note that in case  of $\mu(\sigma) = 0$ everywhere (See Assumption~\ref{ass-phi}), then $\E[\phi'(\W\X)] = 0$ and $\J = \bJ$, up to a permutation of the rows. Therefore, this step becomes superfluous in this case.
\begin{proposition}\label{pro:J-Jt}
Let $\phi:\R\rightarrow \R$ be a nonlinear activation obeying $\abs{\phi''}\le L$. Assume the inputs $\vct{x}_i\in\R^d$ are distributed i.i.d.~$\mathcal{N}(\vct{0},\mtx{I}_d)$ for $i=1,2,\ldots,n$. 
\begin{itemize}
\item There exists an absolute constant $c>0$ such that for  $n\le  cd^2(L\infnorm{\vct{v}}\opnorm{\mtx{W}})^4$, we have 
\begin{align}
\sigma_{\min}^2\left(\mtx{J}\right)&\ge \sigma_{\min}^2\left(\widetilde{\mtx{J}}\right) - \left(4\sqrt{3} n+ Cd^{\frac{3}{2}} + C\sqrt{dn}\right)L^2\infnorm{\vct{v}}^2\opnorm{\mtx{W}}^2\,,\
\end{align}
with probability at least $1-2e^{-c'\sqrt{d}}$, where $c', C>0$ are constants (depending on $c$).  
\item In addition, with probability at least $1- 2 e^{-n/2}$, we have
\begin{align}
\sigma_{\max}(\J) &\le \sigma_{\max}(\tJ) + 3\sqrt{n}\twonorm{\Dv\mub}\,,\label{eq:J-JtB}
\end{align}
\end{itemize}
\end{proposition}
Now that we have established a connection between the spectrum of $\mtx{J}$ and $\tilde{\mtx{J}}$, we focus on bounding $\sigma_{\min}(\tJ)$ and $\sigma_{\max}(\tJ)$. The following proposition is at the core of our analysis and may be of independent interest. We defer the proof of this result to Section~\ref{proofJ}. 
%
\begin{proposition}\label{pro:J-eig} 
Let $\phi:\R\rightarrow \R$ be a general activation obeying $\abs{\phi''}\le L$. Assume the inputs $\vct{x}_i\in\R^d$ are distributed i.i.d.~$\mathcal{N}(\vct{0},\mtx{I}_d)$ for $i \in [n]$. 
Suppose that $n\ge d$ and $k\ge d$. Furthermore, assume
\begin{align}
&\sqrt{n} \le c\frac{\sigma_{\min}^2\left(\Gammab\W\right)}{L^4 \sigma^4_{\max}\left(\W\right)+1} d \label{ass1-1}\,,
\end{align}  
holds for a sufficiently small constant $c>0$.
Then, there exist constants $C>0$, such that,
\begin{align}
\sigma_{\min}(\tJ) &\ge \frac{d}{2} \, \sigma_{\min} (\Dv\Gammab \W)\,,\label{eq:sigminJ1}\\
 \sigma_{\max}(\tJ) &\le C\left(d+ \sqrt{nd}\, \sigma_{\max}(\Dv\Gammab \W)\right) \,,\label{eq:sigmaxJ1}
\end{align}
holds with probability at least $1 - ne^{-b_1\sqrt{n}} - n^{-1} - 2ne^{-b_2 d}$ with $b_1,b_2>0$ fixed numerical constants.
\end{proposition}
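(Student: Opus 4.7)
The plan is to decompose $\tJ$ into a Hermite-linear piece plus an orthogonal residual, control each piece via the Khatri--Rao identities of Section~5.3, and recombine via the triangle inequality. Since $\vct{x}_i\sim\normal(\vct{0},\mtx{I}_d)$, Stein's identity (which is precisely how $\gamma_\phi$ is defined) yields the pointwise splitting
\begin{align*}
\phi'(\vct{w}_\ell^T\vct{x})-\mu_\phi(\twonorm{\vct{w}_\ell})=\gamma_\phi(\twonorm{\vct{w}_\ell})\,\vct{w}_\ell^T\vct{x}+\rho_\ell(\vct{x}),
\end{align*}
where the residual $\rho_\ell$ is orthogonal in the Gaussian $L^2$ sense to constants and to $\vct{w}_\ell^T\vct{x}$. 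Assembling $v_\ell\rho_\ell(\vct{x}_i)$ into a $k\times n$ matrix $\mtx{R}$ gives the decomposition
\begin{align*}
\tJ=\X\ast(\Dv\Gammab\W\X)+\X\ast\mtx{R}=:\tJ_L+\tJ_R.
\end{align*}

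For the linear piece $\tJ_L$, Lemma~\ref{Pre1} applied column-wise yields $\vct{x}_i\otimes(\Dv\Gammab\W\vct{x}_i)=(\mtx{I}_d\otimes\Dv\Gammab\W)(\vct{x}_i\otimes\vct{x}_i)$, so that $\tJ_L=(\mtx{I}_d\otimes\Dv\Gammab\W)(\X\ast\X)$. The singular values of the Kronecker factor are exactly those of $\Dv\Gammab\W$, so the task reduces to showing $\sigma_{\min}(\X\ast\X)\gtrsim d$ and $\sigma_{\max}(\X\ast\X)\lesssim d+\sqrt{nd}$. By Lemma~\ref{Pre5}, $(\X\ast\X)^T(\X\ast\X)=(\X^T\X)\circ(\X^T\X)$, whose $(i,j)$-entry is $\langle\vct{x}_i,\vct{x}_j\rangle^2$; a direct moment computation gives $\E[(\X^T\X)\circ(\X^T\X)]=d(d+1)\mtx{I}_n+d\vct{1}\vct{1}^T$, with eigenvalues in $[d(d+1),\,d(d+n+1)]$. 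A Hanson--Wright bound applied entry-wise (the quadratic forms $\langle\vct{x}_i,\vct{x}_j\rangle^2$ are sub-exponential at scale $d$) combined with a matrix Bernstein step or an $\eps$-net on $S^{n-1}$ controls the deviation at level $O(d\sqrt{n})$. The sample-size hypothesis \eqref{ass1-1} forces $\sqrt{n}\lesssim d$, so this deviation is absorbed by the mean, yielding $\sigma_{\min}(\tJ_L)\ge\tfrac{3d}{4}\sigma_{\min}(\Dv\Gammab\W)$ and $\sigma_{\max}(\tJ_L)\le C(d+\sqrt{nd})\sigma_{\max}(\Dv\Gammab\W)$; the latter is stronger than (and hence implies) the advertised bound $C(d+\sqrt{nd}\,\sigma_{\max}(\Dv\Gammab\W))$.

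For the residual $\tJ_R$, Lemma~\ref{Pre5} again gives $\tJ_R^T\tJ_R=(\X^T\X)\circ(\mtx{R}^T\mtx{R})$. Applying the Schur product bound $\opnorm{\mtx{A}\circ\mtx{B}}\le\opnorm{\mtx{A}}\,\max_i B_{ii}$ for PSD $\mtx{A},\mtx{B}$, together with the Gaussian estimate $\opnorm{\X}^2\lesssim n+d$, reduces the problem to bounding $\max_i\twonorm{\mtx{R}_i}^2=\max_i\sum_\ell v_\ell^2\rho_\ell(\vct{x}_i)^2$. Since $\phi'$ is $L$-Lipschitz, each $\rho_\ell(\vct{x})$ is centered with sub-Gaussian norm $\lesssim L\twonorm{\vct{w}_\ell}$; Hanson--Wright applied to the quadratic form $\sum_\ell v_\ell^2\rho_\ell(\vct{x}_i)^2$ followed by a union bound over $i=1,\dotsc,n$ yields a uniform estimate whose leading factor $L^4\sigma_{\max}^4(\W)\,\sqrt{n}$ exactly matches the combination appearing in \eqref{ass1-1}. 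Plugging this back forces $\sigma_{\max}(\tJ_R)\le\tfrac{d}{4}\sigma_{\min}(\Dv\Gammab\W)$, and the triangle inequalities $\sigma_{\min}(\tJ)\ge\sigma_{\min}(\tJ_L)-\sigma_{\max}(\tJ_R)$ and $\sigma_{\max}(\tJ)\le\sigma_{\max}(\tJ_L)+\sigma_{\max}(\tJ_R)$ deliver both conclusions.

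The main obstacle is making the two quartic-in-Gaussian concentration statements quantitatively tight. The first step---strengthening the qualitative nondegeneracy of Lemma~\ref{XkrX} to the sharp lower bound $\sigma_{\min}^2(\X\ast\X)\ge cd^2$---requires a quantitative Hanson--Wright inequality on the Hadamard square of $\X^T\X$, and is where the $n^{-1}+2ne^{-b_2 d}$ summands in the probability bound originate. The second step---uniform control of the quartic functional $\max_i\twonorm{\mtx{R}_i}^2$---is where the $L$-Lipschitz constant, $v_{\max}$, and $\sigma_{\max}(\W)$ parameters are braided together, and the $ne^{-b_1\sqrt{n}}$ tail is the characteristic Hanson--Wright rate for quartic Gaussian forms composed with a union bound over the $n$ samples.
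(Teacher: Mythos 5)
Your decomposition $\tJ=\tJ_L+\tJ_R$ with $\tJ_L=(\mtx{I}_d\otimes\Dv\Gammab\W)(\X\ast\X)$ and $\tJ_R=\X\ast\mtx{R}$ is a clean Hermite split, and your handling of $\tJ_L$ via Lemma~\ref{Pre1}, Lemma~\ref{Pre5} and the moment computation on $(\X^T\X)\circ(\X^T\X)$ is sound and close in spirit to Corollary~\ref{cor:ldentity}. The gap is in the residual: neither the additive triangle-inequality step nor the Schur-product estimate closes.

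The deeper issue is that the additive split cannot give the lower bound under the stated hypotheses. Each column of $\tJ_R$ is $\x_i\otimes\mtx{R}_i$, so $\sigma_{\max}(\tJ_R)\ge\twonorm{\x_1}\twonorm{\mtx{R}_1}$, which with high probability is at least $c\sqrt{d}\,\big(\sum_\ell v_\ell^2\,\E[\rho_\ell(\x)^2]\big)^{1/2}$. The only hypothesis on $\phi$ is $|\phi''|\le L$, so $\E[\rho_\ell^2]$ can be of order $L^2\twonorm{\wb_\ell}^2$ even when $|\gamma_\phi(\twonorm{\wb_\ell})|\ll L$ (the degree-$\ge 2$ Hermite content of $\phi'$ can dominate its linear coefficient). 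In that regime $\sigma_{\max}(\tJ_R)\gtrsim\sqrt{d}\,L\fronorm{\Dv\W}$, whereas $\tfrac{d}{4}\sigma_{\min}(\Dv\Gammab\W)\le\tfrac{\sqrt{d}}{4}\fronorm{\Dv\Gammab\W}$, which can be arbitrarily smaller; so $\sigma_{\min}(\tJ_L)-\sigma_{\max}(\tJ_R)$ is negative no matter how sharply you concentrate $\sigma_{\max}(\tJ_R)$. Assumption~\eqref{ass1-1} constrains the sample size, not the relative size of the nonlinear residual, and does not make $\tJ_R$ a small perturbation of $\tJ_L$. The Schur-product step compounds this: $\max_i\twonorm{\mtx{R}_i}^2=\max_i\sum_\ell v_\ell^2\rho_\ell(\x_i)^2$ has mean $\asymp L^2 v_{\max}^2\fronorm{\W}^2$, which scales with the width $k$, not with $L^4\sigma_{\max}^4(\W)\sqrt{n}$ as you assert (a per-column expectation cannot produce a $\sqrt{n}$ factor), so the resulting bound $\sigma_{\max}^2(\tJ_R)\lesssim n\,L^2 v_{\max}^2\fronorm{\W}^2$ is far weaker than what~\eqref{ass1-1} can absorb.

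The paper's proof avoids both obstacles by never asking the residual to be small. It whitens $\tJ$ on the left by the block diagonal $\DM$ of a left-inverse $\M$ of $\Dv\Gammab\W$ (Lemma~\ref{lem:center2}), drops the $d$ rows indexed $(i,i)$ so the result $\bJc$ is fully centered, and replaces the triangle inequality by an orthogonality argument: in whitened coordinates $\z=\M\Dv(\phi'(\W\x)-\mub)$ decomposes as $\z=\x+\M\Dv\vct{\rho}(\x)$ with $\E\langle\x,\M\Dv\vct{\rho}(\x)\rangle=0$, so by Pythagoras $\E\twonorm{\z}^2\ge\E\twonorm{\x}^2=d$ (Lemma~\ref{twonormconc}). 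That is, the nonlinear residual \emph{inflates} the column norm $\eta_{\min}\gtrsim d$ rather than eroding it. The cross-terms across the $n$ independent columns are then controlled by Proposition~\ref{pro:mineig}, with the dimension-free sub-exponential scale $\psi\lesssim(L^2\sigma_{\max}^2(\W)+1)/\sigma_{\min}(\Gammab\W)$ from Proposition~\ref{pro:subexp2}; matching the deviation $\psi^2\eta_{\min}\sqrt{n}$ against $\eta_{\min}^2\gtrsim d^2$ is exactly what yields~\eqref{ass1-1}. To repair your argument you should replace the triangle inequality with this whitening-plus-Pythagoras mechanism; your treatment of $\X\ast\X$ can remain as the engine of the upper bound, which the paper obtains via the natural centering $\tJ-\E[\tJ]$ and the same Proposition~\ref{pro:mineig}.
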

\begin{remark}\label{rem:assb}
As discussed in the proof of Proposition~\ref{pro:J-eig}, when Assumption~\ref{ass-phi} (b) holds in lieu of Assumption~\ref{ass-phi} (a), then $\Gammab = \boldsymbol{0}$. In this case the statement of the above proposition must be modified as follows.
Equation~\eqref{ass1-1} should be replaced with
 \begin{align}
&\sqrt{n} \le c\frac{1}{L^4 \sigma^4_{\max}\left(\W\right)+1} d\,.
\end{align}  
Further, we have the following bounds in lieu of equations~\eqref{eq:sigminJ1} and~\eqref{eq:sigmaxJ1}.
\begin{align*}
\sigma_{\min}(\bJ) &\ge \frac{d}{2} \,.\\
\sigma_{\max}(\bJ) &\le C d\,. 
\end{align*}
(See Equations~\eqref{assb-min} and \eqref{assb-max}).
\end{remark}
We now combine Propositions~\ref{pro:J-Jt} and~\ref{pro:J-eig} to characterize the spectrum of $\J$. We defer the proof of this result to Appendix \ref{app:log-eig}.
\begin{proposition}\label{cor:log-eig} 
Let $\phi:\R\rightarrow \R$ be a general activation obeying Assumption~\ref{ass-phi} $(a)$ and $ |\phi'' |\le L$ .
Assume the inputs $\vct{x}_i\in\R^d$ are distributed i.i.d.~$\mathcal{N}(\vct{0},\mtx{I}_d)$ for $i\in [n]$. 
Suppose that
\begin{align}
\label{boundass}
0<v_{\min} \le |v^*_\ell| \le v_{\max}\quad0<w_{\min} \le \twonorm{\wb^*_\ell} \le w_{\max}\quad\text{for }\ell=1,2,\ldots,k,
\end{align}
for some fixed constants $v_{\min}, v_{\max}, w_{\min}, w_{\max}$. Further, assume that $k\ge d$ and
\begin{align}
d \le n \le  \frac{c_0 \sigma_{\min}^4(\W)}{\sigma_{\max}^8(\W)} d^2\label{ass1-rep}
\end{align}  
 for a sufficiently small constant $c_0>0$. Then, there exist constants $C\ge c>0$, such that,
\begin{align}
\sigma_{\min}(\J)  &\ge c \, \sigma_{\min}(\W)\,d  \,,\label{eq:sigminJ} \\
\sigma_{\max}(\J) &\le  C\, \sigma_{\max}( \W)\sqrt{nk} \,,\label{eq:sigmaxJ}
\end{align}
holds with probability at least $1  - n^{-1} - 2ne^{-b \sqrt{d}}$ with $b>0$ a fixed numerical constant.

Also when Assumption~\ref{ass-phi} $(b)$ holds in lieu of Assumption~\ref{ass-phi} $(a)$, the term $\sigma_{\min}(\W)$ should be replaced by one in Equations~\eqref{ass1-rep} and \eqref{eq:sigminJ}.
\end{proposition}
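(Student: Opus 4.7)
The proof is essentially a bookkeeping exercise that stitches together Propositions~\ref{pro:J-Jt} and \ref{pro:J-eig} under Assumption~\ref{ass-phi}$(a)$, but some care is needed to translate the bounds expressed in terms of $\Dv\Gammab\W$ and $\Dv\mub$ into the clean bounds \eqref{eq:sigminJ}--\eqref{eq:sigmaxJ} in terms of $\sigma_{\min}(\W)$, $\sigma_{\max}(\W)$, $n$, $k$, and $d$. My plan is to (i) use Assumption~\ref{ass-phi}$(a)$ together with the boundedness \eqref{boundass} to reduce all scalar factors involving $\mu_\phi$ and $\gamma_\phi$ to constants, (ii) verify that the sample-size hypothesis \eqref{ass1-rep} implies the hypothesis \eqref{ass1-1} of Proposition~\ref{pro:J-eig}, and (iii) combine the two propositions and take a union bound.

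For step (i), because Assumption~\ref{ass-phi}$(a)$ forces $\mu_\phi(\sigma)$ and $\gamma_\phi(\sigma)$ to be continuous and nonzero for every $\sigma\ge 0$, their absolute values are bounded above and below by positive constants on the compact interval $[w_{\min},w_{\max}]$. Combined with $v_{\min}\le |v_\ell|\le v_{\max}$, this yields constants $c_1,C_1,C_2>0$ such that
\begin{align*}
\sigma_{\min}(\Dv\Gammab\W)\ge c_1\sigma_{\min}(\W),\quad \sigma_{\max}(\Dv\Gammab\W)\le C_1\sigma_{\max}(\W),\quad \twonorm{\Dv\mub}\le C_2\sqrt{k}.
\end{align*}
Plugging the first bound into \eqref{ass1-1} reduces its requirement to $\sqrt{n}\lesssim \sigma_{\min}^2(\W)/(L^4\sigma_{\max}^4(\W)+1)\cdot d$. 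Since $\sigma_{\max}(\W)\ge w_{\min}\sqrt{k/d}\ge w_{\min}$ (using $\|\W\|_F^2\ge k w_{\min}^2$ and $k\ge d$), the denominator is comparable to $\sigma_{\max}^4(\W)$ up to constants, so \eqref{ass1-rep} with $c_0$ small enough implies \eqref{ass1-1}.

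For step (iii), Proposition~\ref{pro:J-eig} then gives $\sigma_{\min}(\tJ)\ge (d/2)\sigma_{\min}(\Dv\Gammab\W)\ge (c_1/2)\sigma_{\min}(\W)d$ and $\sigma_{\max}(\tJ)\le C(d+\sqrt{nd}\,C_1\sigma_{\max}(\W))$, and Proposition~\ref{pro:J-Jt} pushes these through to $\J$:
\begin{align*}
\sigma_{\min}(\J)\ge \sigma_{\min}(\tJ)\ge c\,\sigma_{\min}(\W)\,d, \qquad \sigma_{\max}(\J)\le \sigma_{\max}(\tJ)+3\sqrt{n}\twonorm{\Dv\mub}.
\end{align*}
To conclude the upper bound I use $d\le\sqrt{nk}$ (from $d\le n$ and $d\le k$), $\sqrt{nd}\le\sqrt{nk}$, and $\sqrt{nk}\le (\sigma_{\max}(\W)/w_{\min})\sqrt{nk}$, which together collapse the three terms into $C\sigma_{\max}(\W)\sqrt{nk}$. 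The probability bound follows from a union bound over the failure events of the two propositions: the terms $ne^{-b_1\sqrt{n}}$, $n^{-1}$, and $2ne^{-b_2 d}$ from Proposition~\ref{pro:J-eig} combine with $2e^{-n/2}+2n\exp(-c\twonorm{\mub}^2/\sigma_{\max}^2(\W))$ from Proposition~\ref{pro:J-Jt}; using $\twonorm{\mub}^2\ge k\mu_{\min}^2\ge d\mu_{\min}^2$ and the boundedness of $\sigma_{\max}(\W)$, the last term is $2n e^{-c'd}$, and the aggregate is dominated by $n^{-1}+2ne^{-b\sqrt{d}}$ as claimed. The one subtle point that deserves explicit care—and which I expect to be the only real obstacle—is verifying the implication \eqref{ass1-rep}$\Rightarrow$\eqref{ass1-1}, since the conversion between the two conditions is the place where the specific exponent $\sigma_{\max}^8(\W)$ in \eqref{ass1-rep} is forced. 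The analogue under Assumption~\ref{ass-phi}$(b)$ is obtained identically using Remark~\ref{rem:assb}: $\Gammab=\mtx{0}$ trivializes all $\sigma_{\min}(\W)$ factors, replacing them with $1$.
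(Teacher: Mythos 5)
Your proposal follows the same strategy as the paper's proof: reduce the $\gamma_\phi,\mu_\phi$ factors to constants via continuity on the compact interval $[w_{\min},w_{\max}]$, verify \eqref{ass1-rep}$\Rightarrow$\eqref{ass1-1}, combine Propositions~\ref{pro:J-Jt} and \ref{pro:J-eig}, and union bound. Steps (i)--(iii) are essentially right, and you correctly anticipate that the numerical bookkeeping in \eqref{ass1-rep}$\Rightarrow$\eqref{ass1-1} is where the specific exponent $\sigma_{\max}^8(\W)$ is forced.

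There is, however, one genuine gap in the probability argument. You write that ``the boundedness of $\sigma_{\max}(\W)$'' turns the Proposition~\ref{pro:J-Jt} failure term $2n\exp\bigl(-c\twonorm{\mub}^2/\sigma_{\max}^2(\W)\bigr)$ into $2ne^{-c'd}$. But $\sigma_{\max}(\W)$ is not bounded by a constant: the hypothesis \eqref{boundass} only constrains individual row norms, so $\sigma_{\max}(\W)$ can scale like $\sqrt{k}\,w_{\max}$, in which case $\twonorm{\mub}^2/\sigma_{\max}^2(\W)\ge k\mu_{\min}^2/\sigma_{\max}^2(\W)$ collapses to a constant and the failure probability does not vanish. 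The paper instead extracts a bound on $\sigma_{\max}(\W)$ from the sample-size hypothesis \eqref{ass1-rep}: rearranging $n\le c_0\sigma_{\min}^4(\W)d^2/\sigma_{\max}^8(\W)$ gives $\sigma_{\max}^2(\W)\le c_0^{1/4}\sigma_{\min}(\W)(d^2/n)^{1/4}\le c_0^{1/4}w_{\max}\sqrt{d}\,n^{-1/4}$, and combining with $\twonorm{\mub}^2\ge k\mu_{\min}^2\ge d\mu_{\min}^2$ yields $\twonorm{\mub}^2/\sigma_{\max}^2(\W)\gtrsim \sqrt{d}\,n^{1/4}\ge\sqrt{d}$ (this is the paper's Equation~\eqref{39temp}). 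That produces the stated $2ne^{-b\sqrt{d}}$ term, with exponent $\sqrt{d}$ rather than $d$. So your conclusion is correct in the end, but the intermediate justification needs to be replaced with an appeal to \eqref{ass1-rep} -- the same hypothesis you already flagged as crucial for \eqref{ass1-1} does double duty here.
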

A noteworthy case that will be used in our analysis for both local and global convergence results is $\phi(z) =z^2/2$. 
Note that for this choice of $\phi$, we have $\E[\phi'(\mtx{WX})] = 0$ and therefore the centering step~\eqref{cor:log-eig} becomes superfluous as $\tJ = \J$. Further, $\Gammab  = \mtx{I}$. 
Applying Proposition~\ref{cor:log-eig} to this case with $\mtx{W} = \mtx{I}$ and $\vct{v}= (1,1,\dotsc, 1,1)$, we obtain the following result.
\begin{corollary}\label{cor:ldentity} 
Let $\X\in\R^{d\times n}$ be a matrix with i.i.d. $\mathcal{N}(0,1)$ entries. 
Suppose that $d\le n \le c_1 d^2$, for a sufficiently small constant $c_1>0$.
Then, there exist constants $C\ge c>0$, such that,
\begin{align}
\sigma_{\min}(\X \ast \X)  &\ge cd  \,,\label{eq:sigminJ-I} \\
\sigma_{\max}(\X \ast \X) &\le  C\sqrt{nd} \,,\label{eq:sigmaxJ-I}
\end{align}
holds with probability at least $1 - ne^{-b_1\sqrt{n}} - n^{-1} - 2ne^{-b_2 d}$ with $b_1,b_2>0$ fixed numerical constants.
\end{corollary}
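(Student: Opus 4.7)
The plan is to recognize that Corollary~\ref{cor:ldentity} is a direct specialization of Proposition~\ref{pro:J-eig} (rather than the more general Proposition~\ref{cor:log-eig}) applied to the activation $\phi(z)=z^2/2$ with input-hidden weights $\mtx{W}=\mtx{I}_d$, output weights $\vct{v}=\vct{1}$, and $k=d$. The centerpiece is the identification $\mtx{X}\ast\mtx{X}=\mtx{J}$ for this specific configuration.

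First I would verify the identification of the two matrices. Since $\phi'(z)=z$, we have $\phi'(\mtx{W}\mtx{X})=\mtx{W}\mtx{X}=\mtx{X}$, and with $\mtx{D}_{\vct{v}}=\mtx{I}$ this yields
\[
\mtx{J}=\mtx{D}_{\vct{v}}\,\phi'(\mtx{W}\mtx{X})\ast\mtx{X}=\mtx{X}\ast\mtx{X}.
\]
Next I would check that the centering step in \eqref{mycenter} is trivial: because $\mu_\phi(\sigma)=\E[\sigma g]=0$ for all $\sigma$, we have $\E[\phi'(\mtx{W}\mtx{X})]=\boldsymbol{0}$, so $\tilde{\mtx{J}}=\mtx{J}$ and the spectra coincide.

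I would then compute the remaining relevant quantities. Since $\phi''(z)=1$ we get $\gamma_\phi(\sigma)=\E[\phi''(\sigma g)]=1$, hence $\mtx{\Gamma}=\mtx{I}_d$ and $L=1$. Therefore $\mtx{D}_{\vct{v}}\mtx{\Gamma}\mtx{W}=\mtx{I}_d$, which gives $\sigma_{\min}(\mtx{D}_{\vct{v}}\mtx{\Gamma}\mtx{W})=\sigma_{\max}(\mtx{D}_{\vct{v}}\mtx{\Gamma}\mtx{W})=1$ and $\sigma_{\min}(\mtx{W})=\sigma_{\max}(\mtx{W})=1$. The hypothesis \eqref{ass1-1} of Proposition~\ref{pro:J-eig} then becomes $\sqrt{n}\le c\cdot d/(L^4+1)=(c/2)d$, which is implied by the standing assumption $n\le c_1 d^2$ provided $c_1>0$ is chosen small enough.

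Finally I would invoke Proposition~\ref{pro:J-eig} directly. Its conclusion \eqref{eq:sigminJ1} gives
\[
\sigma_{\min}(\mtx{X}\ast\mtx{X})=\sigma_{\min}(\tilde{\mtx{J}})\ge \tfrac{d}{2},
\]
which establishes \eqref{eq:sigminJ-I} with $c=1/2$. The upper bound \eqref{eq:sigmaxJ1} yields
\[
\sigma_{\max}(\mtx{X}\ast\mtx{X})\le C\bigl(d+\sqrt{nd}\bigr),
\]
and since $n\ge d$ implies $d\le \sqrt{nd}$, we absorb the first term to obtain $\sigma_{\max}(\mtx{X}\ast\mtx{X})\le 2C\sqrt{nd}$, establishing \eqref{eq:sigmaxJ-I}. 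The probability bound stated in Proposition~\ref{pro:J-eig} matches the one claimed in the corollary verbatim. There is no real obstacle here; the only care needed is in confirming that the specialization ($\phi=z^2/2$, $\mtx{W}=\mtx{I}_d$, $\vct{v}=\vct{1}$) makes the centering correction vanish and keeps all the structural matrices at the identity, so that the general bounds of Proposition~\ref{pro:J-eig} collapse to the clean form stated.
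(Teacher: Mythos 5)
Your proof is correct and matches the paper's intended argument. The paper's prose before the corollary nominally says ``Applying Proposition~\ref{cor:log-eig},'' but the probability bound it states, $1 - ne^{-b_1\sqrt{n}} - n^{-1} - 2ne^{-b_2 d}$, is exactly the one in Proposition~\ref{pro:J-eig} (not Proposition~\ref{cor:log-eig}); since the centering step is vacuous here ($\tilde{\mtx{J}}=\mtx{J}$), the corollary really is a direct instance of Proposition~\ref{pro:J-eig} with $\phi(z)=z^2/2$, $\mtx{W}=\mtx{I}_d$, $\vct{v}=\vct{1}$, $k=d$, and $L=1$, which is precisely the specialization you carry out and all your parameter checks are right.
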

\subsection{Proof of Proposition~\ref{pro:J-eig}}\label{proofJ}
Bounding $\sigma_{\min}(\tJ)$ is involved and requires several technical lemmas that are of independent interest. We first present an outline of our approach and then discuss the steps in details. We then focus on bounding $\sigma_{\max}(\tJ)$, which follows along the same lines. 

\noindent Our proof strategy consists of four main steps:
\begin{enumerate}
\item \emph{(Whitening).}  
We let $\M\in \R^{d\times k}$ be the left inverse of $\Dv\Gammab\W$ and construct a block-diagonal matrix $\DM$ with $d$ copies of $\M$ on its diagonal. We construct the  whitened matrix $\DM\bJ\in \R^{d^2\times n}$. The reason this operation is useful is that it acts as a partial centering of the entries of $\bJ$ so that almost all entries of the resulting matrix have zero mean.

\item \emph{(dropping rows)}  Let us index the rows of $\DM\bJ\in \R^{d^2\times n}$ by $(i,j)$ for $1\le i\le j\le d$. We next construct the matrix $\bJc$ which is obtained from $\DM\bJ$ by dropping rows indexed by $(i,i)$. The reason this operation is useful is that it removes the entries of $\bJ$ that are not centered. Indeed, the notation $\bJc$ is used to cue that the columns of $\bJc$ are centered (have  zero mean). 
\item \emph{(Bounding sub-exponential norms).} In this step we show that the columns of $\bJc$ have bounded sub-exponential norm.  
\item \emph{(Bounding singular values).} We prove bounds on singular values of a matrix with independent columns and bounded sub-exponential norms.
\end{enumerate}
Steps one and two collectively act as a nontrivial ``centering" of the columns of $\bJ$. The reason this centering is required is that the columns of the matrix $\bJ$ have non-zero mean which lead to rather large sub-exponential norms. By centering the columns we are able to reduce the sub-exponential norm. The reader may be puzzled as to why we do not use the trivial centering of subtracting the mean from each column of $\bJ$. The reason we do not pursue this path is that we can not directly relate the minimum eigenvalues of the resulting matrix to that of $\bJ$ in a useful way. Steps one and two allow us to center the columns of $\bJ$, while being able to relate the minimum eigenvalue of $\bJ$ to that of the centered matrix $\bJc$. We note that under Assumption~\ref{ass-phi} (b), the whitening and droppings step are superfluous because the matrix $\bJ$ is centered. We are now ready to discuss each of these steps in greater detail.
\subsubsection{Whitening $\tJ$}
In this step we whiten the matrix $\bJ$ in such a way as most of the entries of the corresponding matrix have zero mean. To explain our whitening procedure we begin this section by computing the expectation of $\bJ$. 
\begin{lemma}\label{lem:center1}
Let $\Gammab$ be given by~\eqref{Gammab} and set $\A:= \Dv\Gammab \W$. Construct the block diagonal matrix $\DA\in \R^{kd\times d^2}$ with $d$ copies of $\A$ on its diagonal. Further, consider $\Q\in \R^{d^2\times n}$, where its rows are indexed by $(i,j)$, for $1\le i\le j \le d$. The rows $(i,i)$ of $\Q$ are all-one, while the other rows are all-zero. Then,
\begin{align}
\E[\bJ] = \DA\Q \,,
\end{align}
where the expectation is with respect to the randomness in the inputs $\x_i$.
\end{lemma}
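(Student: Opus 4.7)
The claim is an entrywise Gaussian computation. The plan is to expand a column of $\bJ$ under the Khatri--Rao definition, evaluate each coordinate expectation via Stein's lemma, and then match the result against the block structure of $\DA\Q$.

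First I would use the definition of the Khatri--Rao product to write the $i$-th column of $\bJ$ as the Kronecker product $\x_i \otimes \Dv\bigl(\phi'(\W\x_i) - \E[\phi'(\W\x_i)]\bigr)$. Indexing its $kd$ entries by pairs $(p,\ell)$ with $p\in\{1,\ldots,d\}$ and $\ell\in\{1,\ldots,k\}$, the $(p,\ell)$ entry equals
\[
(\x_i)_p \cdot v_\ell\bigl(\phi'(\w_\ell^T\x_i) - \E[\phi'(\w_\ell^T\x_i)]\bigr).
\]
The subtracted piece is a deterministic constant and $(\x_i)_p$ is mean zero, so taking expectations collapses this to $v_\ell\,\E[(\x_i)_p\,\phi'(\w_\ell^T\x_i)]$.

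Next I would invoke Stein's lemma. Since $\x_i\sim\mathcal{N}(\vct{0},\mtx{I}_d)$ and $\phi'$ is differentiable with bounded derivative ($|\phi''|\le L$, so Gaussian integrability is automatic and the integration by parts is valid), Stein's identity yields
\[
\E[(\x_i)_p\,\phi'(\w_\ell^T\x_i)] \;=\; \E\bigl[\partial_p\,\phi'(\w_\ell^T\x_i)\bigr] \;=\; (\w_\ell)_p \,\E[\phi''(\w_\ell^T\x_i)] \;=\; (\w_\ell)_p\,\gamma_\phi(\twonorm{\w_\ell}),
\]
where the final step uses $\w_\ell^T\x_i\sim\mathcal{N}(0,\twonorm{\w_\ell}^2)$ together with the definition of $\gamma_\phi$. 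Multiplying by $v_\ell$ identifies the answer as the $(\ell,p)$ entry of $\A=\Dv\Gammab\W$.

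Finally, I would check that $\DA\Q$ produces the same entries in the same order. The column $\Q_{\cdot,i}$ decomposes into $d$ length-$d$ sub-blocks; by the stated all-ones/all-zeros structure (ones only in rows indexed by $(p,p)$) the $p$-th sub-block is the standard basis vector $\vct{e}_p$. Applying $\DA$ (block-diagonal with $d$ copies of $\A$) yields a $kd$-vector whose $p$-th length-$k$ block is $\A\vct{e}_p = \A_{\cdot,p}$; in particular its $(p,\ell)$ entry is $A_{\ell,p}$, matching the expectation computed above. There is no real obstacle here beyond bookkeeping of the Kronecker/block indexing and justifying Stein; all other steps are elementary.
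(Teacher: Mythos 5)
Your proof is correct and takes essentially the same approach as the paper: both reduce the claim to computing $\E[\x\,\phi'(\w_\ell^T\x)]=\gamma_\phi(\twonorm{\w_\ell})\w_\ell$ and then matching against the block structure of $\DA\Q$. The only cosmetic difference is that you invoke Stein's lemma entrywise, whereas the paper derives the same moment identity via rotational invariance and then packages the block-matching through the Kronecker identity $\E[\x\otimes\A\x]=\vec(\A)$ together with Lemma~\ref{Pre1}; the content is identical.
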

Lemma~\ref{lem:center1} above is proven in Appendix~\ref{app:center1}.  To center, most of the entries of $\bJ$ we use the structure of $\Q$ and whiten it from the left-hand side. We will also show that this whitening procedure will not significantly decrease the minimum eigenvalue.

Note that we can assume that $\sigma_{\min}(\Dv\Gammab\mtx{W})>0$ otherwise, Claim~\eqref{eq:sigminJ1} becomes trivial. Now let $\M\in R^{d\times k}$ be the left inverse of $\Dv\Gammab\W$ and construct a block-diagonal matrix $\DM$ with $d$ copies of $\M$ on its diagonal.The lemma below, relates the minimum singular value of $\bJ$ to that of the whitened matrix $\mtx{D}_{\mtx{M}}\bJ$.

\begin{lemma}\label{lem:center2}
We have
\[\sigma_{\min}(\tJ) \ge  \sigma_{\min}(\mtx{D}_{\mtx{M}}\tJ) \sigma_{\min}(\Dv\Gammab\W)\,.\]
\end{lemma}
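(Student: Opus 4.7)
The plan is to exploit the fact that $\mtx{M}$, being a left inverse of $\Dv\Gammab\W\in\R^{k\times d}$, has a controlled operator norm. Specifically, since $\Dv\Gammab\W$ has full column rank (we may assume $\sigma_{\min}(\Dv\Gammab\W)>0$ since otherwise the claim is vacuous), we choose $\mtx{M}=(\Dv\Gammab\W)^\dagger$, for which
\[
\sigma_{\max}(\mtx{M})=\frac{1}{\sigma_{\min}(\Dv\Gammab\W)}.
\]
Because $\mtx{D}_{\mtx{M}}$ is block diagonal with $d$ identical copies of $\mtx{M}$ on its diagonal, its operator norm equals $\sigma_{\max}(\mtx{M})$.

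First I would write, for an arbitrary vector $\vct{y}\in\R^n$,
\[
\opnorm{\mtx{D}_{\mtx{M}}\tJ\vct{y}}\le\sigma_{\max}(\mtx{D}_{\mtx{M}})\,\opnorm{\tJ\vct{y}}=\frac{\opnorm{\tJ\vct{y}}}{\sigma_{\min}(\Dv\Gammab\W)}.
\]
Rearranging gives $\opnorm{\tJ\vct{y}}\ge\sigma_{\min}(\Dv\Gammab\W)\opnorm{\mtx{D}_{\mtx{M}}\tJ\vct{y}}$ for every $\vct{y}$. Taking the infimum over unit vectors $\vct{y}$ and using the variational characterization $\sigma_{\min}(\cdot)=\inf_{\twonorm{\vct{y}}=1}\twonorm{(\cdot)\vct{y}}$ on both sides yields the stated inequality
\[
\sigma_{\min}(\tJ)\ge\sigma_{\min}(\mtx{D}_{\mtx{M}}\tJ)\,\sigma_{\min}(\Dv\Gammab\W).
\]

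There is essentially no obstacle here; the entire argument is the standard submultiplicative inequality $\sigma_{\min}(AB)\le\sigma_{\max}(A)\sigma_{\min}(B)$ applied with $A=\mtx{D}_{\mtx{M}}$ and $B=\tJ$, combined with the identification $\sigma_{\max}(\mtx{D}_{\mtx{M}})=\sigma_{\max}(\mtx{M})=1/\sigma_{\min}(\Dv\Gammab\W)$. The only point worth emphasizing is that the block-diagonal structure of $\mtx{D}_{\mtx{M}}$ (with $d$ copies of $\mtx{M}$) does not inflate the operator norm beyond that of a single block, which is immediate from the fact that the singular values of a block-diagonal matrix are the union of the singular values of its blocks.
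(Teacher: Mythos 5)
Your proof is correct and uses the same core ingredients as the paper's: the bound $\twonorm{\DM\tJ\vct{y}}\le\sigma_{\max}(\DM)\twonorm{\tJ\vct{y}}$, the identity $\opnorm{\DM}=\sigma_{\max}(\M)=1/\sigma_{\min}(\Dv\Gammab\W)$ for the pseudoinverse, and the variational characterization of $\sigma_{\min}$. The paper phrases the argument as a proof by contradiction starting from the bottom singular vector of $\tJ$, whereas you give the equivalent direct argument; the substance is the same.
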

\begin{proof}
We prove this lemma by contradiction i.e.~assume
\[\sigma_{\min}(\tJ) <  \sigma_{\min}(\mtx{D}_{\mtx{M}}\tJ) \sigma_{\min}(\Dv\Gammab\W)\,.\]
Now let $\vb$ be the bottom singular vector of $\tJ$. Then,
\[
\twonorm{\tJ\vb} = \sigma_{\min}(\tJ) \twonorm{\vb} < \sigma_{\min}(\Dv\Gammab\W) \sigma_{\min}(\DM\tJ)\twonorm{\vb}  \le \frac{\twonorm{\DM\tJ\vb}}{\sigma_{\max}(\M)} \le\twonorm{\tJ\vb}\,,
\]
which is a contradiction. Note that in the penultimate inequality we used the fact that $\M\Dv\Gammab\W = \mtx{I}$ and the last inequality holds because $\|\DM\|=\sigma_{\max}(\M)$.
\end{proof}
\subsubsection{Dropping rows}
By Lemma~\ref{lem:center1}, we have
\[\E[\DM\bJ] = \DM\DA\Q = \Q\,. \]
Therefore, the whitened matrix $\DM\bJ$ is almost centered. To reach a completely centered matrix we drop the rows corresponding to the nonzero rows of $\mtx{Q}$. Specifically, let $\bJc$ be the matrix obtained after dropping rows $(i,i)$ from $\DM\bJ$, for $1\le i\le d$. By dropping these rows, $\Q$ vanishes and hence, $\E[\bJc] = \vct{0}$. Note that $\bJc$ is obtained by dropping $d$ of the rows from $\DM\tJ$. Hence, $\sigma_{\min}(\DM\tJ) \ge \sigma_{\min}(\bJc)$. Combining the latter with Lemma \ref{lem:center2} we arrive at
\begin{align}
\label{mahdicentered}
\sigma_{\min}(\tJ) \ge  \sigma_{\min}(\bJc) \sigma_{\min}(\Dv\Gammab\W).
\end{align}
Thus, in the remainder we shall focus on lower bounding $\sigma_{\min}(\bJc)$. 

\subsubsection{Bounding sub-exponential norms}
Let $\bJ_{\x}$ be the column of $\bJ$ corresponding to data point $\x$.
By Lemma~\ref{Pre1}, we can write
\[
\DM\bJ_{\x} = \x \otimes \M\Dv (\phi'(\W\x) - \E[\phi'(\W\x)])\,.
\]
We recall that $\bJc$ is obtained by dropping $d$ of the rows from $\DM\bJ$. By the structure of $\Q = \E[\DM\bJ]$, we can alternatively obtain $\bJc$ by dropping the same set of rows from $\DM\bJ - \E[\DM\bJ]$. Note that dropping entries from a vector can only reduce the Orlicz norm. Therefore,
\begin{align}\label{eq:subexp1}
\|\bJc_{\x}\|_{\psi_1} \le \Big\|\DM\bJ_{\x} - \E[\DM\bJ_{\x}] \Big\|_{\psi_1} = \Big\|\x\otimes \M\Dv \z - \E[\x\otimes \M\Dv \z ]\Big\|_{\psi_1}.
\end{align}
In order to bound the right-hand-side of~\eqref{eq:subexp1}, we need to study the tail of the random variable
\begin{align*}
\bigg\langle\x\otimes \M\Dv \z - \E[\x\otimes \M\Dv \z],\vct{u}\bigg\rangle
\end{align*}
for any unit-norm vector $\vct{u}\in\R^{kd}$. To simplify this random variable define $\widetilde{\mtx{U}}\in\R^{d\times d}$ by arranging every $k$ entries of $\vct{u}$ as the rows of $\widetilde{\mtx{U}}$. Note that $\vct{u}=$vect$(\widetilde{\mtx{U}})$. Thus Lemma~\ref{Pre2} allows us to rewrite this random variable in the simplified form
\begin{align}
\label{tempsubexp}
\langle\x\otimes \M\Dv \z - \E[\x\otimes \M\Dv \z],\vct{u}\rangle=\vct{x}^T\mtx{\widetilde{U}} \M \Dv\vct{z}-\E[\vct{x}^T\mtx{\widetilde{U}} \M \Dv\vct{z}],
\end{align}
with $\vct{z}\in\R^k$ defined as $\z:= \phi'(\W\x) - \E[\phi'(\W\x)]$. To characterize the tails of the latter random variable we use the following lemma proven in Appendix~\ref{proof:subexp2}.
\begin{proposition}\label{pro:subexp2}
Let $\vct{x}\in\R^d$ be a random Gaussian vector distributed as $\mathcal{N}(\vct{0},\mtx{I}_d)$. Also define  $\z:= \phi'(\W\x) - \E[\phi'(\W\x)]$. Then,
\begin{align} 
\label{HW} 
\mathbb{P}\bigg\{\abs{\vct{x}^T\mtx{U}\z-\E[\vct{x}^T\mtx{U}\z]}\ge t\bigg\}\le 2\exp\left(-\frac{1}{C}\min\left(\frac{t^2}{K^4\fronorm{\mtx{U}}^2},\frac{t}{K^2\opnorm{\mtx{U}}}\right)\right), 
\end{align}
holds with $C$ a fixed numerical constant and $K= \sqrt{L^2\sigma_{\max}^2(\mtx{W})+1}$.
\end{proposition}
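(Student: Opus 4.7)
The plan is to recast the random variable $\vct{x}^T\mtx{U}\vct{z}$ as a quadratic form in a Lipschitz function of the standard Gaussian $\vct{x}$, and then apply a Hanson--Wright type inequality for such quadratic forms. To this end, I introduce the concatenated random vector $\vct{y}:=\begin{pmatrix}\vct{x}\\ \vct{z}\end{pmatrix}\in\R^{d+k}$ and the symmetric block matrix
\[
\mtx{B}:=\tfrac{1}{2}\begin{pmatrix}\mtx{0}&\mtx{U}\\ \mtx{U}^T&\mtx{0}\end{pmatrix}\in\R^{(d+k)\times(d+k)},
\]
so that $\vct{x}^T\mtx{U}\vct{z}=\vct{y}^T\mtx{B}\vct{y}$. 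Because the eigenvalues of the inner block matrix are $\pm\sigma_i(\mtx{U})$, a direct calculation gives $\fronorm{\mtx{B}}^2=\fronorm{\mtx{U}}^2/2$ and $\opnorm{\mtx{B}}=\opnorm{\mtx{U}}/2$.

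Next, I would verify that the map $\psi:\vct{x}\mapsto\vct{y}(\vct{x})$ is $K$-Lipschitz with $K=\sqrt{L^2\sigma_{\max}^2(\mtx{W})+1}$. Since $|\phi''|\le L$ implies that $\phi'$ is $L$-Lipschitz, the map $\vct{x}\mapsto\phi'(\mtx{W}\vct{x})$ is $L\sigma_{\max}(\mtx{W})$-Lipschitz, and subtracting the deterministic mean $\E[\phi'(\mtx{W}\vct{x})]$ preserves this. Stacking the identity map $\vct{x}\mapsto\vct{x}$ (Lipschitz constant $1$) with $\vct{x}\mapsto\vct{z}(\vct{x})$ then gives an overall Lipschitz constant of $\sqrt{1+L^2\sigma_{\max}^2(\mtx{W})}=K$. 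By Gaussian concentration, for every $1$-Lipschitz (in particular, every $1$-Lipschitz convex) function $f:\R^{d+k}\to\R$, the variable $f(\vct{y})$ is sub-Gaussian around its mean with scale of order $K$; equivalently, $\vct{y}$ satisfies the convex concentration property with constant of order $K$.

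Finally, I invoke the Hanson--Wright inequality for random vectors satisfying the convex concentration property: for any symmetric $\mtx{B}$,
\[
\mathbb{P}\Big(\big|\vct{y}^T\mtx{B}\vct{y}-\E[\vct{y}^T\mtx{B}\vct{y}]\big|\ge t\Big)\le 2\exp\Big(-c\min\Big(\tfrac{t^2}{K^4\fronorm{\mtx{B}}^2},\tfrac{t}{K^2\opnorm{\mtx{B}}}\Big)\Big).
\]
Substituting the norms of $\mtx{B}$ computed above and absorbing numerical constants into $c$ yields the stated tail bound for $\vct{x}^T\mtx{U}\vct{z}$. The main technical obstacle is the last Hanson--Wright step; it can be handled either by citing the existing generalization of Hanson--Wright to vectors with the convex concentration property, or by reproving it in our setting through the standard route: split $\mtx{B}=\mtx{B}_+-\mtx{B}_-$ into its PSD parts, note that $\vct{y}^T\mtx{B}_\pm\vct{y}=\twonorm{\mtx{B}_\pm^{1/2}\vct{y}}^2$, apply Gaussian concentration to the $K\opnorm{\mtx{B}_\pm^{1/2}}$-Lipschitz maps $\vct{x}\mapsto\twonorm{\mtx{B}_\pm^{1/2}\psi(\vct{x})}$, and then square and combine the two halves to get the mixed Gaussian/exponential tail with the correct $\fronorm{\cdot}$ and $\opnorm{\cdot}$ dependence.
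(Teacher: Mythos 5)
Your argument is correct and follows essentially the same route as the paper: encode $\vct{x}^T\mtx{U}\vct{z}$ as a quadratic form in the concatenated vector $(\vct{x},\vct{z})$ with the off-diagonal block matrix, show this vector satisfies the convex concentration property with constant $K=\sqrt{L^2\sigma_{\max}^2(\mtx{W})+1}$ (via the $K$-Lipschitz dependence on $\vct{x}$ plus Gaussian isoperimetry), and then invoke Adamczak's Hanson--Wright inequality for vectors with the convex concentration property. The only cosmetic difference is that you carry the factor $\tfrac12$ inside $\mtx{B}$, whereas the paper uses the unnormalized block matrix and replaces $t$ by $2t$ at the end; both bookkeepings yield the stated bound.
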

We apply Proposition~\ref{pro:subexp2}, with $\mtx{U} = \mtx{\widetilde{U}} \M \Dv$ and use Lemma~\ref{Pre3}, to conclude that
\begin{align}
\Big\|\x^T \mtx{\widetilde{U}} \M \Dv \z - \E[\x^T \mtx{\widetilde{U}} \M \Dv \z ]\Big\|_{\psi_1}&\le
 C(L^2\sigma^2_{\max}(\W)+1) \fronorm{\mtx{\widetilde{U}} \M \Dv},\nonumber\\
&\le C(L^2\sigma_{\max}^2(\W)+1)\,  \sigma_{\max}(\M \Dv) \fronorm{\mtx{\widetilde{U}}} ,\nonumber\\
&\le \frac{C}{\sigma_{\min}(\Gammab\W)} (L^2\sigma_{\max}^2(\W)+1)\,.\label{eq:subexp3}
\end{align}
In the last inequality we used the fact that $\fronorm{\widetilde{\mtx{U}}}=\twonorm{\vct{u}}=1$.
Now since $\vct{u}$ was arbitrary, recalling Definition~\ref{def2:subexp} and combining equations \eqref{eq:subexp1}, \eqref{tempsubexp} and \eqref{eq:subexp3} we conclude that
\begin{align}\label{eq:subexp-impt}
\|\bJc_{\x}\|_{\psi_1} \le \frac{C}{\sigma_{\min}(\Gammab\W)} (L^2\sigma_{\max}^2(\W)+1)\,.
\end{align}
\subsubsection{Bounding minimum singular value}
Now that we have a bound on the sub-exponential norm of the columns of $\bJc$, we are now ready to bound $\sigma_{\min}(\bJc)$. To this aim we first state a lemma on the spectrum of matrices with independent sub-exponential columns. The Lemma is similar to~\cite[Theorem 3.2]{adamczak2011restricted} and we give a proof in Appendix~\ref{app:mineig}.
\begin{proposition}\label{pro:mineig}
Let $\vct{u_1}, \vct{u_2}, \dotsc, \vct{u_n}$ be independent sub-exponential random vectors and also let $\psi = \max_{1\le i\le n}\, \|\vct{u_i}\|_{\psi_1}$.  
Furthermore, let $\mtx{U}$ be a matrix with $\vct{u_1}, \dotsc, \mtx{u_n}$ as columns. Define $\eta_{\min} = \min_{1\le i\le n} \|\vct{u_i}\|$ and $\eta_{\max} = \max_{1\le i\le n} \|\vct{u_i}\|$.
Further, set $\xi = \psi K+K'$, where
$K,K'\ge 1$ are arbitrary but fixed. Also define
\begin{align*}
\Delta = C\xi^2 \eta_{\min} \sqrt{n} \log \Big({\frac{2\eta_{\min}}{\sqrt{n}}}\Big).
\end{align*}
Then
\begin{align*}
\sigma_{\min}^2(\mtx{U}) \ge \eta_{\min}^2 - \Delta\quad\text{and}\quad\sigma_{\max}^2(\mtx{U}) \le \eta_{\max}^2 + \Delta,
\end{align*}
hold with probability larger than
\[
1 - C\exp\Big(-cK\sqrt{n}\log \Big({\frac{2\eta_{\min}}{\sqrt{n}}} \Big) \Big) -\P\Big(\eta_{\max} \ge K'\eta_{\min} \Big)\,.
\]
Here, $c,C>0$ are universal constants.
\end{proposition}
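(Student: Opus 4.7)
The plan is to decompose $\mtx{U}^T\mtx{U} = \mtx{D} + \mtx{H}$, where $\mtx{D}=\diag(\twonorm{\vct{u_1}}^2,\ldots,\twonorm{\vct{u_n}}^2)$ and $\mtx{H}$ is the hollow matrix with entries $H_{ij}=\langle \vct{u_i},\vct{u_j}\rangle$ for $i\neq j$ and zeros on the diagonal. For any unit $\vct{v}\in S^{n-1}$,
\begin{align*}
\twonorm{\mtx{U}\vct{v}}^2 \;=\; \sum_{i=1}^n v_i^2\twonorm{\vct{u_i}}^2 + \vct{v}^T\mtx{H}\vct{v}\,,
\end{align*}
so minimizing and maximizing over $\vct{v}$ give $\sigma_{\min}^2(\mtx{U})\ge \eta_{\min}^2 - \opnorm{\mtx{H}}$ and $\sigma_{\max}^2(\mtx{U})\le \eta_{\max}^2 + \opnorm{\mtx{H}}$. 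The entire proposition therefore reduces to showing $\opnorm{\mtx{H}}\le\Delta$ with the advertised probability.

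\textbf{Truncation via the column-norm event.} I would work on the event $\Omega = \{\eta_{\max}\le K'\eta_{\min}\}$, whose failure probability $\P(\eta_{\max}\ge K'\eta_{\min})$ is absorbed into the final probability loss. On $\Omega$ every column obeys $\twonorm{\vct{u_i}}\le K'\eta_{\min}$, which supplies uniform deterministic control on the worst-case magnitude of each $\langle \vct{u_i},\vct{u_j}\rangle$ via Cauchy--Schwarz, while the sub-exponential parameter $\psi$ still governs the typical scale of each inner product.

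\textbf{Operator norm via decoupling, Bernstein, and an $\epsilon$-net.} For fixed $\vct{v}\in S^{n-1}$, write $\vct{v}^T\mtx{H}\vct{v} = 2\sum_{i<j} v_i v_j \langle \vct{u_i},\vct{u_j}\rangle$. By the de la Pe\~na--Gin\'e decoupling inequality this quadratic form is tail-equivalent, up to a universal constant, to $\langle \mtx{U}\vct{v},\mtx{U}'\vct{v}\rangle$ where $\{\vct{u}'_j\}$ is an independent copy of $\{\vct{u}_j\}$. Conditioning on the primed copy reduces the object to a sum of $n$ independent sub-exponential linear functionals $v_i\langle \vct{u_i},\mtx{U}'\vct{v}\rangle$, each with $\psi_1$-norm at most $\psi\cdot\twonorm{v_i \mtx{U}'\vct{v}}\lesssim \psi K'\eta_{\min}$ on $\Omega$. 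A standard Bernstein inequality for sums of independent sub-exponentials together with Lemma~\ref{Pre3} gives, for each fixed $\vct{v}$,
\begin{align*}
\P\bigl(|\vct{v}^T\mtx{H}\vct{v}|\ge t\bigr) \;\le\; 2\exp\!\left(-c\min\!\left(\frac{t^2}{\psi^2 (K')^2\eta_{\min}^2 n},\frac{t}{\psi K' \eta_{\min}}\right)\right)\,.
\end{align*}
A standard $\epsilon$-net over $S^{n-1}$ of cardinality at most $9^n$, combined with a union bound, converts this pointwise deviation into a bound on $\opnorm{\mtx{H}}$. Choosing the target deviation of order $\xi^2 \eta_{\min}\sqrt n\,\log(2\eta_{\min}/\sqrt n)$ makes the Bernstein exponent dominate the $n\log 9$ cost of the net and delivers the required bound $\opnorm{\mtx{H}}\le\Delta$.

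\textbf{Main obstacle.} The delicate point is that the naive sub-exponential Bernstein tail only produces an exponent of order $K\sqrt n$, whereas the $\epsilon$-net costs $\asymp n$ in the exponent; a direct union bound therefore falls short by a polynomial factor. Gaining back the missing $\sqrt n$ requires a chaining/truncation refinement in the spirit of~\cite{adamczak2011restricted}, which is exactly where the extra factor of $\log(2\eta_{\min}/\sqrt n)$ in $\Delta$ enters and why the proposition is stated with the two free parameters $K,K'$: they must be balanced so that the truncation-failure probability $\P(\eta_{\max}\ge K'\eta_{\min})$ and the chained Bernstein tail contribute at the same order to the final probability estimate.
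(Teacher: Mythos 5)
Your reduction is exactly the one the paper uses: writing $\twonorm{\mtx{U}\vct{v}}^2 = \sum_i v_i^2\twonorm{\vct{u_i}}^2 + \vct{v}^T\mtx{H}\vct{v}$ for the hollow Gram matrix $\mtx{H}$, observing that $\opnorm{\mtx{H}}$ is precisely the quantity $B_n^2$ defined in the paper's proof, and conditioning on $\{\eta_{\max} \le K'\eta_{\min}\}$. Up to that point you and the paper are identical.

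The gap is in the step you flag yourself as the ``main obstacle.'' You correctly observe that decoupling plus Bernstein plus a $9^n$-cardinality net yields an exponent of order $K\sqrt{n}$ while the union bound over the net costs an exponent of order $n$, so the naive argument is short by a $\sqrt{n}$ factor. But you then assert, without carrying it out, that an Adamczak-style chaining/truncation refinement recovers this and produces the $\log(2\eta_{\min}/\sqrt{n})$ factor. That is precisely the content of the proof, and it is left open. The paper handles this by recalling a specific tail bound from Adamczak (its Lemma \ref{lem:B}, which controls $\P(B_n^2 \ge \max\{B^2,\eta_{\max}B,24\theta\eta_{\max}^2\})$ for $B = C_0\psi K\sqrt{n}\log(2/\theta)$), then amplifying the parameter to $K_1 = K\theta\eta/(\sqrt{n}\log(2/\theta)) \ge K$ and using the event $\eta_{\max}\le K'\eta$ to reduce the three-way maximum to $C_1\theta\eta^2(\psi K + K')^2$ (Lemma \ref{lem:Bn}), and finally choosing $\theta$ to solve $n\log^2(2/\theta) = \theta^2\eta_{\min}^2$ so that $\sqrt{n}/\eta_{\min} < \theta < (\sqrt{n}/\eta_{\min})\log(2\eta_{\min}/\sqrt{n})$, which is exactly where the logarithm in $\Delta$ comes from. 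None of these steps --- the precise statement of the recalled tail bound, the parameter amplification, or the choice of $\theta$ --- appear in your proposal. Your diagnosis of the obstacle is accurate and shows you understand why a plain Bernstein-plus-net bound cannot work, but a diagnosis is not a cure: without the concrete chaining input and the subsequent bookkeeping, the bound $\opnorm{\mtx{H}}\le\Delta$ with the stated probability is not established.
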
 
To apply Proposition~\ref{pro:mineig}, we only need to control norms of columns $\bJc_{\x}$  (and thus the parameters $\eta_{\min}$ and $\eta_{\max}$ in the proposition statement). To lighten the notation, we let $\z = \M \Dv (\phi'(\W\x) - \E[\phi'(\W\x)])$. The column of $\DM\bJ$ corresponding to data point $\x$ is given by $\x\otimes \z$ and the column $\bJc_{\x}$ is obtained after dropping the entries $x_i z_i$, for $1\le i\le d$. Hence,
\begin{align}
\label{temp1}
\twonorm{\bJc_{\x}}^2= \sum_{i=1}^d z_i^2 (\sum_{j\neq i} x_j^2) \,.
\end{align}
We continue by stating a lemma that bounds the Euclidean norm of the random vector $\vct{z}$. We defer the proof of this lemma to Appendix~\ref{app:twonormconc}.
\begin{lemma}\label{twonormconc} Assume $\vct{x}$ is a Gaussian random vector distributed as $\mathcal{N}(\vct{0},\mtx{I})$. Furthuremore, assume $\phi:\R\to \R$ is an activation function with bounded second derivative, i.e.~$\abs{\phi''}\le L$. Define
\begin{align*}
\rho(\mtx{W}):=L\frac{\sigma_{\max}\left(\mtx{W}\right)}{\sigma_{\min}\left(\Gammab\W\right)}.
\end{align*}
Then for $\z = \M \Dv(\phi'(\W\x) -\E[\phi'(\W\x)])$, 
\begin{align}
\label{lbnd}
\sqrt{d}-t\rho(\mtx{W}) \le \twonorm{\z} \le (\sqrt{d} +t )\rho(\mtx{W})\,, 
\end{align}
holds with probability at least $1-2e^{-\frac{t^2}{2}}$. 
\end{lemma}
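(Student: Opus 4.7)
The plan is to exhibit $\twonorm{\vct{z}}$ as a $\rho(\mtx{W})$-Lipschitz function of the Gaussian vector $\vct{x}$, apply the Cirelson--Ibragimov--Sudakov concentration inequality, and then pin the expectation $\E[\twonorm{\vct{z}}]$ down to the interval $[\sqrt{d},\sqrt{d}\,\rho(\mtx{W})]$. The two-sided bound in the lemma then follows immediately. The Lipschitz step is a short computation: differentiating gives $\nabla_{\vct{x}}\vct{z} = \mtx{M}\mtx{D}_{\vct{v}}\mtx{D}_{\phi''(\mtx{W}\vct{x})}\mtx{W}$, and if we take $\mtx{M}$ so that $\mtx{M}\mtx{D}_{\vct{v}}$ is the minimum-norm left inverse of $\Gammab\mtx{W}$ (possible since $\Gammab\mtx{W}$ has full column rank whenever $\sigma_{\min}(\Gammab\mtx{W})>0$), then $\opnorm{\mtx{M}\mtx{D}_{\vct{v}}}=1/\sigma_{\min}(\Gammab\mtx{W})$, and combined with $|\phi''|\le L$ and $\opnorm{\mtx{W}}=\sigma_{\max}(\mtx{W})$ this forces $\opnorm{\nabla_{\vct{x}}\vct{z}}\le \rho(\mtx{W})$. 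Since $\twonorm{\cdot}$ is $1$-Lipschitz, Gaussian concentration yields $|\twonorm{\vct{z}}-\E[\twonorm{\vct{z}}]|\le t\rho(\mtx{W})$ with probability at least $1-2e^{-t^2/2}$.

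For the upper bound on the mean, the Gaussian Poincar\'e inequality applied to the zero-mean vector $\vct{z}$ gives $\E[\twonorm{\vct{z}}^2]\le \E[\fronorm{\nabla_{\vct{x}}\vct{z}}^2]\le d\,\rho(\mtx{W})^2$, since $\nabla_{\vct{x}}\vct{z}\in\R^{d\times d}$ has at most $d$ singular values each bounded by $\rho(\mtx{W})$; Jensen then gives $\E[\twonorm{\vct{z}}]\le \sqrt{d}\,\rho(\mtx{W})$. For the lower bound, Stein's lemma yields $\E[\phi'(\vct{w}_i^T\vct{x})\vct{x}]=\gamma_\phi(\twonorm{\vct{w}_i})\vct{w}_i$, which in matrix form reads $\E[(\phi'(\mtx{W}\vct{x})-\E[\phi'(\mtx{W}\vct{x})])\vct{x}^T]=\Gammab\mtx{W}$, and consequently $\E[\vct{z}\vct{x}^T]=\mtx{M}\mtx{D}_{\vct{v}}\Gammab\mtx{W}=\mtx{I}_d$. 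Taking traces gives $\E[\vct{z}^T\vct{x}]=d$, and Cauchy--Schwarz on expectations then forces $d^2\le\E[\twonorm{\vct{z}}^2]\cdot d$, i.e.\ $\E[\twonorm{\vct{z}}^2]\ge d$. Applying Poincar\'e a second time to the scalar $\rho(\mtx{W})$-Lipschitz function $\twonorm{\vct{z}}$ gives $\operatorname{Var}(\twonorm{\vct{z}})\le \rho(\mtx{W})^2$, so $(\E[\twonorm{\vct{z}}])^2 \ge d-\rho(\mtx{W})^2 \ge (\sqrt{d}-\rho(\mtx{W}))^2$ whenever $\rho(\mtx{W})\le \sqrt{d}$; the residual additive $\rho(\mtx{W})$ term is absorbed into the deviation parameter $t\rho(\mtx{W})$ of the Cirelson bound by relabeling $t$ (and the lower bound is vacuous otherwise).

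The principal obstacle is the first-moment lower bound on $\twonorm{\vct{z}}$: Jensen's inequality runs the wrong way, so passing from $\E[\twonorm{\vct{z}}^2]\ge d$ to $\E[\twonorm{\vct{z}}]\gtrsim \sqrt{d}$ genuinely requires the auxiliary variance estimate, which in turn depends on having first established the Lipschitz constant $\rho(\mtx{W})$. Stein's lemma is the other indispensable ingredient: it is what reveals $\vct{z}$ to be a ``whitened'' copy of $\vct{x}$ (in the sense $\E[\vct{z}\vct{x}^T]=\mtx{I}_d$) and thereby pins the second moment at its floor of $d$.
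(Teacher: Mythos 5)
Your proof is correct and rests on the same three ingredients as the paper's: the Lipschitz constant $\rho(\W)$ for $\x\mapsto\twonorm{\z}$, Gaussian concentration, and Stein's lemma to pin $\E[\twonorm{\z}^2]$ between $d$ and $d\,\rho(\W)^2$. The differences are in how the sub-arguments are arranged. For the floor $\E[\twonorm{\z}^2]\ge d$, the paper decomposes $\z=\vct{e}+\x$ with $\vct{e}=\M\Dv\bigl(\phi'(\W\x)-\mub-\Gammab\W\x\bigr)$ and checks $\E[\langle\vct{e},\x\rangle]=0$ from the centering identity; you instead note $\E[\z\x^T]=\mtx{I}_d$, take the trace, and apply Cauchy--Schwarz — a tidier route, but both hinge on the identical Stein computation. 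For the ceiling, the paper exploits the $L$-Lipschitz property of $\phi'$ directly after shifting by $\phi'(0)\mtx{1}_k$, whereas you apply the Gaussian Poincar\'e inequality coordinatewise to the centered vector $\z$; these are interchangeable here and give the same $\rho^2(\W)\,d$. Finally, the paper centers its concentration at $\sqrt{\E[\twonorm{\z}^2]}$ rather than at $\E[\twonorm{\z}]$, which makes both endpoints of \eqref{lbnd} drop out immediately without the extra variance estimate and $t$-relabeling you perform at the end; your additional Poincar\'e bound on $\operatorname{Var}(\twonorm{\z})$ is exactly what reconciles the two centering choices, so you are in effect spelling out a mean-versus-RMS comparison that the paper's statement of concentration quietly absorbs.
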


Further, by concentration of $\chi^2$ random variables, with probability at least $1-2e^{-(d-1)/32}$, the following bounds hold. For any fixed $1\le i\le n$,
\begin{align}\label{eq:chi2}
 \frac{d-1}{2}\le \Big|\sum_{j\neq i} x_j^2 \Big| \le  \frac{3(d-1)}{2}\,.
\end{align}

Plugging bound \eqref{lbnd}, with $t=2\sqrt{\log n}$, and bound~\eqref{eq:chi2} into \eqref{temp1}, the followings inequalities hold with probability at least $1-2n^{-2} - 2e^{-(d-1)/32}$,
\begin{align}
\twonorm{\bJc_{\x}} &\ge \sqrt{\frac{d-1}{2}} (\sqrt{d}-2\rho(\W) \sqrt{\log n})\ge \frac{1}{3}\sqrt{d(d-1)}\,,\label{tmp0}\\
\twonorm{\bJc_{\x}} &\le \sqrt{\frac{3(d-1)}{2}} (\sqrt{d}+2\sqrt{\log n})\rho(\W) \le 3\sqrt{{d(d-1)}}\rho(\W)\,.
\end{align}
Note that the second inequality in~\eqref{tmp0} holds because by our assumption~\eqref{ass1-1},
\begin{align}
\label{cond}
\rho^2(\W) <\frac{L^2\sigma^2_{\max}\left(\mtx{W}\right)+1}{\sigma^2_{\min}\left(\Gammab\mtx{W}\right)}\le c_1\frac{d}{\sqrt{n}} < \frac{d}{16\log n}\,.
\end{align}

Therefore, by union bounding over $n$ columns with probability at least $1-2n^{-1}-2n e^{-(d-1)/32}$ we have 
\begin{align}
\eta_{\min} &\ge \frac{1}{3}\sqrt{d(d-1)}\,,\label{eq:eta-min}\\
\eta_{\max}&\le  3\sqrt{d(d-1)}\rho(\W)\,. \label{eq:eta-max}
\end{align}
Also, by~\eqref{eq:subexp-impt}, the maximum sub-exponential norms of the columns is bounded by 
\begin{align}\label{eq:subexp-impt2}
\psi \le \frac{C}{\sigma_{\min}(\Gammab\W)} (L^2\sigma_{\max}^2(\W)+1)\,.
\end{align} 
We now have all the elements to apply Proposition~\ref{pro:mineig}. In particular we use $K = 1$ and $K' = 10 \rho(\W)$, to conclude that
\begin{align}
\sigma_{\min}^2(\bJc) \ge \eta_{\min}^2 \left(1 - C (\psi + \rho(\W))^2 \frac{\sqrt{n}}{\eta_{\min}} \log\left(\frac{2\eta_{\min}}{\sqrt{n}} \right) \right)\,,
\end{align}
holds with probability at least 
\[
1- C\exp\left(-c\sqrt{n} \log \left(\frac{2\eta_{\min}}{\sqrt{n}}\right) \right)\,.
\]
Using the derived lower bound on $\eta_{\min}$ and the upper bound on $\psi$, we obtain that with high probability $\sigma_{\min}^2(\bJc) \ge d/2$, as long as 
\begin{align}
\frac{L^4 \sigma^4_{\max}\left(\W\right)+1}{\sigma_{\min}^2\left(\Gammab\W\right)}\le c_1 \frac{d}{\sqrt{n}} \,,
\end{align}
which holds true by the assumption given in Equation~\eqref{ass1-1}.

Finally by \eqref{mahdicentered}, we conclude that 
\begin{align}
\sigma_{\min}(\bJ) \ge \frac{d}{2} \sigma_{\min}(\Dv\Gammab\W)\,.
\end{align}

Note that under Assumption~\ref{ass-phi} (b), we skip the whitening step and by following the same proof, we obtain  
\begin{align}
\sigma_{\min}(\bJ) \ge \frac{d}{2} \,.\label{assb-min}
\end{align}

\subsubsection{Bounding maximum singular value}
The argument we used for the minimum eigenvalue
does not apply to the largest eigenvalue.
This is because $\DM$ is a fat matrix
and the maximum eigenvalue of 
$\DM \tJ$ does not provide any information about the maximum eigenvalue of $\tJ$. To see this clearly, consider the case where the column space of $\tJ$ intersects with the null space of $\DM$ e.g.~when maximum eigenvector of $\tJ$ is in the null space of $\DM$.
For bounding $\sigma_{\max}(\bJ)$, instead of whitening and dropping some of the row, we center $\bJ$ in the most natural way. Specifically, in this section we let $\bJc:= \bJ - \E[\bJ]$. Then, 
\[\bJc_{\x} =  \tJ_{\x} - \E[\tJ_{\x}] = \x\otimes \Dv\z - \E[\x\otimes \Dv\z]\,, \]
with $\z = \phi'(\W\x)-\E[\phi'(\W\x)]$. 
Applying Proposition~\ref{pro:subexp2}, we get
\begin{align}
\|\bJc_{\x}\|_{\psi_1} \le {C}\|\vb\|_\infty (L^2\sigma_{\max}^2(\W)+1)\,.
\end{align}
Similar to Section~\ref{pro:mineig}, we can bound the maximum and the minimum norms of the columns of $\bJc$. With probability at least $1-2n^{-1}-2n e^{-(d-1)/32}$,
\begin{align}
\eta_{\min}\ge d/3\,,\quad \eta_{\max}\le 3d\rho(\W)\,.
\end{align}
By employing Proposition~\ref{pro:mineig}, 
\begin{align}
\sigma_{\max}^2(\bJc) &\le C \left (d^2+ \sqrt{n} d\|\vb\|_\infty^2\, (L^4\sigma_{\max}^4(\W)+1) + \sqrt{n}d\rho^2(\W) \right) \nonumber \\
&\le Cd^2\left(1+ c_1\|\vb\|_\infty^2\, \sigma_{\min}^2(\Gammab \W)+c_1\right)\nonumber\\
&\le Cd^2\left(1+ \|\vb\|_\infty^2\, \sigma_{\min}^2(\Gammab \W)\right)\,,\label{sigmaxJc}
\end{align}
where the second inequality follows from our assumption given by Equation~\eqref{ass1-1}. 

We next bound the maximum eigenvalue of $\E[\bJ]$. Invoking Lemma~\ref{lem:center1}, we have
\[
\E[\bJ] = \DA\Q = \vec(\A) \mathbf{1}^T_n\,,
\]
where $\mathbf{1}_n\in \R^n$ is the all-one vector. Therefore, 
\begin{align}
\sigma_{\max}(\E[\bJ]) \le \sqrt{n} \|\A\|_F \le \sqrt{n d}\, \sigma_{\max}(\Dv\Gammab \W)\,.\label{sigmaxQ}
\end{align}
Combining Equations~\eqref{sigmaxJc} and \eqref{sigmaxQ} via the triangular inequality we arrive at:
\begin{align*}
\sigma_{\max}(\bJ) &\le Cd\Big(1+ \|\vb\|_\infty\, \sigma_{\min}(\Gammab \W)\Big) + \sqrt{n d}\, \sigma_{\max}(\Dv\Gammab \W),\\
&\le C\left(d+ \sqrt{nd}\, \sigma_{\max}(\Dv\Gammab \W)\right)\,,
\end{align*}
where the second inequality holds because $n\ge d$ and $\|\vb\|_\infty \sigma_{\min}(\Gammab \W) \le \sigma_{\max}(\Dv\Gammab \W)$ as per Lemma~\ref{Pre4}. 

Note that under Assumption~\ref{ass-phi} (b), matrix $\bJ$ is centered and the whitening step should be skipped. Indeed, by a similar argument for inequality~\eqref{sigmaxJc}, we have
\begin{align}
\sigma_{\max}^2(\bJ) &\le C \left (d^2+ \sqrt{n} d v_{\max}^2 (L^4\sigma_{\max}^4(\W)+1) + \sqrt{n}d\sigma_{\max}^2(\W) \right),\nonumber\\
&\le Cd^2 (1+c_1 v_{\max}^2+c_1),\nonumber\\
&\le C d^2\,. \label{assb-max}
\end{align}

\subsection{Proof of Theorem~\ref{localthm}}\label{pfthmloc}
We begin by stating a few useful lemmas. We defer the proofs of these lemmas to the Appendices.
Throughout, we assume $|\phi'| < B$ and $|\phi''| <L$. We present the proof under Assumption~\ref{ass-phi} (a). The proof under Assumption~\ref{ass-phi} (b) follows by an analogous argument. We begin by a lemma that bound the perturbation of the Jacobian matrix as a function of its inputs. We defer the proof to Appendix~\ref{proof:Jpert}.
\begin{lemma}\label{Jpert} Assume $\vct{x}_i$ are i.i.d.~Gaussian random vectors distributed as $\mathcal{N}(\vct{0},\mtx{I})$. Further, suppose that $n\le c d^2$, and let $\J(\vb,\W) = \Dv\phi'(\W\X)\ast\X$ be the Jacobian matrix associated to weights $\vb$ and $\W$. Then, for any two fixed matrices $\tW,\W\in\R^{k\times d}$ and any two fixed vectors $\vb,\tv \in \R^{k}$, 
\begin{align}\label{eq:Jpert}
\opnorm{J(\tv,\tW)-J(\vb,\W)}\le C'\sqrt{nd}\, \Big(\|\vb\|_\infty \opnorm{\tW-\W} + \|\tv-\vb\|_\infty\Big) ,
\end{align}
holds with probability at least  $1 - ne^{-b_1\sqrt{n}} - n^{-1} - 2ne^{-b_2 d}$. Here, $b_1,b_2>0$ are fixed numerical constants.
\end{lemma}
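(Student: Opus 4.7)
}
The starting point is the telescoping decomposition
\begin{align*}
\J(\tv,\tW)-\J(\vb,\W)
&= \big[\Dv\bigl(\phi'(\tW\X)-\phi'(\W\X)\bigr)\big]\ast \X
+ \big[(\mtx{D}_{\tv}-\Dv)\phi'(\tW\X)\big]\ast \X \\
&=: \mtx{T}_1 + \mtx{T}_2,
\end{align*}
so it suffices to bound $\opnorm{\mtx{T}_1}$ and $\opnorm{\mtx{T}_2}$ separately. For any matrix $\mtx{A}\in\R^{k\times n}$, Lemma~\ref{Pre5} gives $(\mtx{A}\ast\X)^T(\mtx{A}\ast\X)=(\mtx{A}^T\mtx{A})\circ(\X^T\X)$, and since $\mtx{A}^T\mtx{A}$ is PSD, the Schur product inequality yields
\begin{align*}
\opnorm{\mtx{A}\ast\X}^2 \;\le\; \max_{1\le i\le n}(\mtx{A}^T\mtx{A})_{ii}\cdot \opnorm{\X^T\X} \;=\; \max_i\twonorm{\mtx{A}_i}^2\cdot \opnorm{\X}^2.
\end{align*}
This reduces the problem to controlling the maximum column norm of the corresponding $\mtx{A}$, together with $\opnorm{\X}$.

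For $\mtx{T}_1$, the $i$-th column of the interior matrix is $\Dv(\phi'(\tW\x_i)-\phi'(\W\x_i))$, and since $|\phi''|\le L$, applying the mean value theorem entrywise gives
\begin{align*}
\twonorm{\Dv(\phi'(\tW\x_i)-\phi'(\W\x_i))} \;\le\; \|\vb\|_\infty\, L\, \twonorm{(\tW-\W)\x_i} \;\le\; \|\vb\|_\infty\, L\, \opnorm{\tW-\W}\, \twonorm{\x_i}.
\end{align*}
For $\mtx{T}_2$, the $i$-th column is $(\tv-\vb)\circ \phi'(\tW\x_i)$, and $|\phi'|\le B$ gives a bound on its norm in terms of $\|\tv-\vb\|_\infty$. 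Standard Gaussian concentration supplies $\max_i\twonorm{\x_i}\le C\sqrt{d}$ and $\opnorm{\X}\le C\sqrt{n}$ (valid since $n\ge d$) on an event of the requisite probability; both events can in fact be extracted from the estimates already used in the proof of Proposition~\ref{cor:log-eig} (e.g.\ via Corollary~\ref{cor:ldentity} and standard $\chi^2$ tails, see Equation~\eqref{eq:chi2}), which is how the failure probability $1-ne^{-b_1\sqrt n}-n^{-1}-2ne^{-b_2 d}$ arises.

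Combining the three ingredients yields
\begin{align*}
\opnorm{\mtx{T}_1} \;\le\; \|\vb\|_\infty L\,\opnorm{\tW-\W}\cdot\max_i\twonorm{\x_i}\cdot \opnorm{\X} \;\lesssim\; \sqrt{nd}\,\|\vb\|_\infty\opnorm{\tW-\W},
\end{align*}
while the analogous computation for $\mtx{T}_2$ contributes the $\|\tv-\vb\|_\infty$ term scaled by $\sqrt{n}$ times a bound on $\max_i\twonorm{\phi'(\tW\x_i)}$; the triangle inequality then produces the stated estimate. The main technical point to watch is the second term: bounding $\max_i\twonorm{\phi'(\tW\x_i)}$ by the crude $B\sqrt{k}$ wastes a factor, and a tighter bound is obtained by grouping $\phi'(\tW\x_i)$ with its mean and using that the centered part concentrates, essentially as in the sub-exponential norm computation of Section~\ref{proofJ}; this is the step that demands the most care. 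Finally, a union bound over the probabilistic events used for $\max_i\twonorm{\x_i}$ and $\opnorm{\X}$ gives the claimed probability guarantee.
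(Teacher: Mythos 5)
Your overall decomposition is the same as the paper's: a one-parameter-at-a-time telescoping, $\J(\tv,\tW)-\J(\vb,\W) = \mtx{T}_1 + \mtx{T}_2$, followed by the triangle inequality. Your tool for bounding each piece, though, is cleaner and more elementary than the paper's: you use Lemma~\ref{Pre5} plus the Schur-product inequality to get $\opnorm{\mtx{A}\ast\X}\le(\max_i\twonorm{\mtx{A}_i})\cdot\opnorm{\X}$, and then feed in $\opnorm{\X}\lesssim\sqrt n$ and $\max_i\twonorm{\x_i}\lesssim\sqrt d$, whereas the paper invokes the nontrivial estimate $\opnorm{\X\ast\X}\lesssim\sqrt{nd}$ from Corollary~\ref{cor:ldentity} together with a block-diagonal reduction that is not spelled out cleanly. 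For $\mtx{T}_1$ your version works exactly as you describe: the column of the inner factor is $\Dv\left(\phi'(\tW\x_i)-\phi'(\W\x_i)\right)$, whose norm contracts by one factor of $\x_i$ through the operator $\tW-\W$, and the product $\max_i\twonorm{\x_i}\cdot\opnorm{\X}$ produces precisely the $\sqrt{nd}$. That part of the proposal is correct and arguably an improvement on the paper's exposition.

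The gap is in $\mtx{T}_2$, and it is real. You correctly notice that $\max_i\twonorm{(\tv-\vb)\circ\phi'(\tW\x_i)}\le B\sqrt k\,\|\tv-\vb\|_\infty$ only yields $\opnorm{\mtx{T}_2}\lesssim\sqrt{nk}\,\|\tv-\vb\|_\infty$, which is off by $\sqrt{k/d}$ when $k\gg d$. However, the fix you propose — ``grouping $\phi'(\tW\x_i)$ with its mean and using that the centered part concentrates'' — does not close this gap. Write $\phi'(\tW\X) = \mub\mathbf{1}_n^T + \mtx{E}$. The mean part produces the matrix $\left(\mtx{D}_{\tv-\vb}\mub\right)\otimes\X$, whose operator norm equals $\twonorm{\mtx{D}_{\tv-\vb}\mub}\,\opnorm{\X}$; under Assumption~\ref{ass-phi}(a) with $\mu_\phi$ bounded away from zero, $\twonorm{\mub}\asymp\sqrt k$ (see Equation~\eqref{eq:muBounds}), so this contribution is already $\gtrsim\sqrt{nk}\,\|\tv-\vb\|_\infty$. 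The centered part fares no better: $\twonorm{\phi'(\tW\x_i)-\mub}^2 = \sum_\ell\mathrm{Var}(\phi'(\tw_\ell^T\x_i))$ has expectation $\Theta(k)$, so its column norms are also $\Theta(\sqrt k)$. In other words, centering redistributes the $\sqrt k$ factor but does not remove it. The underlying structural difference from $\mtx{T}_1$ is that there $\tW-\W$ supplied the contraction of $\x_i$; in $\mtx{T}_2$ no analogous operator contracts $\phi'(\tW\x_i)$, and no column-norm or Schur-product variant of the argument will produce $\sqrt d$ where the natural scale is $\sqrt k$. For what it is worth, the paper's own treatment of $\mtx{T}_2$ is a single-line ``by an analogous argument'' that silently performs the same substitution, so the written source does not resolve the discrepancy either; but in your proposal the claim that a sub-exponential/centering argument rescues $\sqrt{nd}$ must be regarded as unproven and, for $k\gg d$, apparently unprovable.
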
 
Now note that by Proposition~\ref{cor:log-eig} for $\vb^*$ and $\W^*$, we have
\begin{align}
\sigma_{\min}(\J(\vb^*,\W^*)) \ge c\,\sigma_{\min}(\W^*) d\,.\label{sminW*}
\end{align}
Using this value of $c$ we define the radius $$R: = \frac{c}{4C'} \sigma_{\min}(\W^*) \sqrt{\frac{d}{n}}\,,$$
where $C'$ is the same quantity that appears in Equation~\eqref{eq:Jpert}.  Without loss of generality, we can assume $c < 4C' w_{\max}v_{\max}$. Plugging this into the definition of $R$ together with the fact that $d \le n$, allows us to conclude that $R\le v_{\max}$.

Define the set $\Omega\subseteq \R^{k\times d} \times \R^k$ as follows:
\begin{eqnarray}
\Omega := \Big\{(\vb,\W):\, \|\vb-\vb^*\|_\infty \le R,\,\, \fronorm{\W-\W^*} \le R/\|\vb^*\|_\infty\Big\}
\end{eqnarray}
In the next lemma, proven in Appendix~\ref{proof:aux1}, we relate the gradients $\nabla_{(\mtx{W})} \mathcal{L}(\vct{v},\mtx{W})$ and $\nabla_{\vct{v}} \mathcal{L}(\vct{v},\mtx{W})$ to the function value $ \mathcal{L}(\vct{v},\mtx{W})$.
\begin{lemma}\label{lem:aux1}
For $(\vb,\W)\in \Omega$, the following inequalities hold with probability at least $1 - n^{-1} - 2ne^{-b \sqrt{d}}$ for some constant $b>0$.
\begin{align}
\fronorm{\nabla_{\mtx{W}} \mathcal{L}(\vct{v},\mtx{W})}^2\ge \mL \mathcal{L}(\vct{v},\mtx{W})\,\quad \,\,&\mL:= \frac{1}{2} c^2 \sigma_{\min}^2(\W^*) \frac{d^2}{n}\,,\label{eq:aux1}\\
\fronorm{\nabla_{\mtx{W}} \mathcal{L}(\vct{v},\mtx{W})}^2\le \mU \mathcal{L}(\vct{v},\mtx{W})\,\quad \,\,&\mU:= \frac{9}{2} C^2 \sigma_{\max}^2(\W^*) k\,,\label{eq:aux2}\\
\opnorm{\nabla_{\vct{v}} \mathcal{L}(\vct{v},\mtx{W})}_\infty^2\le \tmU \mathcal{L}(\vct{v},\mtx{W})\,\quad \,\,&\tmU:=  \left(4\phi^2(0)+ 128 B^2 + 128B^2 w_{\max}^2\right)\,.\label{eq:aux3}
\end{align}
\end{lemma}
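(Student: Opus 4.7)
The central identities I will exploit are \eqref{gradvec} and \eqref{gradv}, namely
\[
\nabla_{\mathrm{vect}(\W)}\mathcal{L}(\vb,\W)=\tfrac{1}{n}\J(\vb,\W)\,\r, \qquad \nabla_{\vb}\mathcal{L}(\vb,\W)=\tfrac{1}{n}\phi(\W\X)\,\r,
\]
together with the elementary equality $\mathcal{L}(\vb,\W)=\tfrac{1}{2n}\|\r\|_{\ell_2}^2$. The first two inequalities \eqref{eq:aux1}--\eqref{eq:aux2} reduce immediately to two-sided spectral bounds on $\J(\vb,\W)$ via $\fronorm{\nabla_\W \mathcal{L}}^{2}=\tfrac{1}{n^2}\|\J\r\|_{\ell_2}^{2}$ and the trivial sandwich $\sigma_{\min}^{2}(\J)\|\r\|_{\ell_2}^{2}\le\|\J\r\|_{\ell_2}^{2}\le\sigma_{\max}^{2}(\J)\|\r\|_{\ell_2}^{2}$. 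So it suffices to establish
\[
\sigma_{\min}(\J(\vb,\W))\ge \tfrac{c}{\sqrt{2}}\,\sigma_{\min}(\W^{*})\,d,\qquad \sigma_{\max}(\J(\vb,\W))\le \tfrac{3C}{\sqrt{2}}\,\sigma_{\max}(\W^{*})\sqrt{nk},
\]
uniformly over $(\vb,\W)\in\Omega$.

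The natural route is a perturbation argument around the planted point. Proposition~\ref{cor:log-eig} applied to $(\vb^{*},\W^{*})$ supplies $\sigma_{\min}(\J(\vb^{*},\W^{*}))\ge c\,\sigma_{\min}(\W^{*})d$ and $\sigma_{\max}(\J(\vb^{*},\W^{*}))\le C\,\sigma_{\max}(\W^{*})\sqrt{nk}$ on a high-probability event. Lemma~\ref{Jpert} gives a uniform bound
\[
\opnorm{\J(\vb,\W)-\J(\vb^{*},\W^{*})}\le C'\sqrt{nd}\,(\|\vb^{*}\|_\infty\fronorm{\W-\W^{*}}+\|\vb-\vb^{*}\|_\infty),
\]
which on $\Omega$ is bounded by $2C'\sqrt{nd}\,R = \tfrac{c}{2}\sigma_{\min}(\W^{*})\,d$ by the very choice of $R$. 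Applying Weyl's inequality then yields $\sigma_{\min}(\J(\vb,\W))\ge \tfrac{c}{2}\sigma_{\min}(\W^{*})d$ and $\sigma_{\max}(\J(\vb,\W))\le \sigma_{\max}(\J(\vb^{*},\W^{*}))+\tfrac{c}{2}\sigma_{\min}(\W^{*})d$. Using the sample-size assumption~\eqref{samplesize} to absorb the $d$ term into $\sqrt{nk}\,\sigma_{\max}(\W^{*})$ (and the trivial $\sigma_{\min}(\W^{*})\le\sigma_{\max}(\W^{*})$) gives the claimed upper bound with a slack factor that accounts for the $9/2$ constant in $\mU$. Plugging these bounds into $\fronorm{\nabla_\W\mathcal{L}}^{2}=\tfrac{1}{n^{2}}\|\J\r\|_{\ell_2}^{2}$ and using $\|\r\|_{\ell_2}^{2}=2n\,\mathcal{L}$ produces \eqref{eq:aux1}--\eqref{eq:aux2}.

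For \eqref{eq:aux3}, writing the $q$th coordinate of the gradient as $\partial_{v_q}\mathcal{L}=\tfrac{1}{n}\sum_{i=1}^{n}r_i\,\phi(\langle \wb_q,\x_i\rangle)$ and applying Cauchy--Schwarz gives
\[
(\partial_{v_q}\mathcal{L})^{2}\le \frac{1}{n^{2}}\|\r\|_{\ell_2}^{2}\sum_{i=1}^{n}\phi^{2}(\langle \wb_q,\x_i\rangle)=\frac{2\mathcal{L}}{n}\sum_{i=1}^{n}\phi^{2}(\langle \wb_q,\x_i\rangle).
\]
Using $|\phi(z)|\le|\phi(0)|+B|z|$, we get $\phi^{2}(z)\le 2\phi^{2}(0)+2B^{2}z^{2}$, so the inner sum is controlled by $2n\phi^{2}(0)+2B^{2}\sum_i\langle\wb_q,\x_i\rangle^{2}$. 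Standard $\chi^{2}$ concentration (as in~\eqref{eq:chi2}) yields $\tfrac{1}{n}\sum_i\langle\wb_q,\x_i\rangle^{2}\le 2\|\wb_q\|_{\ell_2}^{2}$ with high probability, and on $\Omega$ we have $\|\wb_q\|_{\ell_2}\le \|\wb_q^{*}\|_{\ell_2}+\fronorm{\W-\W^{*}}\le 2w_{\max}$ by our choice of $R$ and \eqref{boundassthm}. Combining and maximizing over $q$ yields \eqref{eq:aux3} with $\tmU = 4\phi^{2}(0)+128B^{2}+128B^{2}w_{\max}^{2}$; taking a union bound over the $k$ coordinates absorbs into the event whose probability matches the claimed $1-n^{-1}-2ne^{-b\sqrt{d}}$.

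The main technical subtlety is the uniform-in-$\Omega$ control of the Jacobian spectrum: Proposition~\ref{cor:log-eig} is a pointwise statement, while we need it simultaneously for every $(\vb,\W)\in\Omega$. The radius $R$ was chosen precisely so that the deterministic Lipschitz estimate of Lemma~\ref{Jpert} consumes at most half of the base singular-value gap, making a single triangle-inequality step suffice. Verifying that the same high-probability event underlies Proposition~\ref{cor:log-eig}, Lemma~\ref{Jpert}, and the concentration used in \eqref{eq:aux3} (so that the probability bound $1-n^{-1}-2ne^{-b\sqrt{d}}$ is not degraded) is routine but should be tracked carefully.
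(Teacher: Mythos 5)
Your plan for \eqref{eq:aux1}--\eqref{eq:aux2} matches the paper exactly: write $\nabla_{\W}\cL=\tfrac{1}{n}\J\r$, sandwich $\|\J\r\|_{\ell_2}$ between singular values of $\J$, and control $\sigma_{\min}(\J(\vb,\W))$ and $\sigma_{\max}(\J(\vb,\W))$ uniformly on $\Omega$ by combining Proposition~\ref{cor:log-eig} at $(\vb^*,\W^*)$ with the Lipschitz bound of Lemma~\ref{Jpert} and the choice of $R$, via a triangle/Weyl step. For \eqref{eq:aux3} your coordinate-wise Cauchy--Schwarz is the same thing as the paper's $\|\cdot\|_{2,\infty}$ bound on $\phi(\W\X)$; both reduce to controlling $\max_q\sum_i\phi^2(\langle\wb_q,\x_i\rangle)$ through $|\phi(z)|\le|\phi(0)|+B|z|$. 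So the two arguments are in essence identical.

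One step in your sketch for \eqref{eq:aux3} should be tightened. You invoke ``standard $\chi^2$ concentration (as in~\eqref{eq:chi2})'' to claim $\tfrac{1}{n}\sum_i\langle\wb_q,\x_i\rangle^2\le 2\|\wb_q\|_{\ell_2}^2$ and then union-bound over the $k$ coordinates $q$. But \eqref{eq:chi2}-type concentration is for a \emph{fixed} direction, whereas $\wb_q$ ranges over the continuum $\Omega$, so a union bound over $q$ alone does not give a statement uniform in $(\vb,\W)\in\Omega$. The paper sidesteps this by bounding the single random quantity $\opnorm{\X}$ (Bai--Yin), which gives $\sum_i\langle\wb,\x_i\rangle^2\le\|\wb\|_{\ell_2}^2\opnorm{\X}^2$ \emph{for all} $\wb$ simultaneously on one high-probability event; equivalently one can use the operator-norm concentration of the sample covariance $\tfrac{1}{n}\sum_i\x_i\x_i^T$. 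Your argument goes through once you replace the pointwise $\chi^2$ bound with this uniform covariance (or $\opnorm{\X}$) bound; with that adjustment the probability event stays a single high-probability event as required, and your (slightly tighter) constant implies the lemma's $\tmU$.
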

Our next lemma, proven in Appendix~\ref{proof:aux2}, upper bounds the function value $\mathcal{L}(\vct{v},\mtx{W})$ in terms of distance of $(\vct{v},\mtx{W})$ to the planted solution $(\vct{v},\mtx{W})$.
\begin{lemma}\label{lem:aux2}
The following bound holds with probability at least $1 - 2e^{-b_0d}$ for some constant $b_0>0$.
\begin{align}
\mathcal{L}(\vb,\W) \le \twonorm{\vb^*}^2 \fronorm{\W-\W^*}^2 + 2\left( \phi^2(0)+2B^2 w_{\max}^2 \right) k^2 \opnorm{\vb-\vb^*}_\infty^2\,.
\end{align}
\end{lemma}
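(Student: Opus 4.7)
\bigskip
\noindent\textbf{Proof plan for Lemma~\ref{lem:aux2}.}

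The aim is a smoothness-type upper bound on $\cL(\vb,\W)$ around the planted solution $(\vb^\ast,\W^\ast)$. The plan is to \emph{add and subtract} inside the residual $y_i-\vb^T\phi(\W\vct{x}_i)=\vb^{\ast T}\phi(\W^\ast\vct{x}_i)-\vb^T\phi(\W\vct{x}_i)$ in order to separate the perturbation of $\W$ from the perturbation of $\vb$. Specifically, I would write
\[
\vb^{\ast T}\phi(\W^\ast\vct{x}_i)-\vb^T\phi(\W\vct{x}_i)=\underbrace{\vb^{\ast T}\bigl[\phi(\W^\ast\vct{x}_i)-\phi(\W\vct{x}_i)\bigr]}_{A_i}+\underbrace{(\vb^\ast-\vb)^T\phi(\W\vct{x}_i)}_{B_i},
\]
apply $(A_i+B_i)^2\le 2A_i^2+2B_i^2$, and bound the two resulting averages $T_1=\tfrac1n\sum_iA_i^2$ and $T_2=\tfrac1n\sum_iB_i^2$ separately.

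For $T_1$, Cauchy--Schwarz pulls out $\twonorm{\vb^\ast}^2$, and the entrywise Lipschitz bound $|\phi(a)-\phi(b)|\le B|a-b|$ (from $|\phi'|\le B$) gives $\twonorm{\phi(\W^\ast\vct{x}_i)-\phi(\W\vct{x}_i)}^2\le B^2\twonorm{(\W^\ast-\W)\vct{x}_i}^2$. Rewriting $\tfrac1n\sum_i\twonorm{(\W^\ast-\W)\vct{x}_i}^2=\tr\!\bigl((\W^\ast-\W)^T(\W^\ast-\W)\widehat{\Sigma}\bigr)$ with $\widehat{\Sigma}:=\tfrac1n\sum_i\vct{x}_i\vct{x}_i^T$, a standard sample-covariance concentration bound (Bernstein for sub-exponential $\chi^2$ summands, valid since $n\ge d$) yields $\widehat{\Sigma}\preceq C\,\mtx{I}$ with probability at least $1-2e^{-b_0d}$, so $T_1\lesssim B^2\twonorm{\vb^\ast}^2\fronorm{\W-\W^\ast}^2$. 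This is the first summand in the stated bound.

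For $T_2$, in order to pick up $w_{\max}$ rather than a norm of $\W$, it is cleanest to use the mirror decomposition $\vb^T[\phi(\W^\ast\vct{x}_i)-\phi(\W\vct{x}_i)]+(\vb^\ast-\vb)^T\phi(\W^\ast\vct{x}_i)$ for the second piece so that the $\vb$-perturbation term involves $\phi(\W^\ast\vct{x}_i)$. Cauchy--Schwarz gives $B_i^2\le\twonorm{\vb^\ast-\vb}^2\twonorm{\phi(\W^\ast\vct{x}_i)}^2\le k\,\opnorm{\vb-\vb^\ast}_\infty^2\twonorm{\phi(\W^\ast\vct{x}_i)}^2$. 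Using $|\phi(z)|\le|\phi(0)|+B|z|$ together with $(a+b)^2\le 2a^2+2b^2$ entrywise yields $\twonorm{\phi(\W^\ast\vct{x}_i)}^2\le 2k\phi^2(0)+2B^2\twonorm{\W^\ast\vct{x}_i}^2$, and the same $\widehat{\Sigma}$-concentration gives $\tfrac1n\sum_i\twonorm{\W^\ast\vct{x}_i}^2\le 2\fronorm{\W^\ast}^2\le 2k\,w_{\max}^2$ using $\twonorm{\w^\ast_\ell}\le w_{\max}$ from \eqref{boundassthm}. Combining these produces the second summand $2\bigl(\phi^2(0)+2B^2w_{\max}^2\bigr)k^2\opnorm{\vb-\vb^\ast}_\infty^2$.

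Finally, a union bound is unnecessary since both $T_1$ and $T_2$ rely only on the single high-probability event $\widehat{\Sigma}\preceq C\,\mtx{I}$, so the combined estimate holds with probability at least $1-2e^{-b_0d}$. There is no real obstacle here: the entire lemma is a deterministic manipulation (Cauchy--Schwarz plus the Lipschitz/growth bounds on $\phi$) composed with one standard covariance-concentration step. The only point requiring mild care is the choice of add-and-subtract decomposition used to handle each summand, which must be done so that the $\W$-term uses $\vb^\ast$ and the $\vb$-term uses $\W^\ast$, matching the exact constants $\twonorm{\vb^\ast}^2$ and $w_{\max}^2$ appearing in the stated inequality.
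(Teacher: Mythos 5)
Your high-level strategy --- add-and-subtract, $(a+b)^2\le 2a^2+2b^2$, Cauchy--Schwarz, the Lipschitz/growth bounds on $\phi$, and one covariance-concentration event --- is exactly the paper's. But there is a concrete logical gap in the middle. Once you fix the decomposition $r_i=A_i+B_i$ with $A_i=\vb^{*T}[\phi(\W^*\x_i)-\phi(\W\x_i)]$ and $B_i=(\vb^*-\vb)^T\phi(\W\x_i)$ and apply $(A_i+B_i)^2\le 2A_i^2+2B_i^2$, the quantity $T_2=\tfrac1n\sum_iB_i^2$ is pinned down; you cannot then bound $T_2$ as though $B_i$ had $\phi(\W^*\x_i)$ inside. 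And one cannot arrange both simultaneously: writing $r_i=a^T[\phi(\W^*\x_i)-\phi(\W\x_i)]+(\vb^*-\vb)^T c_i$ and matching terms forces either $(a,c_i)=(\vb^*,\phi(\W\x_i))$ or $(a,c_i)=(\vb,\phi(\W^*\x_i))$, never $(\vb^*,\phi(\W^*\x_i))$. So the prescription at the end of your plan --- ``choose the decomposition so that the $\W$-term uses $\vb^*$ and the $\vb$-term uses $\W^*$'' --- is not realizable, and stitching one summand from each decomposition is not a valid application of $(a+b)^2\le 2a^2+2b^2$.

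The fix, and what the paper implicitly does, is to commit to one decomposition and absorb the mismatch into constants using that the lemma is only invoked on $\Omega$. The paper takes $(a,c_i)=(\vb,\phi(\W^*\x_i))$, obtaining $T_1\lesssim\|\vb\|^2\fronorm{\W-\W^*}^2$ and then writing $\twonorm{\vb^*}^2$ in the statement (legitimate up to constants since $\opnorm{\vb-\vb^*}_\infty\le R\le v_{\max}$ on $\Omega$). If you instead keep your first decomposition, $T_1$ cleanly gives $\twonorm{\vb^*}^2\fronorm{\W-\W^*}^2$, but $T_2$ then involves $\phi(\W\x_i)$, and you must pass from $\twonorm{\w_\ell}$ to $w_{\max}$ via $\twonorm{\w_\ell}\le\twonorm{\w^*_\ell}+\fronorm{\W-\W^*}\le w_{\max}+1$, using $\fronorm{\W-\W^*}\le R/\opnorm{\vb^*}_\infty\le 1$ on $\Omega$ --- which is precisely the step the paper performs in the proof of Lemma~\ref{lem:aux1} at Equation~\eqref{eq:aux11}. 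Either route works; what does not work is taking one summand from each. The rest of your plan (the single concentration event for $\widehat\Sigma$ and the arithmetic delivering the $k^2$ factor) is correct and matches the paper.
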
 
Finally, the lemma below, proven in Appendix~\ref{proof:aux3}, controls the second order derivative of the loss function $\cL(\vb,\W)$.
\begin{lemma}\label{lem:aux3}
The function $\cL(\vb,\W)$ is $\beta$-smooth on $\Omega$. Namely, there exists an event of probability at least  $1  - n^{-1} - 2ne^{-b \sqrt{d}}$, such that on this event, for any $(\vb,\W)\in \Omega$, we have
\begin{align}
\nabla^2 \cL(\vb,\W) \le \beta \mtx{I}\,,
\end{align}
where $\nabla^2\cL(\vb,\W) \in \R^{(kd+k)\times (kd+k)}$ denotes the Hessian w.r.t both $\vb$, $\W$. Further, the smoothness parameter $\beta$ 
is given by 
\begin{align}\label{eq:beta}
\beta: = \left( {3C^2}\sigma_{\max}^2(\W^*) + 8 v_{\max}^2 BL+ 4B^2w_{\max}^2+2\phi^2(0)\right) k\,.
\end{align}
\end{lemma}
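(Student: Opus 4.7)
My plan is to bound $\opnorm{\nabla^2\cL(\vb,\W)}$ uniformly on $\Omega$ via the standard Gauss-Newton-plus-residual decomposition of the Hessian of a sum of squares. Writing $r_i(\vb,\W) = \vb^T\phi(\W\vct{x}_i) - y_i$ and letting $\nabla$ denote the joint gradient with respect to $(\vb, \text{vect}(\W))\in\R^{k+kd}$, one has
$\nabla^2 \cL = \tfrac{1}{n}\sum_i (\nabla r_i)(\nabla r_i)^T + \tfrac{1}{n}\sum_i r_i\, \nabla^2 r_i$,
and I would bound the two summands separately. Throughout, I would work on the intersection of the high-probability events already used in Proposition~\ref{cor:log-eig} and Lemma~\ref{Jpert}; no new probabilistic machinery is required.

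For the Gauss-Newton piece, note that $\nabla r_i$ concatenates $\phi(\W\vct{x}_i)\in\R^k$ and the $i$-th column $\J_i$ of the Jacobian $\J$ from \eqref{myJ}. Using $(x+y)^2\le 2x^2 + 2y^2$ and summing over $i$,
$$\tfrac{1}{n}\sum_i \langle \nabla r_i,(\vct{a},\text{vect}(\mtx{U}))\rangle^2 \;\le\; \tfrac{2}{n}\sigma_{\max}^2(\phi(\W\X))\twonorm{\vct{a}}^2 \;+\; \tfrac{2}{n}\sigma_{\max}^2(\J)\fronorm{\mtx{U}}^2.$$
Since $R\le v_{\max}$ ensures $\sigma_{\max}(\W)\le 2\sigma_{\max}(\W^*)$ on $\Omega$, Proposition~\ref{cor:log-eig} applied at the current $(\vb,\W)$ yields $\tfrac{1}{n}\sigma_{\max}^2(\J)\le C^2 \sigma_{\max}^2(\W^*)\, k$, producing the $3C^2\sigma_{\max}^2(\W^*)k$ summand of $\beta$. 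For the activation matrix, the Lipschitz estimate $|\phi(z)|\le|\phi(0)|+B|z|$ gives $\fronorm{\phi(\W\X)}^2 \le 2nk\phi^2(0) + 2B^2 \fronorm{\W\X}^2$, and standard covariance concentration $\opnorm{\tfrac{1}{n}\X\X^T - \mtx{I}}\le 1$ then yields $\tfrac{1}{n}\fronorm{\W\X}^2 \le 2\fronorm{\W}^2 \le 4kw_{\max}^2$, contributing the $2\phi^2(0)k + 4B^2 w_{\max}^2 k$ summands of $\beta$.

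For the residual piece, linearity of $r_i$ in $\vb$ kills the $\vb\vb$-block of $\nabla^2 r_i$; the remaining blocks match those used to derive \eqref{partHess2}. Evaluating gives
$(\vct{a},\text{vect}(\mtx{U}))^T\nabla^2 r_i (\vct{a},\text{vect}(\mtx{U})) = \sum_p v_p \phi''(\vct{w}_p^T\vct{x}_i)(\vct{u}_p^T\vct{x}_i)^2 + 2\sum_p a_p \phi'(\vct{w}_p^T\vct{x}_i)(\vct{u}_p^T\vct{x}_i)$.
I would extract the residual in sup-norm via
$$\Bigl| \tfrac{1}{n} \sum_i r_i\,\vct{x}_i^T \mtx{A}\vct{x}_i \Bigr| \;\le\; \opnorm{\mtx{A}}\cdot(\max_i |r_i|)\cdot\tfrac{\sigma_{\max}^2(\X)}{n},$$
with $\opnorm{\mtx{A}} \le \|\vb\|_\infty L\, \fronorm{\mtx{U}}^2 \le 2v_{\max} L\, \fronorm{\mtx{U}}^2$ and $\sigma_{\max}^2(\X)/n \le 4$ with high probability. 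A uniform bound $\max_i|r_i|\lesssim v_{\max}Bk$ on $\Omega$ follows by telescoping $r_i = (\vb-\vb^*)^T \phi(\W\vct{x}_i) + \vb^{*T}(\phi(\W\vct{x}_i) - \phi(\W^*\vct{x}_i))$ and using $|\phi(z)|\le|\phi(0)|+B|z|$, $|\phi'|\le B$, the definition of $R$, and the Gaussian norm control $\|\vct{x}_i\|\le 2\sqrt{d}$ on the event from Lemma~\ref{Jpert}. This produces the $8 v_{\max}^2 BL k$ term of $\beta$, and the cross piece involving $\phi'$ is absorbed by the same sup-norm pull-out.

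The main obstacle is obtaining a \emph{uniform} bound for the residual piece. A naive Cauchy--Schwarz $\sum|r_i|\cdot\langle\cdot\rangle\le \sqrt{\sum r_i^2}\sqrt{\sum\langle\cdot\rangle^2}$ would leave an unwanted $\sqrt{\cL}$ factor that cannot be absorbed into a constant $\beta$. The key trick is to extract $\max_i|r_i|$ in sup-norm rather than the $\ell^2$-norm of the residual vector, and then control that sup deterministically using the closeness of $(\vb,\W)$ to $(\vb^*,\W^*)$ on $\Omega$ together with Gaussian tail bounds on $\|\vct{x}_i\|$. Adding the Gauss-Newton and residual contributions and absorbing universal constants yields the closed-form expression for $\beta$ displayed in \eqref{eq:beta}.
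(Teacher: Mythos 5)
Your proposal takes a genuinely different route from the paper's. The paper bounds $\opnorm{\nabla^2\cL}$ by $\opnorm{\nabla^2_{\W}\cL}+\opnorm{\nabla^2_{\vb}\cL}$, treating the two diagonal blocks separately and never mentioning the mixed block $\nabla^2_{\vb\W}\cL$; it then reuses the split in~\eqref{partHess2} for the $\W$-block and~\eqref{Hessianv} for the $\vb$-block. You instead apply the Gauss--Newton plus residual decomposition to the full joint Hessian, which forces you to control the cross terms explicitly: in the Gauss--Newton piece via $(x+y)^2\le 2x^2+2y^2$, and in the residual piece via the $\phi'$ contribution coming from the $\vb\W$-entries of $\nabla^2 r_i$. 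This is arguably the more careful argument, because the premise that the spectral norm of a symmetric block matrix is at most the sum of the spectral norms of its diagonal blocks is false in general (consider $\begin{pmatrix}0&1\\1&0\end{pmatrix}$); the paper appears to gloss over the cross block entirely.

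The price of your route is that the constants do not work out to the stated $\beta$ in~\eqref{eq:beta}. First, $(x+y)^2\le 2x^2+2y^2$ inflates the $\sigma_{\max}^2(\J)/n$ contribution to about $\tfrac{9}{2}C^2\sigma_{\max}^2(\W^*)k$ rather than $3C^2\sigma_{\max}^2(\W^*)k$, and similarly doubles the $\phi^2(0)$ and $B^2w_{\max}^2$ terms, so your numbers in the display do not match the arithmetic your own steps give. Second, the residual $\phi'$ cross term, once pulled out in sup-norm using $\max_i|r_i|\lesssim Bv_{\max}\sqrt{kd}\fronorm{\W-\W^*}$ on $\Omega$, contributes a summand of order roughly $v_{\max}B\,(\max_i|r_i|)\lesssim v_{\max}^2 B^2 k$ (up to factors of $\sigma_{\min}(\W^*)/v_{\min}$), which is not one of the terms in~\eqref{eq:beta}; it is not ``absorbed'' by the other terms without further assumptions relating $v_{\max}$ to $w_{\max}$. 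So your argument establishes $\beta$-smoothness on $\Omega$ for a $\beta$ that is larger than~\eqref{eq:beta} by an absolute constant and an extra additive term -- perfectly adequate for the downstream use (only the order of $\beta$ matters in the step-size constraint), but not a literal proof of the lemma as stated. Finally, be careful with ``Proposition~\ref{cor:log-eig} applied at the current $(\vb,\W)$'': that proposition gives a high-probability bound for a fixed $(\vb,\W)$, not simultaneously for all $(\vb,\W)\in\Omega$; the correct move, as the paper does for~\eqref{smaxJ}, is to apply it once at $(\vb^*,\W^*)$ and extend over $\Omega$ via the Jacobian perturbation bound of Lemma~\ref{Jpert}. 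You nod to this by invoking Lemma~\ref{Jpert}'s event, but the argument should be spelled out that way rather than as a pointwise re-application.
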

With this lemmas in place we are now ready to present our local convergence analysis. To this aim note that Lemma~\ref{lem:aux1} (Equation~\eqref{eq:aux1}) implies that for $(\vb,\W)\in \Omega$, the function $\cL(\vb,\W)$ satisfies the Polyak-Lojasiewicz (PL) inequality~\cite{polyak1963gradient}
\begin{align}
\label{polyak}
\fronorm{\nabla_{\mtx{W}} \mathcal{L}(\vct{v},\mtx{W})}^2 + \fronorm{\nabla_{\vb} \mathcal{L}(\vct{v},\mtx{W})}^2 \ge \mL \mathcal{L}(\vct{v},\mtx{W})\,.
\end{align}
We shall now focus on how the loss function value changes in one iteration. Using the $\beta$-smoothness condition for $\s\le 1/\beta$ we have
\begin{align}
&\cL\left(\vct{v}-\s \nabla_{\vct{v}} \cL\vct{v},\mtx{W}-\s\nabla_{\mtx{W}} \cL(\mtx{W})\right),\nonumber\\
&\le \cL(\vb,\mtx{W})-\s\left(\twonorm{\nabla_{\vct{v}} \cL(\vct{v},\mtx{W})}^2+\fronorm{\nabla_{\mtx{W}} \cL(\vct{v},\mtx{W})}^2\right)+\frac{\s^2\beta}{2}\left(\twonorm{\nabla_{\vct{v}} \cL(\vct{v},\mtx{W})}^2+\fronorm{\nabla_{\mtx{W}} \cL(\vct{v},\mtx{W})}^2\right),\nonumber\\
&= \cL(\vb,\mtx{W})-\s\left(1-\frac{\s\beta}{2}\right)\left(\twonorm{\nabla_{\vct{v}} \cL(\vct{v},\mtx{W})}^2 + \fronorm{\nabla_{\mtx{W}} \cL(\vct{v},\mtx{W})}^2\right),\nonumber\\
&\le \left(1-\s\mL\left(1-\frac{\s\beta}{2}\right)\right)\cL(\vct{v},\mtx{W})\le \left(1-\frac{\s}{2}\mL\right)\cL(\vct{v},\mtx{W})\,,\label{eq:aux4}
\end{align}
where in the last inequality we used \eqref{polyak}.

Our assumptions on the initial weights $\vct{v}_0$ and $\mtx{W}_0$ in equations \eqref{eq:initv0} and \eqref{eq:initW0} imply the following identities.
\begin{align}
\fronorm{\W_0-\W^*} &\le \frac{R \mL}{4\sqrt{2 \mU}\, v_{\max} \twonorm{\vb^*}}\,, \label{eq:initW}\\ 
\opnorm{\vb_0-\vb^*}_\infty &\le \frac{R\mL}{4\sqrt{2\mU} v_{\max}\sqrt{2(\phi^2(0)+2B^2 w_{\max}^2)}\,k  }\,. \label{eq:initv}
\end{align}
We next show that starting with $\vb_0,\W_0$ obeying \eqref{eq:initW} and~\eqref{eq:initv}, the entire trajectory of gradient descent remains in the set $\Omega$. To establish this we use induction on $\tau$.
The induction basis $\tau=0$ is trivial. Assuming the induction hypothesis for $0\le t\le \tau-1$, we show that the result continues to hold for $t=\tau$.

By our induction hypothesis ($(\vct{v}_t,\mtx{W}_t)\in\Omega$ for $t=0,1,\ldots,\tau-1$) and \eqref{eq:aux4}, we have
\begin{align*}
\cL(\vct{v}_{\tau},\mtx{W}_{\tau})&= \cL\left(\vct{v}_{\tau-1}-\s\nabla_{\vct{v}} \cL(\vct{v}_{\tau-1},\mtx{W}_{\tau-1}),\mtx{W}_{\tau-1}-\s\nabla_{\mtx{W}} \cL(\mtx{v}_{\tau-1},\mtx{W}_{\tau-1})\right),\\
&\le\left(1-\frac{\s}{2}\mL\right)\cL(\vct{v}_{\tau-1},\mtx{W}_{\tau-1}).
\end{align*}
By iterating the above identity we arrive at
\begin{align}\label{eq:gradient-tau}
\cL(\vct{v}_{\tau},\mtx{W}_{\tau})
&\le\left(1-\frac{\s}{2}\mL\right)^\tau \cL(\vct{v}_0,\mtx{W}_0).
\end{align}
To show that $(\vct{v}_\tau,\mtx{W}_\tau)\in\Omega$ we proceed by quantifying how far the gradient descent trajectory can get from $(\vct{v}_0,\W_0)$.
\begin{align}
\fronorm{\mtx{W}_\tau-\mtx{W}_0}=&\fronorm{\sum_{t=1}^{\tau} \left(\mtx{W}_t-\mtx{W}_{t-1}\right)}
\le\sum_{t=1}^{\tau}\fronorm{\mtx{W}_t-\mtx{W}_{t-1}}\nonumber\\
\le& \s \sum_{t=1}^{\tau}\fronorm{\nabla_{\mtx{W}} \cL(\vct{v}_{t-1},\mtx{W}_{t-1})}
\stackrel{(a)}{\le} \s\sqrt{\mU} \sum_{t=1}^{\tau}\sqrt{\cL(\vct{v}_{t-1},\mtx{W}_{t-1})}\nonumber\\
\stackrel{(b)}{\le}& \s\sqrt{\mU} \sum_{t=1}^{\tau}\sqrt{\left(1-\frac{\s}{2}\mL\right)^{t-1}\cL(\vct{v}_0,\mtx{W}_0)}\nonumber\\
\le& \s\sqrt{\mU \cL(\vct{v}_0,\mtx{W}_0)} \sum_{t=1}^{\tau}\left(1-\frac{\s}{4}\mL\right)^{t-1}\nonumber\\
\le&\frac{\s\sqrt{\mU \cL(\vct{v}_0,\mtx{W}_0)}}{\s\frac{\mL}{4}}\nonumber\\
=&\frac{4}{\mL}{\sqrt{\mU \cL(\vct{v}_0,\mtx{W}_0)}}\,,\label{W-dist}
\end{align}
where $(a)$ follows from $(\vb_{t-1},\W_{t-1})\in \Omega$ and Lemma \ref{lem:aux1} equation \eqref{eq:aux2} and $(b)$ follows from~\eqref{eq:gradient-tau}. 
Likewise, we obtain
\begin{align}
\opnorm{\vct{v}_\tau-\vct{v}_0}_\infty \le \frac{4}{\mL} {\sqrt{\tmU \cL(\vct{v}_0,\mtx{W}_0)}}\label{v-dist}
\end{align}
Using bounds~\eqref{W-dist} and~\eqref{v-dist}, in order to show that $(\vb_\tau,\W_\tau)\in \Omega$, it suffices to show that 
\begin{align}
\cL(\vb_0,\W_0) \le \frac{R^2 \mL^2}{16\max(\tmU , \mU v_{\max}^2 )}\,.\label{eq:LB}
\end{align}
The right-hand sides of Equation~\eqref{eq:LB} depends on $\max(\tmU,\mU v_{\max}^2)$. The dominant term is $\mU v_{\max}^2$ because it is of order at least $k$, while $\mU$ is $O(1)$.
Therefore, the desired bound in~\eqref{eq:LB} is equivalent to 
\begin{align}
\cL(\vb_0,\W_0) \le \frac{R^2 \mL^2}{16\mU v_{\max}^2}\,.\label{eq:LB2}
\end{align}

We can verify Equation~\eqref{eq:LB2} by using Lemma~\ref{lem:aux2} combined with \eqref{eq:initW} and~\eqref{eq:initv}. This completes the induction argument and shows that $(\vb_\tau,\W_\tau)\in \Omega$ for all $\tau\ge 1$. 

Finally, since $(\vb_\tau,\W_\tau)\in \Omega$ for all $\tau\ge 0$, \eqref{eq:gradient-tau} holds for all $\tau\ge 0$. Substituting for $\mL$, we obtain 
\begin{align}
\cL(\vct{v}_{\tau},\mtx{W}_{\tau})
&\le\left(1-\frac{\s c^2}{4} \sigma_{\min}^2( \W^*) \frac{d}{n}\right)^\tau \cL(\vct{v}_0,\mtx{W}_0),\nonumber\\
&\le\left(1-{ c^2} w_{\max}  \frac{\s d}{4n}\right)^\tau \cL(\vct{v}_0,\mtx{W}_0)\,,
\end{align}
where the last step holds because $\sigma_{\min}(\W^*) < w_{\max}$. 

This concludes the proof under Assumption~\ref{ass-phi} (a). The claim under Assumption~\ref{ass-phi} (b) can be proven by a similar argument. The only required adjustment is that the initial radius $R$ and the term $\mL$ should now be defined via $R= \frac{c}{4C'} \sqrt{{d}/{n}}$ and $\mL= c^2 {d^2}/(2n)$.
\subsection*{Acknowledgements}
This work was done in part while M.S. was visiting the Simons Institute for the Theory of Computing. M. Soltanolkotabi is supported by the Packard Fellowship in Science and Engineering, a Sloan Research Fellowship in Mathematics, an NSF-CAREER under award \#1846369, the Air Force Office of Scientific Research Young Investigator Program (AFOSR-YIP) under award \#FA9550-18-1-0078, an NSF-CIF award \#1813877, and a Google faculty research award. A.J.~was partially supported by a Google Faculty Research Award. A.J.~would also like to acknowledge the financial support of the Office of the Provost at the University of Southern California through the Zumberge Fund Individual Grant Program. We would like to thank Marco Mondelli and Simone Bombari for bringing to our attention a mistake in the argument of Proposition \ref{pro:J-Jt} in the previous version of this manuscript which has now been corrected. M.S.~would like to thank Peter Bartlett for discussions related to \cite{zhong2017recovery}. 
\bibliography{Bibfiles,Bibfiles2}
\bibliographystyle{plain}
\newpage
\appendix
\section{Proof of preliminary lemmas}
\label{pfprem}
\subsection{Proof of Lemma~\ref{Pre4}}\label{proof:Pre4}
If $\sigma_{\min}(\A) = 0$ then the claim is obvious. We therefore assume that $\A$ has full column rank. Let $i^* = \arg\max_{1\le i \le k} |v_i|$.
We choose $\ub$ such that $\A\ub = e_{i^*}$. We then have
\begin{align}\label{Eig1}
\|\vb\|_{\ell_\infty} = |v_{i^*}| = \|\Dv \A \ub\| \le \sigma_{\max}(\Dv\A) \|\ub\| \,.
\end{align}
We also have
\begin{align}\label{Eig2}
\sigma_{\min}(\A) \|\ub\| \le \|\A\ub\| = 1\,.
\end{align}
Equations~\eqref{Eig1} and~\eqref{Eig2} together implies the desired result.

\subsection{Proof of Lemma~\ref{Pre3}}\label{prfPre3}
We have
\begin{align}
\E\abs{Y}^p=&\int_0^\infty \mathbb{P}\{\abs{Y}^p\ge u\} \de u\nonumber\\
=&\int_0^\infty \P\{\abs{Y}\ge t\} p t^{p-1}\de t\nonumber\\
=&\int_0^{\frac{A}{B}}e^{-c\frac{t^2}{A}}p t^{p-1}\de t+\int_{\frac{A}{B}}^\infty e^{-c\frac{t}{B}}pt^{p-1}\de t\nonumber\\
\le& p\left(\int_{0}^\infty e^{-c\frac{t^2}{A}}t^{p-1}\de t+\int_{0}^\infty e^{-c\frac{t}{B}}t^{p-1}dt\right)\nonumber\\
=&\left(\frac{A}{c}\right)^{\frac{p}{2}}\Gamma\left(\frac{p}{2}\right)+\left(\frac{B}{c}\right)^p\Gamma(p)\nonumber\\
\le&\left(\frac{A}{2c} p\right)^{\frac{p}{2}}+\left(\frac{B}{c}p\right)^p
\le 2 \max\left(\sqrt{\frac{A}{2c}{p}},\frac{B}{c}p\right)^p\le 2p^p\max\left(\sqrt{\frac{A}{2c}},\frac{B}{c}\right)^p\,.\label{Y1}
\end{align}

Let $\xi =\max(\sqrt{\frac{A}{2c}},\frac{B}{c})$. This implies that
\begin{align*}
\E \exp(|Y|/C) = 1+ \sum_{p=1}^\infty \frac{\E|Y|^p}{C^p p!} \le 1+ 2\sum_{p=1}^\infty \left( \frac{\xi e}{C} \right)^p = 1 +\frac{2\xi e}{C-\xi e}\,.
\end{align*}
The first inequality follows from~\eqref{Y1}; in the second one we use $p!\ge (p/e)^p$. Therefore, we have $\E\exp(|Y|/C)\le 2$, if $C\ge 3e \xi$. This yields $\|Y\|_{\psi_1} \le 9\max(\sqrt{\frac{A}{2c}},\frac{B}{c})$.
\subsection{Proof of Lemma~\ref{XkrX}}\label{app:XkrX}
To prove this result we define the mapping $\mathcal{O}:\R^{d\times n}\mapsto \R^{d^2\times n}$ as $\mathcal{O}(\mtx{X}) = \mtx{X}\ast \mtx{X}$. To show that for almost every $\mtx{X}$, we have $\rank(\mathcal{O}(\mtx{X_0})) = n$ when $n\ge d$ we first construct $\mtx{X_0}\in \R^{d\times n}$, such that $\rank(\mathcal{O}(\mtx{X_0})) = n$. To this aim define the following two sets of vectors in $\R^d$:
\begin{eqnarray*}
\mathcal{S}_1 &\equiv& \{\vct{e_i}:\; 1\le i \le d\}\,,\\
\mathcal{S}_2 &\equiv& \{\vct{e_i}+\vct{e_j}:\; 1\le i<j\le d\}\,,
\end{eqnarray*}
where $\vct{e_i}$ is the $i$-th element of the standard basis with one at the $i$-th position and zero everywhere else. Let $\mathcal{S} = \mathcal{S}_1\cup \mathcal{S}_2$ and note that $|\mathcal{S}| = d(d+1)/2$. We construct $\mtx{X_0}\in\R^{d\times n}$ by choosing $n$ arbitrary vectors in $\mathcal{S}$ as its columns. In order to show that $\mathcal{O}(\mtx{X_0})$ is full rank, we index its rows by pairs $(i,j)$, with $1\le i\le j\le d$. For any $i\neq j$, and note that $(\vct{e_i}+\vct{e_j})\otimes (\vct{e_i}+\vct{e_j})$ is the only vector in $\mathcal{S}^{\otimes}\equiv \{\vct{v}\otimes \vct{v}:\, \vct{v} \in \mathcal{S}\}$ which is nonzero at the $(i,j)$-th coordinate. This implies that the vectors in $\mathcal{S}_2^{\otimes}$ are linearly independent of those in $\mathcal{S}^{\otimes}$. Therefore, it suffices to show that $\mathcal{S}_1^{\otimes}$ is a linearly independent set. To see this note that for any $1\le i\le d$, the vector $\vct{e_i}\otimes \vct{e_i}$ is the only vector in $\mathcal{S}_1^\otimes$ which is nonzero at the $(i,i)$ position. Hence, $\mathcal{S}_1^\otimes$ is also a set of linearly independent vectors. This completes the proof of showing that $\mathcal{O}(\mtx{X_0})$ is full rank.  

We now show that $\mathcal{O}(\mtx{X})$ is full rank for almost every $\mtx{X}$.  To this aim define the polynomial mapping $g: \R^{d^2\times n}\mapsto \R$, with $g(\mtx{A})$ denoting the the sum of the squares of the determinants of all possible different subsets of $n$ rows from $\mtx{A}$. Since $n\le d^2$, $\mathcal{O}(\mtx{X})$ is rank deficient (i.e.~does not have full column rank) if and only if $g(\mathcal{O}(\mtx{X})) = 0$. Moreover, as we showed above, $\mathcal{O}(\mtx{X_0})$ is full rank and hence $g(\mathcal{O}(\mtx{X_0}))\neq 0$. Hence, $g(\mathcal{O}(\mtx{X}))$ is not identically zero and since it is a polynomial mapping, its zeros --which correspond to matrices $\mtx{X}$ such that $\mathcal{O}({\mtx{X}})$ is singular-- are a set of measure zero. This completes the proof.

\subsection{Proof of Lemma \ref{PRlemma}}
\label{PRlemmapf}
We begin by considering the eigenvalue decomposition of the matrix $\mtx{A}$ given by
\begin{align*}
\mtx{A}=\sum_{i=1}^d \lambda_i\vct{v}_i\vct{v}_i^T.
\end{align*}
Note that we have
\begin{align}
\label{tmpWFlem}
\frac{1}{n}\sum_{i=1}^n\left(\vct{x}_i^T\mtx{A}\vct{x}_i\right)\vct{x}_i\vct{x}_i^T-\left(2\mtx{A}+\text{trace}(\mtx{A})\mtx{I}\right)=&\frac{1}{n}\sum_{i=1}^n\left(\vct{x}_i^T\left(\sum_{j=1}^d\lambda_j\vct{v}_j\vct{v}_j^T\right)\vct{x}_i\right)\vct{x}_i\vct{x}_i^T-\left(2\sum_{j=1}^d\lambda_j\vct{v}_j\vct{v}_j^T+\left(\sum_{j=1}^d\lambda_j\right)\mtx{I}\right),\nonumber\\
=&\sum_{j=1}^d\lambda_j\left(\frac{1}{n}\sum_{i=1}^n(\vct{x}_i^T\vct{v}_i)^2\vct{x}_i\vct{x}_i^T-\left(2\vct{v}_i\vct{v}_i^T+\mtx{I}\right)\right).
\end{align}
To continue we state a Lemma due to \cite{WF} (see also \cite{soltanolkotabi2014algorithms, soltanolkotabi2017structured} for closely related results).
\begin{lemma}\label{WFlem}\cite[Lemma 7.4]{WF} Let $\vct{a}\in\R^d$ be a fixed vector and for $i=1,2,\ldots,n$, let $\vct{x}_i$ be distributed i.i.d.~$\mathcal{N}(\vct{0},\mtx{I}_d)$. Then as long as 
\begin{align*}
n\ge c(\delta) d\log d,
\end{align*}
with $c$ a constant depending only on $\delta$. Then
\begin{align*}
\opnorm{\frac{1}{n}\sum_{i=1}^n (\vct{x}_i^T\vct{a})^2\vct{x}_i\vct{x}_i^T-\left(2\vct{a}\vct{a}^T+\twonorm{\vct{a}}^2\mtx{I}\right)}\le \delta \twonorm{\vct{a}}^2,
\end{align*}
holds with probability at least $1-10e^{-\gamma d}-8/d^2$ with $\gamma$ a fixed numerical constant.
\end{lemma}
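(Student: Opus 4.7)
The plan is to reduce to a normalized setting, pass from operator norm to a supremum over the unit sphere, and then combine a truncation argument with Bernstein's inequality and an $\epsilon$-net. By homogeneity, we may scale and assume $\twonorm{\vct{a}}=1$; by the rotational invariance of the isotropic Gaussian measure we may further assume $\vct{a}=\vct{e}_1$, so that $(\vct{x}_i^T\vct{a})^2=x_{i,1}^2$. A direct calculation with fourth moments of $\mathcal{N}(0,\mtx{I}_d)$ (using $\E[x_{i,1}^4]=3$, $\E[x_{i,1}^2 x_{i,j}^2]=1$ for $j\neq 1$, and odd-moment vanishing) verifies that $\E[x_{i,1}^2 \vct{x}_i\vct{x}_i^T]=2\vct{e}_1\vct{e}_1^T+\mtx{I}$. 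Letting $\mtx{M}$ denote the matrix whose operator norm we wish to bound, the symmetry reduces the problem to controlling
\begin{align*}
\sup_{\vct{u}\in S^{d-1}}\left|\tfrac{1}{n}\sum_{i=1}^n \left(x_{i,1}^2(\vct{x}_i^T\vct{u})^2-\E[x_{i,1}^2(\vct{x}_i^T\vct{u})^2]\right)\right|.
\end{align*}

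Next I would handle a single fixed direction $\vct{u}\in S^{d-1}$. The summands $W_i(\vct{u})=x_{i,1}^2(\vct{x}_i^T\vct{u})^2$ are products of two sub-Gaussian squares; their tails are sub-exponential only after a preliminary truncation. I would set $\tau\asymp\sqrt{\log d}$ and define $\mathcal{E}_i:=\{|x_{i,1}|\le\tau,\,|\vct{x}_i^T\vct{u}|\le\tau\}$. By standard Gaussian tail bounds together with a union bound, $\P(\bigcap_i\mathcal{E}_i)\ge 1-8/d^2$ provided $\tau$ is chosen with a large enough constant (this is exactly the source of the $8/d^2$ term in the final probability). On the good event, $|W_i(\vct{u})|\le\tau^4$ and $\Var(W_i(\vct{u}))=O(1)$, so Bernstein's inequality yields
\begin{align*}
\P\!\left(\left|\tfrac{1}{n}\sum_{i}(W_i(\vct{u})-\E W_i(\vct{u}))\right|>\tfrac{\delta}{2}\right)\le 2\exp\!\left(-c\,n\min\!\left(\tfrac{\delta^2}{1},\tfrac{\delta}{\log^2 d}\right)\right),
\end{align*}
after separately controlling the (negligible) bias introduced by the truncation via $|\E[W_i\mathbf{1}_{\mathcal{E}_i^c}]|=O(1/d^2)$.

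Finally I would upgrade this pointwise bound to a uniform bound through an $\epsilon$-net. For a $1/4$-net $\mathcal{N}\subset S^{d-1}$ of cardinality at most $9^d$, the standard inequality $\opnorm{\mtx{M}}\le 2\sup_{\vct{u}\in\mathcal{N}}|\vct{u}^T\mtx{M}\vct{u}|$ for symmetric $\mtx{M}$ lets me apply the Bernstein bound uniformly. The union bound costs a factor of $9^d$, which is absorbed provided $n\ge c(\delta)\,d\log d$ (the extra $\log d$ comes from the truncation level $\tau^2\asymp\log d$), leaving a failure probability of at most $10e^{-\gamma d}$ on the net. Adding back the $8/d^2$ truncation-event failure yields the claimed $1-10e^{-\gamma d}-8/d^2$ bound. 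The main obstacle is the fourth-order nature of the summands $x_{i,1}^2\vct{x}_i\vct{x}_i^T$: their operator-norm can be as large as $\|\vct{x}_i\|^4\sim d^2$, so naively applying matrix Bernstein is too lossy. The truncation level $\tau\asymp\sqrt{\log d}$ is the delicate choice that simultaneously keeps the excluded mass at $O(1/d^2)$ and keeps the truncated variables only mildly sub-exponential, which is what produces the clean $n\gtrsim d\log d$ sample complexity.
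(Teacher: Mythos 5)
You should first note that the paper does not prove this statement at all: it is imported verbatim as \cite[Lemma 7.4]{WF}, so your attempt is being measured against the original Wirtinger-flow argument rather than anything in this paper. Your high-level plan—reduce by homogeneity and rotation invariance to $\vct{a}=\vct{e}_1$, verify the mean $2\vct{e}_1\vct{e}_1^T+\mtx{I}$, then truncation plus Bernstein plus an $\epsilon$-net—is indeed the standard route and is in the spirit of the cited proof.

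However, as written there are two genuine gaps. First, your truncation event $\mathcal{E}_i=\{|x_{i,1}|\le\tau,\ |\vct{x}_i^T\vct{u}|\le\tau\}$ depends on the direction $\vct{u}$. The claim $\P(\cap_i\mathcal{E}_i)\ge 1-8/d^2$ is a fixed-$\vct{u}$ statement; after passing to the $9^d$-point net you need the good event simultaneously for all net points, and that union bound costs $9^d\,n\,e^{-\tau^2/2}$, which is vacuous for $\tau\asymp\sqrt{\log d}$ (even at a single $\vct{u}$, $n e^{-\tau^2/2}\le 8/d^2$ need not hold, since the lemma places no upper bound on $n$). Moreover, applying Bernstein ``on the good event'' is not legitimate as stated, because conditioning on $\mathcal{E}_i$ changes the law of the summands; the standard fix is to apply Bernstein unconditionally to truncated variables, control the truncation bias, and separately show that the truncated and untruncated sums coincide on a high-probability event that does not depend on $\vct{u}$—which forces the truncation to involve only $\vct{u}$-independent quantities, e.g. only $|x_{i,1}|\le\tau$. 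Second, even granting the truncation, your own Bernstein exponent has range parameter $\tau^4\asymp\log^2 d$, so beating the $9^d$ union bound requires $n\gtrsim d\log^2 d/\delta$, not the claimed $c(\delta)\,d\log d$. To recover $d\log d$ you should truncate only the fixed-direction factor $x_{i,1}^2$, so each summand is a weight bounded by $\tau^2\asymp\log d$ times the sub-exponential variable $(\vct{x}_i^T\vct{u})^2$, and then invoke a Bernstein bound that uses the $O(1)$ variance in the sub-Gaussian regime with range only $\tau^2$; the leftover piece $\frac{1}{n}\sum_i x_{i,1}^2\mathbf{1}\{|x_{i,1}|>\tau\}(\vct{x}_i^T\vct{u})^2$ vanishes on the $\vct{u}$-independent event $\max_i|x_{i,1}|\le\tau$, which is the genuine source of the polynomial ($8/d^2$-type) failure term. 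With these repairs the argument closes; as sketched, the uniformization step and the sample-size accounting do not.
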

Applying the union bound to the above lemma we conclude that for the unit norm vectors $\{\vct{v}_j\}_{j=1}^d$
\begin{align*}
\opnorm{\frac{1}{n}\sum_{i=1}^n\abs{\vct{x}_i^T\vct{v}_j}^2\vct{x}_i\vct{x}_i^T-\left(2\vct{v}_j\vct{v}_j^T+\mtx{I}\right)}\le \delta,
\end{align*}
hold simultaneously with probability at least $1-10de^{-\gamma d}-8/d$. Combining the latter inequalities together with \eqref{tmpWFlem} we conclude that
\begin{align*}
\opnorm{\frac{1}{n}\sum_{i=1}^n\left(\vct{x}_i^T\mtx{A}\vct{x}_i\right)\vct{x}_i\vct{x}_i^T-\left(2\mtx{A}+\text{trace}(\mtx{A})\mtx{I}\right)}=&\opnorm{\sum_{j=1}^d\lambda_j\left(\frac{1}{n}\sum_{i=1}^n(\vct{x}_i^T\vct{v}_i)^2\vct{x}_i\vct{x}_i^T-\left(2\vct{v}_i\vct{v}_i^T+\mtx{I}\right)\right)},\\
\le&\sum_{j=1}^d\lambda_j\opnorm{\frac{1}{n}\sum_{i=1}^n\abs{\vct{x}_i^T\vct{v}_j}^2\vct{x}_i\vct{x}_i^T-\left(2\vct{v}_j\vct{v}_j^T+\mtx{I}\right)},\\
\le&\delta\left(\sum_{j=1}^d\lambda_j\right),\\
=&\delta\cdot\text{trace}(\mtx{A}),
\end{align*}
completing the proof.
\section{Proof of Proposition~\ref{pro:J-Jt}}\label{app:J-Jt}
Recalling definition of $\mub$ given by~\eqref{mub}, we first show that $\E[\phi'(\W\x)] = \mub$. Due to rotational invariance of Gaussian distribution, without loss of generality, we assume that $\vct{w_i} = \|\vct{w_i}\| \vct{e_1}$. Therefore,
\begin{align*}
\E[\phi'(\vct{w_i}^T\x)] = \E[\phi'(\|\vct{w_i}\| x_1)] = \mu_i\,.
\end{align*}
Consequently, we have $\E[\phi'(\W\X)]=\mub \vct{1}_n^T$.

We shall try to deduce a bound on the minimum singular value of $\J$ via a lower bound on the minimum singular value of $\tJ = \X\ast\Dv(\phi'(\W\X)-\mub \vct{1}_n^T)$. Define $\Delta = \Dv(\phi'(\W\X)-\mub \vct{1}_n^T)$ and $\widetilde{\vct{\mu}}=\Dv\vct{\mu}$. By Lemma~\ref{Pre5}, we have
\begin{align*}
 \mtx{J}^T\mtx{J} &= \left(\mtx{X} * (\mtx{\Delta} + \widetilde{\vct{\mu}}\vct{1}_n^T)\right)^T \left(\mtx{X} * (\mtx{\Delta} + \widetilde{\vct{\mu}}\vct{1}_n^T)\right)\\
 &= (\mtx{X}^T\mtx{X}) \circ
 \left(\mtx{\Delta}^T\mtx{\Delta} + \mtx{\Delta}^T\widetilde{\vct{\mu}}\vct{1}_n^T+\vct{1}_n\widetilde{\vct{\mu}}^T\mtx{\Delta}+\twonorm{\widetilde{\vct{\mu}}}^2\vct{1}_n\vct{1}_n^T\right)\,.
\end{align*}
Similarly we have
\begin{align*}
\widetilde{\mtx{J}}^T\widetilde{\mtx{J}} &= \left(\mtx{X} * \mtx{\Delta} \right)^T \left(\mtx{X} * \mtx{\Delta} \right)
 = (\mtx{X}^T\mtx{X}) \circ
 \left(\mtx{\Delta}^T\mtx{\Delta}\right)\,.
\end{align*}
Subtracting the above two expressions we obtain
\begin{align*}
\mtx{J}^T\mtx{J}=\widetilde{\mtx{J}}^T\widetilde{\mtx{J}}+\left(\mtx{X}^T\mtx{X}\right)\circ\left(\mtx{\Delta}^T\widetilde{\vct{\mu}}\vct{1}_n^T+\vct{1}_n\widetilde{\vct{\mu}}^T\mtx{\Delta}+\twonorm{\widetilde{\vct{\mu}}}^2\vct{1}_n\vct{1}_n^T\right)\,.
\end{align*}
Now note that the trivial identity
\begin{align*}
\left(\twonorm{\widetilde{\vct{\mu}}}\vct{1}_n+\frac{\mtx{\Delta}^T\widetilde{\vct{\mu}}}{\twonorm{\widetilde{\vct{\mu}}}}\right)\left(\twonorm{\widetilde{\vct{\mu}}}\vct{1}_n+\frac{\mtx{\Delta}^T\widetilde{\vct{\mu}}}{\twonorm{\widetilde{\vct{\mu}}}}\right)^T\succeq \mtx{0}\,,
\end{align*}
along with the fact that the Hadamard (entrywise) product of two positive-semidefinite matrices is positive-semidefinite, implies that
\begin{align*}
\left(\mtx{X}^T\mtx{X}\right)\circ\left(\mtx{\Delta}^T\widetilde{\vct{\mu}}\vct{1}_n^T+\vct{1}_n\widetilde{\vct{\mu}}^T\mtx{\Delta}+\twonorm{\widetilde{\vct{\mu}}}^2\vct{1}_n\vct{1}_n^T\right)\succeq -\left(\mtx{X}^T\mtx{X}\right)\circ\frac{(\mtx{\Delta}^T\widetilde{\vct{\mu}})(\mtx{\Delta}^T\widetilde{\vct{\mu}})^T}{\twonorm{\widetilde{\vct{\mu}}}^2}\,.
\end{align*}
Thus it suffices to show that the matrix
\begin{align*}
\mtx{M}:=\frac{1}{\twonorm{\widetilde{\vct{\mu}}}^2}\left(\mtx{X}^T\mtx{X}\right)\circ \left(\mtx{\Delta}^T\widetilde{\vct{\mu}}\right) \left(\mtx{\Delta}^T\widetilde{\vct{\mu}}\right)^T
\end{align*}
obeys
\begin{align*}
\opnorm{\mtx{M}}\le Cd^2L^2\infnorm{\vct{v}}^2\opnorm{\mtx{W}}^2
\end{align*}
with high probability. To this aim note that the latter matrix can be written in the form $\mtx{Y}^T\mtx{Y}$ where 
\begin{align*}
\mtx{Y}=\mtx{X}\text{diag}(\vct{d})\quad\text{with}\quad\vct{d}=\frac{1}{\twonorm{\widetilde{\vct{\mu}}}}\mtx{\Delta}^T\widetilde{\vct{\mu}}.
\end{align*}
To continue we state the following result from \cite{adamczak2011sharp}.
\begin{theorem}( \cite[Theorem 1]{adamczak2011sharp})\label{lemma69} Let $\vct{y}_1, \vct{y}_2,\ldots,\vct{y}_n\in\R^d$ with $n\ge d$ be independent random vectors satisfying 
\begin{align}
\label{subexp}
\sup_{i=1,2,\ldots,n}\text{ }\|\vct{y}_i\|_{\psi_1}\le \psi\,,
\end{align}
and
\begin{align}
\label{norm}
\mathbb{P}\Bigg\{\max_{i=1,2,\ldots,n}\twonorm{\vct{y}_i} > K\sqrt[4]{nd}\Bigg\}\le e^{-\sqrt{d}}\,,
\end{align}
for some $K\ge 1$. Define $\mtx{Y}=\begin{bmatrix} \vct{y}_1 & \vct{y}_2 & \ldots  &\vct{y}_n\end{bmatrix}$ and $\mtx{\Sigma}:=\E[\vct{y}_1\vct{y}_1^T]$. Then with probability at least $1-2e^{-c\sqrt{d}}$ the following holds:
\begin{align*}
\opnorm{\mtx{Y}\mtx{Y}^T-n\mtx{\Sigma}}\le  C\left(\psi+K\right)^2\sqrt{dn}
\end{align*}
\end{theorem}
To use this theorem we need to verify the assumptions. We first verify the
 sub-exponential assumption. Observe that $\vct{d}_i$ are generated i.i.d. according to the distribution
\begin{align*}
f(\vct{x})=\frac{1}{\twonorm{\widetilde{\vct{\mu}}}}\widetilde{\vct{\mu}}^T\left(\Dv\phi'(\mtx{W}\vct{x})-\widetilde{\vct{\mu}}\right)\,.
\end{align*}
The gradient of this function reads as
\begin{align*}
\nabla_{\vct{x}} f(\vct{x})=\frac{1}{\twonorm{\widetilde{\vct{\mu}}}}\mtx{W}^T\text{diag}\left(\phi''(\mtx{W}\vct{x})\right)\left(\vct{v}\circ\widetilde{\vct{\mu}}\right)\,.
\end{align*}
Thus
\begin{align}\label{eq:grad-norm}
\twonorm{\nabla_{\vct{x}} f(\vct{x})}\le L\infnorm{\vct{v}}\opnorm{\mtx{W}}
\end{align}
This in turn implies that $f(\vct{x})$ is an $L\infnorm{\vct{v}}\opnorm{\mtx{W}}$-Lipshitz function of a Gaussian vector and is thus a sub-Gaussian random variable with sub-Gaussian norm $c L\infnorm{\vct{v}}\opnorm{\mtx{W}}$, for some absolute constant $c>0$. Given that $\vct{x}_i\sim\mathcal{N}(\mtx{0},\mtx{I}_d)$, $\vct{y}_i$ is sub-exponential with sub-exponential norm at most $\psi=c L\infnorm{\vct{v}}\opnorm{\mtx{W}}$.

We next verify the norm bound condition~\eqref{norm}. Note that we have
\begin{align*}
\twonorm{\vct{y}_i}=\twonorm{\vct{x}_i}\abs{\vct{d}_i}
\end{align*}
Since  $\vct{x}_i\sim\mathcal{N}(\mtx{0},\mtx{I}_d)$, we have $\twonorm{\vct{x}_i}\ge 2\sqrt{d}$ with probability at most $e^{-cd}$. Hence,
\begin{align*}
\max_{i\in[n]} \twonorm{\vct{x}_i} \ge 2\sqrt{d},
\end{align*}
holds with probability at most $ne^{-cd}$. The latter is smaller than $\frac{1}{2}e^{-\sqrt{d}}$ as long  as $n\le e^{c'd}$. In addition, as mentioned in the proof of sub-exponential in the previous part we have that $\vct{d}_i$ is a sub-Gaussian random variable with sub-Gaussian norm $c L\infnorm{\vct{v}}\opnorm{\mtx{W}}$. Thus,
\begin{align*}
\mathbb{P}\Bigg\{\max_i \abs{\vct{d}_i}\ge ctL\infnorm{\vct{v}}\opnorm{\mtx{W}}\Bigg\}\le 2ne^{-t^2}.
\end{align*}
Therefore, when $n\le \frac{1}{4}e^{\sqrt{d}}$ using 
\begin{align*}
t=2\sqrt[4]{d}\ge\sqrt{\log (4n)}+\sqrt[4]{d}\quad\Rightarrow \quad t^2\ge \log (4n)+ \sqrt{d}
\end{align*}
we have that 
\begin{align*}
\mathbb{P}\Bigg\{\max_i \abs{\vct{d}_i}\ge ctL\infnorm{\vct{v}}\opnorm{\mtx{W}}\Bigg\}\le 2ne^{-t^2}\le \frac{1}{2}e^{-\sqrt{d}}.
\end{align*}
Putting the latter two together we conclude that
\begin{align*}
\mathbb{P}\Big\{\twonorm{\vct{y}_i}\ge cLd^{\frac{3}{4}}\infnorm{\vct{v}}\opnorm{\mtx{W}}\Big\}\le\frac{1}{2}e^{-\sqrt{d}}+\frac{1}{2}e^{-\sqrt{d}}\le e^{-\sqrt{d}}.
\end{align*}
Thus we can take 
\begin{align*}
K=\frac{c\sqrt{d}}{\sqrt[4]{n}}L\infnorm{\vct{v}}\opnorm{\mtx{W}}
\end{align*}
in \eqref{norm}. Note that this choice of $K$ obeys $K\ge 1$ as long as
\begin{align*}
n\le cd^2 L^4\infnorm{\vct{v}}^4\opnorm{\mtx{W}}^4 \,.
\end{align*}
Thus using Theorem \ref{lemma69} we conclude that as long as $n\le cd^2L^4\infnorm{\vct{v}}^4\opnorm{\mtx{W}}^4 $ we have
\begin{align}
\label{myexp}
\opnorm{\mtx{Y}\mtx{Y}^T-n\mtx{\Sigma}}\le  C\left(d^{\frac{3}{2}} + \sqrt{dn}\right) L^2\infnorm{\vct{v}}^2\opnorm{\mtx{W}}^2
\end{align}
We next upper bound $\opnorm{\mtx{\Sigma}}$. Recall the definition
\begin{align*}
f(\vct{x})=\frac{1}{\twonorm{\widetilde{\vct{\mu}}}}\widetilde{\vct{\mu}}^T\left(\Dv\phi'(\mtx{W}\vct{x})-\widetilde{\vct{\mu}}\right)\,,
\end{align*}
and note that
\begin{align*}
\opnorm{\mtx{\Sigma}} &= \opnorm{\vct{y}_1\vct{y}_1^T}\\
&=\sup_{\vct{u}\in \mathbb{S}^{d-1}}\vct{u}^T\left(\E\big[f^2(\vct{x})\vct{x}\vct{x}^T\big]\right)\vct{u}\\
&=\sup_{\vct{u}\in \mathbb{S}^{d-1}} \E\Big[f^2(\vct{x})\left(\vct{x}^T\vct{u}\right)^2\Big]\\
&\le \sup_{\vct{u}\in \mathbb{S}^{d-1}} \sqrt{\E\big[f^4(\vct{x})\big]}\sqrt{\E\big[\left(\vct{x}^T\vct{u}\right)^4\big]}\,,
\end{align*}
where the last line follows from Cauchy-Schwarz inequality. To continue note that since $f(\vct{x})$ is an $R:=L\infnorm{\vct{v}}\opnorm{\mtx{W}}$-Lipshitz function of a Gaussian vector, using the tail bound inequality (see e.g., \cite[Proposition 5.34]{Vers}), we have
\[
\mathbb{P}(|f(\vct{x})|>t) \le 2e^{-\frac{t^2}{2B^2}}\,.
\]
 Thus,
 \begin{align*}
 \E[f^4(\vct{x})] &= \int_0^{\infty} \mathbb{P}(|f(\vct{x})|\ge t) 4t^3 \de t\\
 &\le \int_0^{\infty} 8e^{-\frac{t^2}{2R^2}} t^3\de t\\
 & = 32R^4\int_0^{\infty} e^{-t^2} t^3\de t \\
 &= 16R^4 = 16 L^4\infnorm{\vct{v}}^4\opnorm{\mtx{W}}^4\,.
 \end{align*}
 We also have $\E\big[\left(\vct{x}^T\vct{u}\right)^4\big] = 3$. Thus,
\begin{align*}
\opnorm{\mtx{\Sigma}}\le 4\sqrt{3} L^2\infnorm{\vct{v}}^2\opnorm{\mtx{W}}^2\,.
\end{align*}
Combining the latter with \eqref{myexp} we can conclude that as long as $n\le cd^2$ we have
\begin{align*}
\opnorm{\mtx{Y}\mtx{Y}^T}&\le \opnorm{\mtx{Y}\mtx{Y}^T-n\mtx{\Sigma}}+n\opnorm{\mtx{\Sigma}}\\
&\le   C\left(d^{\frac{3}{2}} + \sqrt{dn}\right) L^2\infnorm{\vct{v}}^2\opnorm{\mtx{W}}^2+4\sqrt{3} nL^2\infnorm{\vct{v}}^2\opnorm{\mtx{W}}^2
\end{align*}
concluding the proof.

We next prove the bound in~\eqref{eq:J-JtB}. Note that
\[
\sigma_{\max}(\J)\le \sigma_{\max}(\tJ) + \|\Dv\mub\| \sigma_{\max}(\X)\,.
\]
By~\cite[Corollary 5.35]{Vers}, $\sigma_{\max}(\X) \le \sqrt{d}+2\sqrt{n}$, with probability at least $1 - 2e^{-n/2}$. The result follows be recalling that $n\ge d$.
\section{Proof of Proposition~\ref{cor:log-eig}}\label{app:log-eig}
We first prove the claim under Assumption~\ref{ass-phi} (a). We start by bounding the entries of $\Gammab$ and mean vector $\mub$. 

Recall that $\gamma_\phi(\sigma) = \E(\phi''(\sigma g))$, where $g\sim \mathcal{N}(0,1)$. Also not that the function $|\gamma_\phi(\sigma)|$ is continuous and always positive by Assumption~\ref{ass-phi} (a). Therefore it attains its minimum over any compact set. Since $\twonorm{\w_\ell} \in [w_{\min}, w_{\max}]$, for all $1\le \ell\le k$,
there exists a constant $\gamma_{\min} > 0$, such that $|\gamma_{\phi}(\twonorm{\w_\ell})| \ge \gamma_{\min}$, for all $1\le \ell \le k$. Furthermore, $|\phi''| \le L$ implies $|\gamma_\phi(\sigma)| = |\E[\phi''(\sigma g)]| \le L$. Hence, for $1\le \ell \le k$, we have
\begin{align}
0 <\gamma_{\min} < |\Gamma_{\ell\ell}| \le L\,.
\end{align}
 
By a similar argument, we have $|\mu_\ell| > \mu_{\min} > 0$, for some constant $\mu_{\min}$ and for all $1\le \ell\le k$. Furthermore,
\begin{align*}
\twonorm{\mub} &= \twonorm{\E[\phi'(\W\x)]},\\
 &\overset{(a)}{\le} \E[\twonorm{\phi'(\W\x)}],\\
 &\overset{(b)}{=}\E\bigg[\twonorm{\phi'(\vct{0})+\text{diag}\left(\int_{0}^1\phi''(t\mtx{W}\vct{x})\de t\right)\mtx{W}\vct{x}}\bigg],\\
 &\overset{(c)}{\le} \E\bigg[\twonorm{\phi'(0)\mtx{1}_k} + L \sigma_{\max}(\W) \twonorm{x} \bigg],\\
 &\overset{(d)}{\le} \phi'(0) \sqrt{k} + L\sigma_{\max}(\W) \sqrt{d}\,.  
\end{align*}
Here (a) follows from Jenson's inequality, (b) follows from Taylor's theorem, (c) from the triangular inequality together with the fact that $\abs{\phi''}\le L$ and the definition of the maximum eigenvalue, and (d) follows from Jenson's inequality which implies $(\E[\twonorm{\vct{x}}])^2\le\E[\twonorm{\vct{x}}^2]=d$.
In conclusion, we have
\begin{align}\label{eq:muBounds}
\mu_{\min} \sqrt{k} \le \twonorm{\mub} \le \sqrt{k} + L\sigma_{\max}(\W) \sqrt{d}\,.
\end{align} 
We next verify the assumption of Proposition~\ref{pro:J-Jt}.  By our assumptions in the current proposition,
\begin{align}\label{eq:trivialB0}
\frac{n}{d^2}&\le \frac{c_0\sigma_{\min}^4(\mtx{W})}{\sigma_{\max}^8(\mtx{W})} \le \frac{c_0w_{\max}^4}{w_{\min}^8}\,,
\end{align}
since for an arbitrary row $\ell\in [k]$, $\sigma_{\min}(\W)\le \twonorm{\vct{w}_\ell}\le w_{\max}$. Also, $\sigma_{\max}(\W)\ge \twonorm{\vct{w}_\ell} \ge w_{\min}$. For 
\begin{align}
c_0 < c\;\Big(Lv_{\min} \frac{w^{3}_{\min}}{w_{\max}}\Big)^4 \,,
\end{align} 
we have
\[
\frac{n}{d^2}\le \frac{c_0w_{\max}^4}{w_{\min}^8}\le c(Lv_{\min} w_{\min})^4\le c (L \infnorm{\vct{v}} \sigma_{\max}(\W))^4\,.
\]
Therefore, the assumption of Proposition~\ref{pro:J-Jt} holds. Using this proposition, the following bounds hold with probability at least $1-2e^{-c'\sqrt{d}}$:
\begin{align}
\sigma_{\min}^2\left(\mtx{J}\right)\ge \sigma_{\min}^2\left(\tilde{\mtx{J}}\right) - \left(4\sqrt{3} n+ Cd^{\frac{3}{2}} + C\sqrt{dn}\right)L^2 \infnorm{\vct{v}}^2\opnorm{\mtx{W}}^2\,.\label{eq:UB0}
\end{align}
We next lower bound the right-hand side. By our assumption on $n/d^2$, we have
\begin{align}
n \opnorm{\mtx{W}}^2 &\le  \frac{c_0\sigma_{\min}^4(\mtx{W})}{\sigma_{\max}^8(\mtx{W})} \opnorm{\mtx{W}}^2 d^2 \nonumber\\
& \le \frac{c_0\sigma_{\min}^2(\mtx{W})}{\sigma_{\max}^6(\mtx{W})} \sigma_{\min}^2(\W) d^2\nonumber \\
& \le \frac{c_0 w^2_{\max}}{w^6_{\min}} \sigma_{\min}^2(\W) d^2\,.\label{eq: UB1}
\end{align}
Likewise we have
\begin{align}
\sqrt{nd} \opnorm{\mtx{W}}^2 &\le  \frac{\sqrt{c_0}\sigma_{\min}^2(\mtx{W})}{\sigma_{\max}^4(\mtx{W})} \opnorm{\mtx{W}}^2 d^{3/2} \nonumber\\
& \le \frac{\sqrt{c_0}}{\sigma_{\max}^2(\mtx{W})} \sigma_{\min}^2(\W) d^{3/2}\nonumber \\
& \le \frac{\sqrt{c_0} }{w^2_{\min}} \sigma_{\min}^2(\W) d^2\,.\label{eq: UB2}
\end{align}
Further, by invoking assumption~\eqref{ass1-rep}, we have 
\begin{align}
\frac{\sigma_{\max}^8(\W)}{\sigma_{\min}^4(\W)c_0}\le d &\Rightarrow  \frac{\opnorm{\W}^2}{\sqrt{d}} \le \sqrt{c_0}\frac{\sigma_{\min}^2(\W)}{\sigma_{\max}^2(\W)}\nonumber\\
&\Rightarrow \opnorm{\W}^2 d^{3/2}\le \sqrt{c_0}\frac{\sigma_{\min}^2(\W)}{\sigma_{\max}^2(\W)} d^2\le \frac{\sqrt{c_0}}{w_{\min}^2}
\sigma_{\min}^2(\W) d^2\,.\label{eq: UB3}
\end{align}
Putting~\eqref{eq: UB1}, \eqref{eq: UB2} and \eqref{eq: UB3} together we arrive at
\[
\left(4\sqrt{3} n+ Cd^{\frac{3}{2}} + C\sqrt{dn}\right)\opnorm{\mtx{W}}^2 \le \left(4\sqrt{3}\frac{c_0 w^2_{\max}}{w^6_{\min}} + 2C\frac{\sqrt{c_0} }{w^2_{\min}}\right) \sigma_{\min}^2(\W) d^2\,.
\] 
Using the above bound in~\eqref{eq:UB0} we get
\begin{align}
\sigma_{\min}^2\left(\mtx{J}\right)\ge \sigma_{\min}^2\left(\tilde{\mtx{J}}\right) - \left(4\sqrt{3}\frac{c_0 w^2_{\max}}{w^6_{\min}} + 2C\frac{\sqrt{c_0} }{w^2_{\min}}\right)L^2 v_{\max}^2 \sigma_{\min}^2(\W) d^2\,.\label{eq:UB4}
\end{align}

We then move to upper bound $\sigma_{\max}(\J)$.
\begin{align}
\sigma_{\max}(\J) &\le \sigma_{\max}(\bJ) + 3 \sqrt{n} \twonorm{\Dv\mub},\nonumber\\
&\le \sigma_{\max}(\bJ) + 3\sqrt{n} v_{\max} ( \sqrt{k} + L\sigma_{\max}(\W) \sqrt{d}),\nonumber\\
&\le \sigma_{\max}(\bJ) + 3\sqrt{nk} v_{\max} (L +1/w_{\min}) \sigma_{\max}(\W)\,.\label{eq:J-JtB2}
\end{align}
Here, in deriving the upper bound for $\sigma_{\max}(\J)$, we used the bound on $\twonorm{\vct{\mu}}$ from \eqref{eq:muBounds} in the first step. We then used the fact that $k\le d$ combined with $w_{\min} \le \sigma_{\max}(W)$ in the second step.

All that remains is to bound $\sigma_{\min}(\tJ)$ and $\sigma_{\max}(\tJ)$ by appealing to Proposition~\ref{pro:J-eig}. Before applying this result however, we need to show that assumption~\eqref{ass1-1} on the sample size holds. To this aim note that
\begin{align*}
\frac{L^2\sigma_{\max}^4(\W)+1}{\sigma_{\min}^2(\Gammab\W)} &\overset{(a)}{\le} \frac{L^4 \sigma_{\max}^4(\W) +1}{\gamma_{\min}^2 \sigma_{\min}^2(\W)},\\
&\overset{(b)}{\le} \frac{\sigma_{\max}^4(\W)}{\sigma_{\min}^2(\W)}\cdot \frac{1}{\gamma_{\min}^2} \left(L^4+\frac{1}{w_{\min}^4}\right),\\
& \le \frac{d}{\sqrt{n}}\cdot   \frac{\sqrt{c_0}}{\gamma_{\min}^2} \left(L^4+\frac{1}{w_{\min}^4}\right)\,,
\end{align*}
which immediately implies assumption \eqref{ass1-1}. Here, (a) follows from $\sigma_{\min}(\mtx{\Gamma}\mtx{W})\ge \gamma_{\min}\sigma_{\min}(\mtx{W})$, (b) from $\sigma_{\max}(\mtx{W})\ge w_{\min}$, and (c) from  \eqref{ass1-rep}. With assumption \eqref{ass1-1} in place, we can now combine Proposition~\ref{pro:J-eig} with \eqref{eq:UB4}  to conclude that
\begin{align*}
\sigma_{\min}^2(\J) &\ge \sigma_{\min}^2(\bJ) -  \left(4\sqrt{3}\frac{c_0 w^2_{\max}}{w^6_{\min}} + 2C\frac{\sqrt{c_0} }{w^2_{\min}}\right)L^2 v_{\max}^2 \sigma_{\min}^2(\W) d^2\\
&\ge  \frac{v^2_{\min} \gamma^2_{\min}}{4} \sigma_{\min}^2(\W) d^2 -  \left(4\sqrt{3}\frac{c_0 w^2_{\max}}{w^6_{\min}} + 2C\frac{\sqrt{c_0} }{w^2_{\min}}\right)L^2 v_{\max}^2 \sigma_{\min}^2(\W) d^2\\
& = \left\{\frac{v^2_{\min} \gamma^2_{\min}}{4}-  \left(4\sqrt{3}\frac{c_0 w^2_{\max}}{w^6_{\min}} + 2C\frac{\sqrt{c_0} }{w^2_{\min}}\right)L^2 v_{\max}^2 \right\} \sigma_{\min}^2(\W) d^2\\
&\ge c^2 \sigma_{\min}^2(\W) d^2 \,,
\end{align*}
for some constant $c>0$ by choosing $c_0$ sufficiently small. In addition,
\begin{align*}
\sigma_{\max}(\J) &\le \sigma_{\max}(\bJ) + 3\sqrt{nk} v_{\max} (L +1/w_{\min}) \sigma_{\max}(\W)\nonumber\\
&\le C'(d + L v_{\max}\sigma_{\max}(\W)\sqrt{nk}) + 3\sqrt{nk}\, v_{\max} (L +1/w_{\min}) \sigma_{\max}(\W)\nonumber\\
&\le C'\, \sigma_{\max}( \W)\sqrt{nk}\,,
\end{align*}
for a constants $C'>0$ that depends on $v_{\max}$, $w_{\min}$, $L$. The claim under Assumption~\ref{ass-phi} (b) follows in a similar manner by using Remark~\ref{rem:assb} in lieu of Proposition~\ref{pro:J-eig}.
\section{Proof of Proposition~\ref{pro:subexp2}}\label{proof:subexp2}
We prove Proposition~\ref{pro:subexp2} via an asymmetric version of Hanson-Wright inequality. We begin by the definition of the convex concentration property which is the most general condition under which it is known that Hanson-Wright inequality holds.
\begin{definition}[Convex concentration property] Let $\vct{x}$ be a random vector in $\R^d$. We will say that $\vct{x}$ has the convex concentration property with constant $K$ if for every $1$-Lipschitz convex function $\psi:\R^n\rightarrow\R$, we have $\E[\psi(\vct{x})]<\infty$ and for every $t>0$,
\begin{align*}
\mathbb{P}\big\{\abs{\psi(\vct{x})-\E[\psi(\vct{x})]}\ge t\big\}\le 2 e^{-\frac{t^2}{2K^2}}.
\end{align*}
\end{definition}
We will prove the proposition by using a result by~\cite{adamczak2015note} on the Hanson-Wright inequality stated below.
\begin{lemma}\label{lem0} Let $\ub$ be a mean zero random vector in $\R^d$. If $\ub$ has the convex concentration property with constant $K$ then for any matrix $\mtx{A}\in\R^{d\times d}$ and every $t>0$,
\begin{align*}
\mathbb{P}\big\{\abs{\ub^T\mtx{A}\ub-\E[\ub^T\mtx{A}\ub]}\ge t\big\}\le 2\exp\left(-\frac{1}{C}\min\left(\frac{t^2}{2K^4\fronorm{\mtx{A}}^2},\frac{t}{K^2\opnorm{\mtx{A}}}\right)\right).
\end{align*}
\end{lemma}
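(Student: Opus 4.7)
The plan is to invoke Theorem 2.5 of \cite{adamczak2015note} directly, since Lemma \ref{lem0} is a verbatim restatement of that result. I outline Adamczak's strategy, which extends the classical Hanson--Wright inequality for sub-Gaussian vectors (Rudelson--Vershynin) to the weaker hypothesis of convex concentration.

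The first reduction is to assume $\mtx{A}$ is symmetric positive semidefinite. Symmetry costs nothing since $\vct{u}^T \mtx{A} \vct{u} = \vct{u}^T \tfrac{1}{2}(\mtx{A}+\mtx{A}^T)\vct{u}$. Spectral decomposition writes $\mtx{A} = \mtx{A}_+ - \mtx{A}_-$ with both parts PSD and satisfying $\opnorm{\mtx{A}_\pm} \le \opnorm{\mtx{A}}$, $\fronorm{\mtx{A}_\pm} \le \fronorm{\mtx{A}}$. Proving the tail bound for $\vct{u}^T \mtx{A}_+ \vct{u}$ and $\vct{u}^T \mtx{A}_- \vct{u}$ separately and union-bounding handles the general case (with constants absorbed into $C$).

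For PSD $\mtx{A}$, the natural starting point is the convex, $\sqrt{\opnorm{\mtx{A}}}$-Lipschitz function $f(\vct{u}) := \twonorm{\mtx{A}^{1/2}\vct{u}}$, on which the convex concentration property yields
\begin{align*}
\P\bigl(|f(\vct{u}) - \E f(\vct{u})| \ge s\bigr) \le 2 e^{-s^2/(2K^2 \opnorm{\mtx{A}})}.
\end{align*}
Writing $M := \E f(\vct{u})$ and using the algebraic identity $f^2 - M^2 = (f - M)(f + M)$ converts this sub-Gaussian tail on $f$ into a Bernstein-type tail on $\vct{u}^T \mtx{A} \vct{u} = f^2$, since on $\{|f - M| \le s\}$ one has $|f^2 - M^2| \le 2Ms + s^2$. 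Integrating the tail also gives $\mathrm{Var}(f) \lesssim K^2 \opnorm{\mtx{A}}$, which absorbs the discrepancy between $M^2$ and $\E[\vct{u}^T \mtx{A} \vct{u}]$ into the linear-in-$t$ part of the final bound.

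The main obstacle is producing the correct $\fronorm{\mtx{A}}^2$ scaling in the sub-Gaussian regime. A naive squaring as above only produces the weaker parameter $M^2 \opnorm{\mtx{A}}$, which for ``peaked'' PSD spectra (e.g.\ $\mathrm{diag}(\sqrt{n}, 1, \ldots, 1)$) exceeds $\fronorm{\mtx{A}}^2$ by a factor of order $\sqrt{n}$. To recover the sharp constant, Adamczak controls $\sup_{\vct{y} \in S^{d-1}} |\vct{y}^T \mtx{A}^{1/2}\vct{u}|$ via a Maurey-type entropy-chaining argument combined with an off-diagonal decoupling step analogous to the Rudelson--Vershynin proof of Hanson--Wright. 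This discretization over the unit sphere, together with careful moment computations inherited from convex concentration, is the technically delicate part of the argument and cannot be bypassed by the single squaring identity alone.
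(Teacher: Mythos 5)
Your proposal matches the paper's treatment exactly: Lemma~\ref{lem0} is stated in the paper without proof and attributed to \cite{adamczak2015note}, and you correctly identify it as a direct restatement of Adamczak's Hanson--Wright inequality for vectors satisfying the convex concentration property. Your additional sketch of Adamczak's internal argument goes beyond what the paper requires and contains some inaccuracies --- Adamczak's proof proceeds via a direct moment estimate following the positive/negative spectral decomposition of $\mtx{A}$ and, notably, does not use Rudelson--Vershynin-style off-diagonal decoupling (which needs coordinate independence unavailable under the convex concentration hypothesis) --- but since the plan is to cite rather than re-derive the result, this does not affect the validity of the proposal.
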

We wish to apply this result to the vector $\ub = (\x,\z)^T$. Note that $\ub$ has zero mean. To this aim, we first state a lemma about the convex concentration property for the random vector $\ub$ whose proof appears in Section \ref{secpage41} below.
\begin{lemma}
\label{lem1}
Let $\vct{x}\in\R^d$ be a random Gaussian vector distributed as $\mathcal{N}(\vct{0},\mtx{I}_d)$. Also assume that the nonlinear function $\phi:\R\rightarrow\R$ has bounded second derivative, i.e.~$\abs{\phi''}\le L$. Then, the random vector $\ub = (\x,\z)^T$
obeys the convex concentration property with $K= (L^2\sigma_{\max}^2(\mtx{W})+1)^{1/2}$.
\end{lemma}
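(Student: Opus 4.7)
The plan is to reduce the claim to standard Gaussian concentration for Lipschitz functions, using the fact that $\ub$ is a deterministic function of the Gaussian vector $\x$. Specifically, define the map $F: \R^d \to \R^{d+k}$ by
\[
F(\x) := \begin{pmatrix} \x \\ \phi'(\W\x) - \E[\phi'(\W\x)] \end{pmatrix},
\]
so that $\ub = F(\x)$. For any $1$-Lipschitz convex function $\psi:\R^{d+k}\to\R$, the composition $\psi\circ F$ is a function of the standard Gaussian vector $\x$, and we will apply the classical Gaussian isoperimetric/Lipschitz concentration inequality to it.

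The core step is to bound the Lipschitz constant of $F$. For any $\x_1,\x_2\in\R^d$,
\[
\twonorm{F(\x_1)-F(\x_2)}^2 = \twonorm{\x_1-\x_2}^2 + \twonorm{\phi'(\W\x_1)-\phi'(\W\x_2)}^2.
\]
Since $|\phi''|\le L$, the map $\phi'$ is $L$-Lipschitz entrywise, and hence
\[
\twonorm{\phi'(\W\x_1)-\phi'(\W\x_2)}^2 \;\le\; L^2\twonorm{\W(\x_1-\x_2)}^2 \;\le\; L^2\sigma_{\max}^2(\W)\,\twonorm{\x_1-\x_2}^2.
\]
Combining the two bounds yields
\[
\twonorm{F(\x_1)-F(\x_2)} \;\le\; \sqrt{1+L^2\sigma_{\max}^2(\W)}\,\twonorm{\x_1-\x_2} \;=\; K\,\twonorm{\x_1-\x_2}.
\]
Note that the constant shift $-\E[\phi'(\W\x)]$ does not affect Lipschitz estimates, so it is harmless.

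Next I would compose: since $\psi$ is $1$-Lipschitz and $F$ is $K$-Lipschitz, the map $\x\mapsto \psi(F(\x))$ is $K$-Lipschitz. Applying the standard Gaussian concentration inequality for Lipschitz functions of a standard Gaussian vector (see e.g.\ Ledoux), we obtain
\[
\mathbb{P}\bigl\{\bigl|\psi(\ub)-\E[\psi(\ub)]\bigr|\ge t\bigr\} \;\le\; 2\exp\!\left(-\frac{t^2}{2K^2}\right),
\]
which is exactly the convex concentration property with constant $K=\sqrt{L^2\sigma_{\max}^2(\W)+1}$. (In fact the argument works for every $1$-Lipschitz $\psi$, not just convex ones.) Finally, to justify the finiteness $\E[\psi(\ub)]<\infty$ required by the definition, one notes that a $K$-Lipschitz function of a Gaussian has all moments finite.

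I do not foresee a substantive obstacle: the only nontrivial ingredient is the Lipschitz bound on $F$, which is immediate from $|\phi''|\le L$ and the definition of $\sigma_{\max}(\W)$. Everything else is the standard Gaussian Lipschitz concentration inequality.
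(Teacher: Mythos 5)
Your proof is correct and follows essentially the same route as the paper: bound the Lipschitz constant of $\x\mapsto(\x,\phi'(\W\x)-\E[\phi'(\W\x)])$ by $K=\sqrt{1+L^2\sigma_{\max}^2(\W)}$ using $|\phi''|\le L$, then invoke Gaussian Lipschitz concentration (isoperimetry) for the composition with a $1$-Lipschitz $\psi$. The only cosmetic difference is that you name the intermediate map $F$ explicitly, whereas the paper bounds $|\psi(\ub_1)-\psi(\ub_2)|$ directly; the computation and the cited inequality are identical.
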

Lemmas \ref{lem0} and \ref{lem1} combined allows us to conclude that
\begin{align*}
\P\big\{\abs{\ub^T\A \ub - \E[\ub^T \A \ub]} \ge t \big\}
\le 2\exp\left(-\frac{1}{C}\min\left(\frac{t^2}{2K^4\fronorm{\mtx{A}}^2},\frac{t}{K^2\opnorm{\mtx{A}}}\right)\right).
\end{align*}
For $\A=\begin{bmatrix}\mtx{0} & \mtx{U}\\\mtx{U}^T & \mtx{0}\end{bmatrix}$, this is equivalent to 
\begin{align*}
\mathbb{P}\bigg\{\abs{\vct{x}^T\mtx{U}\z-\E[\vct{x}^T\mtx{U}\z]}\ge \frac{t}{2}\bigg\}\le 2\exp\left(-\frac{1}{C}\min\left(\frac{t^2}{2K^4\fronorm{\mtx{U}}^2},\frac{t}{K^2\opnorm{\mtx{U}}}\right)\right).
\end{align*}
The proof is complete by replacing $t$ with $2t$.
\subsection{Proof of Lemma \ref{lem1}}\label{secpage41}
Define $\ub_1= (\x_1,\z_1)^T$ and $\ub_2= (\x_2,\z_2)^T$. Note that for any convex function $\psi:\R^{2d}\rightarrow\R$ that is $1$-Lipschitz we have
\begin{align*}
\abs{\psi\left(\ub_2\right)-\psi\left(\ub_1\right)}^2\le& \twonorm{\ub_2-\ub_1}^2 = \twonorm{\x_1-\z_1}^2 +\twonorm{\x_2-\z_2}^2\\
=&\twonorm{\vct{x}_2-\vct{x}_1}^2+\twonorm{\phi'(\mtx{W}\vct{x}_2)-\phi'(\mtx{W}\vct{x}_1)}^2\\
\le&\twonorm{\vct{x}_2-\vct{x}_1}^2+L^2\twonorm{\mtx{W}(\vct{x}_2-\vct{x}_1)}^2\\
\le& \left(1+L^2\sigma_{\max}^2(\mtx{W})\right)\twonorm{\vct{x}_2-\vct{x}_1}^2.
\end{align*}
Thus $\psi(\ub)$ is a Lipschitz function of $\vct{x}$ with Lipschitz constant $K = \sqrt{1+L^2\sigma_{\max}^2(\mtx{W})}$. The convex concentration property follows from Gaussian isoperimetry~\cite{Led01}.

\section{Proof of Lemma~\ref{lem:center1}}\label{app:center1}
For $g\sim \mathcal{N}(0,1)$ and $r\in \R$, we have
\begin{align*}
\E[\phi'( rg)g]= \frac{1}{r}\E[\phi'( rg)rg]=r\gamma_\phi(r)\,.
\end{align*}
Let us start by calculating $\E[\phi'(\vct{w}_i^T\vct{x})\vct{x}]$. Note that due to symmetry of the Gaussian distribution without loss of generality we can assume that $\vct{w}_i=\twonorm{\vct{w}_i}\vct{e}_1$. In this case we have
\begin{align*}
\E[\phi'(\twonorm{\vct{w}_i}\vct{e}_1^T\vct{x})\vct{x}]=&\E[\phi'( \twonorm{\vct{w}_i} x_1)x_1 \vct{e}_1]+\E[\phi'(\twonorm{\vct{w}_i}x_1)\left(\mtx{I}-\vct{e}_1\vct{e}_1^T\right)\vct{x}]\\
=&\twonorm{\vct{w}_i}\gamma_\phi(\twonorm{\vct{w}_i})\vct{e}_1\,,
\end{align*}
where the last step holds because $x_1$ is independent of $x_i$, for $i\neq 1$.
Replacing $\vct{e}_1$ with $\vct{w}_i/\twonorm{\vct{w}_i}$ the latter calculation immediately implies that
\begin{align*}
\E[\phi'(\vct{w}_i^T\vct{x})\vct{x}]=\gamma_\phi(\twonorm{\vct{w}_i})\vct{w}_i\,.
\end{align*}
This also implies that
\begin{align*}
\E[(\phi'(\vct{w}_i^T\vct{x})-\E[\phi'(\vct{w}_i^T\vct{x})]) \vct{x}] = \gamma_\phi(\twonorm{\vct{w}_i})\vct{w}_i\,.
\end{align*}
Hence,
\begin{align}\label{EJtx}
\E[\tJ_{\x}] = \E[\x\otimes \Dv(\phi'(\W\x) - \E[\phi'(\W\x)])] = \vec(\Dv\Gammab\W)\,,
\end{align}
with $\Gammab = \diag (\gamma_\phi(\twonorm{\vct{w}_1}),\gamma_\phi(\twonorm{\vct{w}_2}),\ldots,\gamma_\phi(\twonorm{\vct{w}_k}) )$.

On the other hand, note that for any vector $\vct{a}$,
$\E[\langle\vct{a},\vct{x}\rangle\vct{x}]=\vct{a}$.

Writing it in matrix form, for any matrix $\A\in \R^{k\times d}$ we have
\begin{align}\label{Akron}
\E[\vct{x}\otimes \A \vct{x}]=\vec(\A)\,.
\end{align}
By comparing~\eqref{EJtx} and~\eqref{Akron}, we get
\begin{align}\label{eq:centertJ}
\E[\tJ_{\x}] = \E[\x\otimes \Dv\Gammab \W \x]\,.
\end{align}
Recalling the definition $\A:=\Dv\Gammab\W$ and using Lemma~\ref{Pre1}, we arrive at
\begin{align*}
\E[\tJ_{\x}] = \DA\E[\x\otimes \x] = \DA\Q\,,
\end{align*}
concluding the proof. 
\section{Proof of Proposition~\ref{pro:mineig}}\label{app:mineig}
Define 
\[
B_n  = \sup_{z\in S^{n-1}} \Big| \sum_{i\neq j} \<z_i \vct{u_i}, z_j \vct{u_j}\> \Big|^{1/2}
\]
We write
\[
 \|\mtx{U}\vct{z}\|^2  = \sum_{i=1}^n z_i^2 \|\vct{u_i}\|^2 + \sum_{i\neq j} \<z_i \vct{u_i}, z_j \vct{u_j}\>
\]
Then, clearly for $\vct{z}\in \mathbb{S}^{n-1}$,
\begin{eqnarray}\label{eq:mineig2}
\|\mtx{U}\vct{z}\|^2 \ge \eta_{\min}^2 - B_n^2\,, \quad \|\mtx{U}\vct{z}\|^2 \le \eta_{\max}^2 + B_n^2.
\end{eqnarray}

The following Lemma is similar to \cite{adamczak2011restricted}[Theorem 3.2]. 
\begin{lemma}\label{lem:Bn}
Let $\vct{u_1}, \dotsc, \vct{u_n}$ be independent sub-exponential random vectors with $\psi = \max_{1\le i\le n} \|\vct{u_i}\|_{\psi_1}$.
Let $\theta\in (0,1/4)$, $K,K'\ge 1$ and assume that 
\[
n\log ^2({2}/{\theta}) \le \theta^2 \eta^2
\]
Then setting $\xi = \psi K + K'$, the inequality
\[
B_n^2\le C\xi^2 \theta \eta^2
\]
holds with probability at least 
\[
1- \exp\Big(-cK\sqrt{n} \log \Big(\frac{2}{\theta}\Big) \Big) - \P \Big(\eta_{\max} \ge K'\eta \Big)
\]
\end{lemma}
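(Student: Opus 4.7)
My plan follows the proof of Theorem~3.2 in~\cite{adamczak2011restricted}, which this lemma is patterned after. Observe that $B_n^2 = \sup_{z\in S^{n-1}}|z^T G z|$, where $G\in\R^{n\times n}$ is the ``hollow Gram matrix'' with entries $G_{ij}=\langle \vct{u_i},\vct{u_j}\rangle$ for $i\neq j$ and $G_{ii}=0$; thus the task reduces to bounding the operator norm of a symmetric zero-diagonal matrix whose off-diagonal entries are inner products of independent sub-exponential vectors.

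The first step I would take is to truncate. Set $\tilde{\vct{u}}_i := \vct{u_i}\,\mathbf{1}\{\|\vct{u_i}\|\le K'\eta\}$. On the complement of $\{\eta_{\max}\ge K'\eta\}$ (which contributes $\P(\eta_{\max}\ge K'\eta)$ to the failure probability) the truncations are trivial, so it suffices to control the analogous quantity built from the $\tilde{\vct{u}}_i$. Truncation preserves the sub-exponential norm up to a constant and shifts the mean by an amount that is controlled by the sub-exponential tails of $\|\vct{u_i}\|$.

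Next, I would discretize: fix a $\theta$-net $\mathcal{N}_\theta\subset S^{n-1}$ of cardinality at most $(3/\theta)^n$. A standard approximation argument for quadratic forms in a symmetric matrix yields $B_n^2\le (1-2\theta)^{-1}\max_{z\in\mathcal{N}_\theta}|z^T G z|$, so it suffices to bound the maximum over the net. For a fixed $z\in\mathcal{N}_\theta$, I would apply de la Pe\~na--Montgomery-Smith decoupling to reduce the diagonal-free bilinear sum $\sum_{i\neq j}z_iz_j\langle\tilde{\vct{u}}_i,\tilde{\vct{u}}_j\rangle$ to its decoupled version $\sum_{i\neq j}z_iz_j\langle\tilde{\vct{u}}_i,\tilde{\vct{u}}_j'\rangle$ with $\{\tilde{\vct{u}}_j'\}$ an independent copy. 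Conditioning on the primed vectors turns this into a linear combination $\sum_i z_i \langle \tilde{\vct{u}}_i, v\rangle$ in independent sub-exponential scalars, where $v:=\sum_j z_j\tilde{\vct{u}}_j'$; each term has $\psi_1$-norm at most $\psi\|v\|$ and is bounded by $K'\eta\|v\|$. A Bernstein-type inequality applied iteratively (first in the $\tilde{\vct{u}}_i$, then once more in $\tilde{\vct{u}}_j'$ to control $\|v\|$) produces a tail of the form
$$\P\bigl(|z^T G z|\ge t\bigr)\le 2\exp\Bigl(-c\min\bigl(t^2/(\xi^4\eta^2),\,t/\xi^2\bigr)\Bigr),$$
where $\xi=\psi K+K'$ captures the hybrid bounded/sub-exponential behaviour after optimizing the threshold in $K$.

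Finally, I would union bound over $\mathcal{N}_\theta$ with $t=C\xi^2\theta\eta^2$. The resulting failure probability is at most
$$(3/\theta)^n\cdot 2\exp\bigl(-cK\theta^2\eta^2\bigr)+\P(\eta_{\max}\ge K'\eta),$$
and the hypothesis $n\log^2(2/\theta)\le\theta^2\eta^2$ is precisely what is needed to absorb the entropy factor $n\log(3/\theta)$ into a constant fraction of $K\sqrt{n}\log(2/\theta)$, delivering the claimed exponent $\exp(-cK\sqrt{n}\log(2/\theta))$. The principal obstacle will be the decoupling-plus-Bernstein step: producing a tail with the correct $\xi$-dependence so that the truncation-induced boundedness and the sub-exponential heavy-tails combine into the single parameter $\xi=\psi K+K'$, and then balancing the Bernstein parameters against the metric entropy of the sphere so that $\sqrt{n}\log(2/\theta)$, rather than $n\log(2/\theta)$, appears in the exponent. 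The precise form of the sample-size hypothesis is exactly what is required to make this balancing succeed.
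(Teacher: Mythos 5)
Your route is genuinely different from the paper's, and in its present form it has a gap. The paper does not re-derive an Adamczak-type bound at all; it treats \cite{adamczak2011restricted} as a black box, quoting a lemma (Lemma~\ref{lem:B}) that says that for any $K\ge 1$,
\[
\P\big(B_n^2 \ge \max\{B^2,\, \eta_{\max} B,\, 24\theta\eta_{\max}^2\}\big) \le (1+3\log n)\, e^{-2K\sqrt{n}\log(2/\theta)}, \qquad B = C_0\psi K\sqrt{n}\log(2/\theta).
\]
The whole proof of Lemma~\ref{lem:Bn} is then a reparametrization: set $K_1 := K\theta\eta/(\sqrt{n}\log(2/\theta))$, observe that the hypothesis $n\log^2(2/\theta)\le\theta^2\eta^2$ is exactly what guarantees $K_1\ge K$, apply Lemma~\ref{lem:B} at level $K_1$, note that $B=C_0\psi K\theta\eta$ then, and bound $\max\{B^2,\eta_{\max}B,24\theta\eta_{\max}^2\}\le C_1\theta\eta^2(\psi K+K')^2$ on the event $\eta_{\max}\le K'\eta$. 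No nets, no decoupling, no Bernstein.

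Your sketch instead tries to re-derive the underlying Adamczak bound from scratch (truncation $\to$ $\theta$-net $\to$ decoupling $\to$ Bernstein $\to$ union bound). The difficulty you flag at the end is a real gap, not a routine detail. With the Bernstein tail you write, $\P(|z^TGz|\ge t)\le 2\exp(-c\min(t^2/(\xi^4\eta^2),t/\xi^2))$, plugging $t=C\xi^2\theta\eta^2$ gives $t^2/(\xi^4\eta^2)=C^2\theta^2\eta^2$ and $t/\xi^2=C\theta\eta^2$: the $\xi$, and hence the $K$, cancels completely from the per-point exponent. So the failure probability $(3/\theta)^n\cdot 2\exp(-cK\theta^2\eta^2)$ you write down does not follow from the steps that precede it --- a naive union bound over the net cannot produce a $K$-dependent exponent this way, and in particular cannot deliver the claimed $\exp(-cK\sqrt{n}\log(2/\theta))$ with an absolute constant $C$ in the bound $B_n^2\le C\xi^2\theta\eta^2$. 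The $K$-dependence in Adamczak's theorem comes from a more delicate stratification of $S^{n-1}$ by the sizes of the coordinates (not a single $\theta$-net with a single Bernstein application), which you name as the obstacle but do not resolve. Given that the paper's actual argument sidesteps this entirely, the cleaner move is to invoke the cited lemma and make the parameter substitution; what you are attempting is a proof of the citation itself.
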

We apply Lemma~\ref{lem:Bn} by setting $\eta = \eta_{\min}$ and letting $\theta$ be the solution of $n\log^2(2/\theta) = \theta^2\eta^2$. Clearly, $\theta >\sqrt{n}/\eta_{\min}$. 
and therefore $\theta< (\sqrt{n}/\eta_{\min}) \log(2\eta_{\min}/\sqrt{n})$. This gives
\begin{align}\label{BnUB}
B_n^2\le C\xi^2\eta_{\min} \sqrt{n} \log\Big(\frac{2\eta_{\min}}{\sqrt{n}}\Big)\,.
\end{align}
with probability at least
\[
1- C\exp\Big(-cK\sqrt{n} \log \Big(\frac{2\eta_{\min}}{\sqrt{n}}\Big) \Big) - \P \Big(\eta_{\max} \ge K'\eta_{\min} \Big)
\]
Using bound~\eqref{BnUB} in~\eqref{eq:mineig2} we obtain the desired result.
\bigskip

\begin{proof}
We first recall the following Lemma from Adamczak (this applies to vectors with non-trivial covariance). 
\begin{lemma}\label{lem:B}
Let $\vct{u_1}, \dotsc, \vct{u_n}$ be independent sub-exponential vectors with $\psi = \max_{1\le i\le n} \|\vct{u_i}\|_{\psi_1}$. 
For $\theta\in (0,1/4)$ and $K\ge 1$, one has
\begin{align*}
\P(B_n^2\ge \max\{B^2, \eta_{\max} B, 24\theta \eta_{\max}^2\})\le (1+3\log n) \exp\Big(-2K\sqrt{n} \log({2}/{\theta}) \Big)
\end{align*}
and
\begin{align*}
B= C_0 \psi K \sqrt{n} \log({2}/{\theta})
\end{align*}
\end{lemma}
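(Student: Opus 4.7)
\textbf{Proof proposal for Lemma~\ref{lem:B}.}
The quantity $B_n^2 = \sup_{z \in S^{n-1}} |\sum_{i\neq j} \langle z_i u_i, z_j u_j\rangle|$ is precisely the operator norm of the off-diagonal part of the Gram matrix $\mtx{U}^T\mtx{U}$, where $\mtx{U}=[u_1 \mid \cdots \mid u_n]$. The plan is to bound this via a symmetrization–decoupling–truncation scheme combined with a chaining/net argument on the sphere, in the spirit of Rudelson and Adamczak–Litvak–Pajor–Tomczak-Jaegermann. First, by standard Rademacher symmetrization and decoupling (introducing independent copies $u_i'$), one reduces to controlling
\begin{align*}
B_n^2 \;\lesssim\; \sup_{z \in S^{n-1}} \Bigl| \sum_{i \ne j} z_i z_j \langle u_i, u_j' \rangle \Bigr|,
\end{align*}
which, conditional on $(u_j')$, is a sum of independent bilinear forms in the $u_i$'s.

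Second, I would truncate each column at an appropriately chosen threshold $T$ of order $\psi K$ times a logarithmic factor: writing $u_i = u_i^{(s)} + u_i^{(l)}$, the small part $u_i^{(s)}$ has bounded norm and sub-exponential tails, while the large part $u_i^{(l)}$ vanishes except on a rare event. Apply Bernstein's inequality for sub-exponential variables to the small part, combined with a metric entropy estimate for $S^{n-1}$ at scale $\theta$, to obtain an operator norm bound of order $B = C_0 \psi K \sqrt{n}\log(2/\theta)$ with probability at least $1 - \exp(-2K\sqrt{n}\log(2/\theta))$. The cross terms between small and large parts yield the $\eta_{\max} B$ contribution (via Cauchy–Schwarz against the full norms), and the large–large interaction contributes the residual $\theta\eta_{\max}^2$ term.

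Third, because the truncation threshold depends on the (random) column norms, I would partition $[\psi, \eta_{\max}]$ into $O(\log n)$ dyadic shells and union-bound over them; this accounts for the $(1+3\log n)$ prefactor and also explains why $\eta_{\max}$ enters explicitly in the bound. Combining the Bernstein-controlled small part, the cross term, and the large-part residual — each contributing one of $B^2$, $\eta_{\max} B$, and $24\theta \eta_{\max}^2$ — gives the claimed $\max$ bound at the stated probability.

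The main obstacle is the delicate interplay between the truncation level, the covering scale $\theta$ on $S^{n-1}$, and the sub-exponential Bernstein parameters: the three terms in the $\max$ must each match at exactly the right probability, which forces the truncation to be done at level $\psi K$ (not just $\psi$) and the net to have log-cardinality on the order of $\sqrt{n}\log(2/\theta)$. A secondary technical point is verifying that the decoupling step does not inflate constants in a way that breaks the precise form $B = C_0\psi K\sqrt{n}\log(2/\theta)$; this is where one must invoke a polarization identity rather than a loose Rademacher-process bound, so that the quadratic structure of $B_n^2$ is preserved.
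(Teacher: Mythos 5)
The paper does not actually prove Lemma~\ref{lem:B}: it is imported verbatim from the literature (``we first recall the following Lemma from Adamczak''), essentially a restatement of the off-diagonal estimate behind \cite[Theorem 3.2]{adamczak2011restricted}, and the paper's own work starts only afterwards, in Lemma~\ref{lem:Bn}, where the quoted bound is instantiated with $K_1 = K\theta\eta/(\sqrt{n}\log(2/\theta))$ and combined with the event $\{\eta_{\max}\le K'\eta\}$. So you are attempting to reprove a cited result, which is fine in principle, but your sketch has a genuine gap at its core. The crux is the probability scale: the lemma asserts a failure probability of order $\exp\bigl(-cK\sqrt{n}\log(2/\theta)\bigr)$, i.e.\ only $\sqrt{n}$ in the exponent, because for sub-exponential entries the linear (heavy-tail) branch of Bernstein's inequality dominates at the deviation level $B^2=C_0^2\psi^2K^2 n\log^2(2/\theta)$. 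A standard $\theta$-net of $S^{n-1}$ has log-cardinality of order $n\log(1/\theta)$, so a union bound of a pointwise Bernstein estimate over such a net is off by a factor of $\sqrt{n}$ in the exponent and simply fails. You acknowledge that the net must have log-cardinality $\sim\sqrt{n}\log(2/\theta)$, but no net of the sphere with that cardinality controls the quadratic form uniformly; asserting it is needed is not an argument for its existence. This is exactly the difficulty that the actual proof in Adamczak et al.\ resolves, not by a single net, but by decomposing $z\in S^{n-1}$ into roughly $\log n$ groups of coordinates according to magnitude (sparse pieces), bounding the supremum over each sparse class separately, and summing; that decomposition is also the true source of the $(1+3\log n)$ prefactor and of the $\eta_{\max}B$ and $\theta\eta_{\max}^2$ terms, rather than the dyadic shells over truncation levels of the column norms that you propose.

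A secondary issue: conditioning on the decoupled copy, the $i$th summand $z_i\langle u_i,v_i\rangle$ with $v_i=\sum_{j\ne i}z_j u_j'$ has sub-exponential norm proportional to $\|v_i\|$, and $\|v_i\|$ is itself of the same nature as the quantity being bounded (it can be as large as $\eta_{\max}$, which is random and not assumed bounded). Your sketch treats the Bernstein parameters as if they were of size $\psi$ up to the truncation, but making this loop close --- truncation level, conditioning, and the event $\eta_{\max}\le K'\eta$ --- is precisely the delicate part, and as written the argument does not go through. If you want to avoid reproducing the full chaining proof, the honest route is the one the paper takes: cite \cite{adamczak2011restricted} for Lemma~\ref{lem:B} and present only the reduction in Lemma~\ref{lem:Bn}.
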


Fix $K\ge 1$ and define
\[K_1 = \frac{K\theta \eta}{\sqrt{n} \log(2/\theta)} \ge K\]
By Lemma~\ref{lem:B}, we have
\begin{align*}
\P(B_n^2\ge \max\{B^2, \eta_{\max} B, 24\theta \eta_{\max}^2\}) &\le (1+3\log n) \exp\Big(-2K_1\sqrt{n} \log(2/\theta) \Big)\\
&\le \exp \Big(-cK \sqrt{n} \log (2/\theta) \Big)
\end{align*}
where 
$$B =C_0\psi K_1\sqrt{n} \log(2/\theta) = C_0 \psi K \theta \eta$$
Thus if $\eta_{\max}\le K'\eta$ for some $K'$, then
\begin{align*}
\max\{B^2, \eta_{\max} B, 24\theta \eta_{\max}^2\}\le C_1 \theta \eta^2 \max\{\psi^2 K^2, \psi KK', K'^2\} \le C_1\theta \eta^2(\psi K+K')^2\,,
\end{align*}
where $C_1$ is an absolute constant. This completes the proof.
\end{proof}

\section{Proof of Lemma~\ref{twonormconc}}\label{app:twonormconc}
Define the function $f(\vct{x})=\twonorm{\mtx{M}\Dv(\phi'(\mtx{W}\vct{x})-\E[\phi'(\W\x)])}$. We first compute the Lipschitz constant of $f$ as follows:
\begin{align*}
\abs{f(\vct{x})-f(\tilde{\vct{x}})}\le& \twonorm{\mtx{M}\Dv \left(\phi'(\mtx{W}\vct{x})-\phi'(\mtx{W}\tilde{\vct{x}})\right)}\\
\le&\opnorm{\mtx{M}\Dv}\twonorm{\phi'(\mtx{W}\vct{x})-\phi'(\mtx{W}\tilde{\vct{x}})}\\
=&\frac{1}{\sigma_{\min}\left(\Gammab\mtx{W}\right)}\twonorm{\phi'(\mtx{W}\vct{x})-\phi'(\mtx{W}\tilde{\vct{x}})}\\
\le&\frac{L}{\sigma_{\min}\left(\Gammab\mtx{W}\right)}\twonorm{\mtx{W}(\vct{x}-\tilde{\vct{x}})}\\
\le&L\frac{\sigma_{\max}\left(\mtx{W}\right)}{\sigma_{\min}\left(\Gammab\mtx{W}\right)}\twonorm{\vct{x}-\tilde{\vct{x}}} = \rho(\W) \|\x-\tx\|.
\end{align*}
Thus $\twonorm{\vct{z}}$ is a Lipschitz function of $\vct{x}$ with Lipschitz constant $\rho(\mtx{W})$. As a result, 
\begin{align}\label{gauss-con}
\P\left( \Big|\twonorm{\vct{z}} - \sqrt{\E[\twonorm{\vct{z}}^2]}\Big| \ge t\rho(\W) \right) \le 2e^{-t^2/2}\,.
\end{align}
We next bound $\E[\twonorm{\z}^2]$. First note that 
\begin{align}
\E[\twonorm{\z}^2] &= \E\Big[\Big\|\M\Dv\left(\phi'(\W\x)-\mub\right)\Big\|^2\Big],\nonumber\\
& = \E\Big[\Big\|\M\Dv\left(\phi'(\W\x)-\mub - \Gammab\W \x\right) +\x\Big\|^2\Big],\nonumber\\
&\ge \E[\|\x\|^2] = d\,,\label{LB}
\end{align}
where in the last inequality, we used the observation that $\E[\<\x,\M\Dv\left(\phi'(\W\x)-\mub - \Gammab\W \x\right)\>] = 0$ which holds based on Equation~\eqref{eq:centertJ}.

For upper bounding $\E[\twonorm{\z}^2]$ note that
\begin{align}
\E[\twonorm{\vct{z}}^2] &= \E\Big[\twonorm{\M\Dv(\phi'(\W\x)-\mub)}^2\Big]\nonumber\\
&= \|\M\Dv\|^2 \E\Big[\twonorm{\phi'(\W\x)-\mub}^2\Big] \nonumber\\
&= \|\M\Dv\|^2 \E\Big[\twonorm{\phi'(\W\x)-\phi'(0)\mtx{1}_k +\phi'(0)\mtx{1}_k - \mub}^2\Big] \nonumber\\
&= \|\M\Dv\|^2 \left\{\E\left[\twonorm{\phi'(\W\x)-\phi'(0)\mtx{1}_k}^2\right] - \twonorm{\mub-\phi'(0)\mtx{1}_k}^2 \right\}\nonumber\\
&\le \|\M\Dv\|^2  \E\left[\twonorm{\phi'(\W\x)-\phi'(0)\mtx{1}_k}^2\right] \nonumber\\
& \le \frac{L^2\sigma_{\max}^2(\W)}{\sigma_{\min}^2(\Gammab\W)} \E[\|\x\|^2] = \rho^2(\W) d\,. \label{UB}
\end{align}
Here, $\mtx{1}_k\in\R^k$ is the all-one vector.
The result follows by putting together bounds~\eqref{LB} and~\eqref{UB} into~\eqref{gauss-con}.

\section{Proof of Lemma~\ref{Jpert}}\label{proof:Jpert}
Note that
\begin{align*}
\left(J(\tv,\tW)-J(\vb,\W)\right)_i=\frac{1}{n}\vct{x}_i\otimes \left(\mtx{D_{\tv}}\phi'(\tW \vct{x}_i)-\mtx{D_{\vb}}\phi'(\W\vct{x}_i)\right)\,.
\end{align*}
Further,
\begin{align}
&\widetilde{v}_{\ell} \phi'(\tw_\ell^T\vct{x}_i)-v_\ell \phi'(\wb_\ell^T\vct{x}_i)\nonumber\\
&= (\widetilde{v}_{\ell} - v_\ell)  \phi'(\tw_\ell^T\vct{x}_i) + {v}_{\ell} \Big(\phi'(\tw_\ell^T\vct{x}_i)- \phi'(\wb_\ell^T\vct{x}_i)\Big)\nonumber\\
&=  (\widetilde{v}_{\ell} - v_\ell)  \phi'(\tw_\ell^T\vct{x}_i) + {v}_{\ell}  \Big(\int_{0}^1 \phi''\left((t\tw_\ell + (1-t)\wb_\ell)^T\vct{x}_i\right) \de t\Big)  (\tw_\ell- \wb_\ell)^T \vct{x}_i\,.\label{eq:aux6}
\end{align}
To simplify our exposition, define 
$$\lambda_{i\ell}:=  \int_{0}^1 \phi''\left((t\tw_\ell + (1-t)\wb_\ell)^T\vct{x}_i\right) \de t\,\quad \,\, \eta_{i\ell}: = \phi'(\tw_{\ell}^T \x_i)\,.$$
Writing~\eqref{eq:aux6} in terms of $\lambda_{i\ell}$ and $\eta_{i\ell}$, we have
\begin{align*}
\widetilde{v}_{\ell} \phi'(\tw_\ell^T\vct{x}_i)-v_\ell \phi'(\wb_\ell^T\vct{x}_i)
=  \eta_{i\ell}(\widetilde{v}_{\ell} - v_\ell) + \lambda_{i\ell}{v}_{\ell} (\tw_\ell- \wb_\ell)^T \vct{x}_i\,.
\end{align*}
Writing the above identity in matrix form, we get
\begin{align}\label{eq:aux7}
\mtx{D_{\tv}}\phi'(\tW \vct{x}_i)-\mtx{D_{\vb}}\phi'(\W\vct{x}_i) = \diag(\vct{\eta_i}) (\tv-\vb) + \diag(\vct{\lambda_i}) \mtx{D}_{\vb} (\tW-\W)\x_i\,.
\end{align}

By triangle inequality, we have
\begin{align}
\label{main46}
\opnorm{J(\tv,\tW)-J(\vb,\W)}\le \opnorm{J(\vb,\tW)-J(\vb,\W)} +\opnorm{J(\tv,\tW)-J(\vb,\tW)}.
\end{align}
We proceed by bounding the two terms above. For the first term note that by applying \eqref{eq:aux7}, we arrive at
\begin{align*}
\opnorm{J(\vb,\tW)-J(\vb,\W)} 
&= \underset{\vct{u}\in\R^n,\twonorm{\vct{u}}=1}{\sup}\text{ }\fronorm{\frac{1}{n}\sum_{i=1}^n u_i\diag(\vct{\lambda_i}) \Dv (\tW-\W)\x_i\x_i^T}.
\end{align*}
Define a matrix $\mtx{M}\in\R^{d^2\times n}$ with columns given by
\begin{align*}
\mtx{M}_{\vct{x}}= \vct{x}\otimes \vct{x}.
\end{align*}
Invoking Corollary~\ref{cor:ldentity}, 
\begin{align}
\opnorm{\mtx{M}} \le C\sqrt{nd}\,.
\end{align}
holds with probability at least $1 - ne^{-b_1\sqrt{n}} - n^{-1} - 2ne^{-b_2 d}$ for some constants $b_1,b_2>0$. 

Now using the above we can also write
\begin{align}
&\opnorm{J(\vb,\tW)-J(\vb,\W)}\nonumber \\
&=\opnorm{\text{blockdiag}\left(\diag(\vct{\lambda_1})\Dv(\tW-\W),\diag(\vct{\lambda_2})\Dv(\tW-\W),\ldots,\diag(\vct{\lambda_n})\Dv(\tW-\W)\right) (\x\otimes\x)} \nonumber\\
&\le \opnorm{\text{blockdiag}\left(\diag(\vct{\lambda_1})\Dv(\tW-\W),\diag(\vct{\lambda_2})\Dv(\tW-\W),\ldots,\diag(\vct{\lambda_n}) \Dv(\tW-\W) \right)} \opnorm{\x\otimes\x} \nonumber\\
&\le \opnorm{\text{blockdiag}\left(\diag(\vct{\lambda_1}),\diag(\vct{\lambda_2}),\ldots,\diag(\vct{\lambda_n})\right)}\opnorm{\Dv(\tW-\W)}\opnorm{\x\otimes\x} \nonumber\\
&\le L\|\vb\|_{\ell_\infty}\opnorm{\tW-\W}\opnorm{\x\otimes\x} \nonumber\\
&\le C L \|\vb\|_{\ell_\infty} \sqrt{nd}\, \opnorm{\tW-\W},\label{term1-Jpert}
\end{align}
where in the penultimate inequality, we use the fact that $|\lambda_{i\ell}| < L$. This concludes our bound on the first term of \eqref{main46}. To bound the second term in \eqref{main46} note that,
\begin{align*}
\opnorm{J(\tv,\tW)-J(\vb,\tW)} 
=&\underset{\vct{u}\in\R^n,\twonorm{\vct{u}}=1}{\sup}\text{ }\fronorm{\frac{1}{n}\sum_{i=1}^n {u}_i  \diag(\vct{\eta_i}) (\tv-\vb)\vct{x}_i\vct{x}_i^T}.
\end{align*}
Using the fact that $|\eta_{i\ell}|<B$, with an analogous argument to the one we used for bounding the first we arrive at
\begin{align}\label{term2-Jpert}
\opnorm{J(\tv,\tW)-J(\vb,\tW)} 
\le C B\sqrt{nd}\, \|\tv-\vb\|_{\ell_\infty}\,.
\end{align}
Combining inequalities~\eqref{term1-Jpert} and \eqref{term2-Jpert}, we have
\begin{align*}
\opnorm{J(\tv,\tW)-J(\vb,\W)} \le C\sqrt{nd}\, \Big(\|\vb\|_{\ell_\infty} \opnorm{\tW-\W} + \|\tv-\vb\|_{\ell_\infty}\Big)\,,
\end{align*}
where $C$ depends on constants $L$ and $B$.

\section{Proof of Lemma~\ref{lem:aux1}}\label{proof:aux1}
Recall that
\begin{align}
\nabla_{\W} \cL(\vb,\W) = \frac{1}{n} \J(\vb,\W) \vct{r}\,,\quad \cL(\vb,\W) = \frac{1}{2n} \twonorm{\vct{r}}^2\,.
\end{align}
Given that $(\vb,\W)\in \Omega$, we have
\begin{align*}
\sigma_{\min}(\J(\vb,\W)) &\ge \sigma_{\min}(\J(\vb^*,\W^*)) - \opnorm{\J(\vb,\W)-\J(\vb^*,\W^*)}\\
&\ge \sigma_{\min}(\J(\vb^*,\W^*)) - C'\sqrt{nd}\, \Big(\|\vb^*\|_{\ell_\infty} \opnorm{\W-\W^*} + \|\vb-\vb^*\|_{\ell_\infty}\Big)\\
&\ge \frac{c}{2} \sigma_{\min}(\W^*) d\,,
\end{align*}
where the first inequality follows from Lemma~\ref{Jpert} and the second one follows readily from definition of set $\Omega$ and Equation~\eqref{sminW*}.

Likewise,
\begin{align}
\sigma_{\max}(\J(\vb,\W)) &\le \sigma_{\max}(\J(\vb^*,\W^*)) + \opnorm{\J(\vb,\W)-\J(\vb^*,\W^*)}\nonumber\\
&\le \sigma_{\max}(\J(\vb^*,\W^*)) + C'\sqrt{nd}\, \Big(\|\vb^*\|_{\ell_\infty} \opnorm{\W-\W^*} + \|\vb-\vb^*\|_{\ell_\infty}\Big)\nonumber\\
&\le C\sigma_{\max}(\W^*) \sqrt{nk} +  \frac{c}{2} \sigma_{\min}(\W^*) d\,, \nonumber\\
&\le  \frac{3}{2}C\sigma_{\max}(\W^*) \sqrt{nk}\,.\label{smaxJ}
\end{align}

Therefore,
\begin{align*}
\fronorm{\nabla_{\W} \cL(\vb,\W)}^2  &= \frac{1}{n^2} \twonorm{\J(\vb,\W) \vct{r}}^2\\
&\ge \frac{2}{n} \sigma_{\min}^2(\J(\vb,\W)) \cL(\vb,\W)\\
&\ge \frac{c^2}{2} \sigma_{\min}^2(\W^*) \frac{d^2}{n}\, \cL(\vb,\W)\,,
\end{align*}
proving Equation~\eqref{eq:aux1}. 

Similarly, we have
\begin{align*}
\fronorm{\nabla_{\W} \cL(\vb,\W)}^2  &= \frac{1}{n^2} \twonorm{\J(\vb,\W) \vct{r}}^2\\
&\le \frac{2}{n} \sigma_{\max}^2(\J(\vb,\W)) \cL(\vb,\W)\\
&\le \frac{9}{2} C^2 \sigma_{\max}^2(\W^*) k\, \cL(\vb,\W)\,,
\end{align*}
proving Equation~\eqref{eq:aux2}. 

To prove the last bound (i.e.~\eqref{eq:aux3}), we recall Equation~\eqref{gradv}
\begin{align*}
\nabla_{\vb} \cL(\vb,\W) = \frac{1}{n} \phi(\W\X) \vct{r}\,.
\end{align*}
We write
\begin{align}\label{eq:aux9}
\opnorm{\nabla_{\vb} \cL(\vb,\W)}_{\ell_\infty} &\le  \opnorm{\sqrt{\frac{2}{n}} \phi(\W\X) \frac{1}{\sqrt{2n}} \vct{r} }_{\ell_\infty}\\
&\le \sqrt{\frac{2}{n}} \opnorm{\phi(\W\X)}_{2,\infty} \sqrt{\cL(\vb,\W)}\,,
\end{align}
where for a matrix $\A$, $\opnorm{\A}_{2,\infty}$ denotes the maximum $\ell_2$ norm of its rows. Hence,
\begin{align}
\opnorm{\phi(\W\X)}_{2,\infty} &= \max_{\ell\in[k]} \Big(\sum_{i=1}^n \phi(\wb_\ell^T \x_i)^2 \Big)^{1/2} \nonumber\\
&\le \max_{\ell\in[k]} \Big(\sum_{i=1}^n \left(|\phi(0)|+ B |\wb_\ell^T \x_i|\right)^2 \Big)^{1/2}\nonumber\\
&\le \max_{\ell\in[k]} \Big(\sum_{i=1}^n \left(2 \phi^2(0)+ 2 B^2 |\wb_\ell^T \x_i|^2\right) \Big)^{1/2}\nonumber\\
&= \max_{\ell\in[k]} \Big(2\phi^2(0)n+ 2B^2\twonorm{\wb_\ell^T \X}^2 \Big)^{1/2}\nonumber\\
&\le \max_{\ell\in[k]} \Big(2\phi^2(0)n+ 2B^2\twonorm{\wb_\ell}^2 \opnorm{\X}^2 \Big)^{1/2}\label{eq:aux10}
\end{align}
Here, we used the assumption that $\phi(z)$ is $B$-Lipschitz.

Next, by the Bai-Yin law~\cite{Guionnet}, we have $\opnorm{\X} \le 4\sqrt{n}$, with probability at least $1-2e^{-2 n}$. 
Continuing with Equation~\eqref{eq:aux10}, we have
\begin{align}
\opnorm{\phi'(\W\X)}_{2,\infty} &\le  \max_{\ell\in[k]} \Big(2\phi^2(0)n+ 32B^2 n\twonorm{\wb_\ell}^2 \Big)^{1/2} \nonumber\\
&\le  \max_{\ell\in[k]} \Big(2\phi^2(0)n+ 32B^2 n\left(\twonorm{\wb^*_\ell}+1\right)^2 \Big)^{1/2}\nonumber\\
&\le \max_{\ell\in[k]} \Big(2\phi^2(0) n+ 32B^2 n\left( 2\twonorm{\wb^*_\ell}^2+2\right) \Big)^{1/2}\nonumber\\
&\le \max_{\ell\in[k]} \Big((2\phi^2(0)+ 64 B^2) n+ 64B^2 n \twonorm{\wb^*_\ell}^2 \Big)^{1/2}\nonumber\\
&\le \left(2\phi^2(0)+ 64 B^2 + 64B^2 w_{\max}^2\right)^{1/2} \sqrt{n} \,,\label{eq:aux11}
\end{align}
where in the second inequality, we use the fact that $(\vb,\W)\in \Omega$ and hence $\fronorm{\W-\W^*} \le 1$ (since $R<v_{\max}$).
Using bound~\eqref{eq:aux11} in~\eqref{eq:aux9}, we obtain
\begin{align}
\opnorm{\nabla_{\vb} \cL(\vb,\W)}_{\ell_\infty}^2 \le \left(4\phi^2(0)+ 128 B^2 + 128B^2 w_{\max}^2\right) \, \cL(\vb,\W)\,.
\end{align} 
concluding the proof of Equation~\eqref{eq:aux3}. 

\section{Proof of Lemma~\ref{lem:aux2}}\label{proof:aux2}
We have
\begin{align}
\label{secineq}
\cL(\vct{v},\mtx{W})=&\frac{1}{2n}\sum_{i=1}^n\left(\vct{v}^T\phi\left(\mtx{W}\vct{x}_i\right)-\vct{v}^{*T}\phi\left(\mtx{W}^*\vct{x}_i\right)\right)^2,\nonumber\\
\le& \underbrace{\frac{1}{n} \twonorm{\vct{v}}^2 \sum_{i=1}^n \twonorm{\phi(\mtx{W}\vct{x}_i)-\phi(\mtx{W}^*\vct{x}_i)}^2}_{T_1} +
\underbrace{ \frac{1}{n}\sum_{i=1}^n \Big( (\vct{v}-\vct{v}^*)^T\phi(\mtx{W}^* \vct{x_i})\Big)^2}_{T_2}.\nonumber\\
\end{align}

By standard concentration of sample covariance matrices we conclude that for any $\delta>0$, there exist constants $c, b_0>0$, such that for $n\ge \frac{c^2}{\delta^2} d$, we have
\begin{align}\label{eq:aux12}
\opnorm{\frac{1}{n}\sum_{i=1}^n \x_i\x_i^T - \mtx{I}} \le \delta\,,
\end{align}
with probability at least $1- 2 e^{-b_0 d}$. Using this bound, with $\delta =1$, along with the $1$-Lipschitz property of the activation $\phi(z)$, we arrive at 
\begin{align}
T_1 &\le \frac{\|\vct{v}\|^2}{2n}\sum_{i=1}^n\twonorm{\phi(\mtx{W}\vct{x}_i)-\phi(\mtx{W}^*\vct{x}_i)}^2,\nonumber\\
&{\le}\frac{B\|\vct{v}\|^2}{2n}\sum_{i=1}^n\twonorm{\left(\mtx{W}-\mtx{W}^*\right)\vct{x}_i}^2,\nonumber\\
&{=}\frac{B\|\vct{v}\|^2}{2}\langle \left(\mtx{W}-\mtx{W}^*\right)^T\left(\mtx{W}-\mtx{W}^*\right) , \frac{1}{n}\sum_{i=1}^n\vct{x}_i\vct{x}_i^T\rangle,\nonumber\\
&\le B \|\vct{v}\|^2 \fronorm{\mtx{W}-\mtx{W}^*}^2.
\end{align}
To bound the second term note that we have
\begin{align*}
T_2&\le \frac{1}{n}\sum_{i=1}^n \Big( (\vb-\vb^*)^T\phi(\mtx{W}^* \vct{x_i})\Big)^2,\\
&\le\frac{1}{n}{\opnorm{\vb-\vb^*}_{\ell_\infty} ^2 } \sum_{i=1}^n \opnorm{\phi(\W^* \x_i)}_{\ell_1} ^2,\\
&\le \frac{1}{n}{\opnorm{\vb-\vb^*}_{\ell_\infty} ^2 } \sum_{i=1}^n \left( \sum_{\ell=1}^k. \phi(\wb_\ell ^{*T} \x_i) \right)^2
\end{align*}
Note that for any $z\in\R$, we have $|\phi(z)| < |\phi(0)|+ B |z|$. Continuing with the above inequality, we write
\begin{align*}
T_2&\le  \frac{1}{n}{\opnorm{\vb-\vb^*}_{\ell_\infty} ^2 } \sum_{i=1}^n \left( \sum_{\ell=1}^k  \left(|\phi(0)|+ B |\wb_\ell ^{*T} \x_i|\right) \right)^2,\\
&\le \frac{k}{n}{\opnorm{\vb-\vb^*}_{\ell_\infty} ^2 } \sum_{i=1}^n  \sum_{\ell=1}^k\left(|\phi(0)|+ B|\wb_\ell ^{*T} \x_i| \right)^2,\\
&\le \frac{k}{n}{\opnorm{\vb-\vb^*}_{\ell_\infty} ^2 } \sum_{i=1}^n  \left(2|\phi(0)|k+ 2B \sum_{\ell=1}^k (\wb_\ell ^{*T} \x_i)^2 \right),\\
&= \frac{2k}{n}{\opnorm{\vb-\vb^*}_{\ell_\infty} ^2 } \left(\phi^2(0) kn+ B^2\fronorm{\W^*\X}^2 \right),\\
&\le \frac{2k}{n}{\opnorm{\vb-\vb^*}_{\ell_\infty} ^2 } \left(\phi^2(0) kn+ B^2 \fronorm{\W^*}^2\opnorm{\X}^2 \right),\\
&\le {2k^2}{\opnorm{\vb-\vb^*}_{\ell_\infty} ^2 } \left(\phi^2(0)+ 2B^2 w_{\max}^2 \right)\,,
\end{align*}
where in the last step, we used~\eqref{eq:aux12} and the fact that $\fronorm{\W^*}^2 \le k w_{\max}^2$. Combining the bounds on $T_1$ and $T_2$, completes the proof.

\section{Proof of Lemma~\ref{lem:aux3}}\label{proof:aux3}
Note that the spectral norm of a matrix is bounded above by the sum of the spectral norms of the diagonal blocks of that matrix. Hence,
\begin{align}
\opnorm{\nabla^2\cL(\vb,\W)} \le \opnorm{\nabla_{\W}^2\cL(\vb,\W)} + \opnorm{\nabla_{\vb}^2\cL(\vb,\W)}\,.
\end{align}
We proceed by bounding each of these two terms. Using Identity~\eqref{partHess2}, we have
\begin{multline}
\nabla^2_{\W} \cL(\vb,\W) =\nonumber\\
 \frac{1}{n} \J \J^T+ \frac{1}{n} \text{blockdiag} \left( \sum_{i=1}^n v_1\phi''(\wb_1^T\x_i) r_i \x_i\x_i^T, \sum_{i=1}^n v_2\phi''(\wb_2^T\x_i) r_i \x_i\x_i^T, \dotsc, \sum_{i=1}^n v_k\phi''(\wb_k^T\x_i) r_i \x_i\x_i^T\right).
\end{multline}
Since $(\vb,\W)\in \Omega$, we have 
$$\opnorm{\vb}_{\ell_\infty} \le \opnorm{\vb^*}_{\ell_\infty} + \opnorm{\vb-\vb^*}_{\ell_\infty} \le R+v_{\max} <2v_{\max}\,.$$
Furthermore using \eqref{eq:aux12} together with $|\phi''|<L$ we conclude that
\begin{align}
\opnorm{\nabla^2_{\W} \cL(\vb,\W) - \frac{1}{n}\J\J^T} &\le 2Lv_{\max} \opnorm{\frac{1}{n}\sum_{i=1}^n r_i \x_i \x_i^T},\nonumber\\
&\le 2L v_{\max} \opnorm{\vct{r}}_{\ell_\infty} \opnorm{\frac{1}{n}\sum_{i=1}^n \x_i\x_i^T}, \nonumber\\
&\le 4L v_{\max} \opnorm{\vct{r}}_{\ell_\infty}\,,\label{eq:H1-1}
\end{align}
holds with probability at least $1- 2e^{-b_0d}$. Moreover, using the B-Lipschitz property of the activation, we have 
\begin{align}
|r_i| &= |\vb^T\phi(\W\x_i) - \vb^T \phi(\W^*\x_i)|\nonumber\\
&\le \twonorm{\vb} \twonorm{\phi(\W\x_i) - \vb^T \phi(\W^*\x_i)}\nonumber\\
&\le Bv_{\max}\sqrt{k} \twonorm{(\W-\W^*)\x_i}\nonumber\\
&\le 2B v_{\max}\sqrt{kd} \opnorm{\W-\W^*}\,,\label{eq:rB}
\end{align}
where in the last step we used the fact that $\twonorm{\x_i}^2\le d$, which holds with probability at least $1- e^{-d/8}$ by standard concentration bound for $\chi^2$- random variables. Using~\eqref{eq:rB} in~\eqref{eq:H1-1}, we obtain
\begin{align}
\opnorm{\nabla^2_{\W} \cL(\vb,\W) - \frac{1}{n}\J\J^T} 
&\le 8  BL v_{\max}^2 \sqrt{kd} \opnorm{\W-\W^*},\nonumber\\
&\overset{(a)}{\le} 8  BL v_{\max}BL \sqrt{kd} R,\nonumber\\
&\overset{(b)}{\le} 8  BL v_{\max}^2 BL k \,,  \label{eq:H1-2}
\end{align}
where (a) follows from the definition of the set $\Omega$ and (b) follows from $R\le v_{\max}$ and $k\ge d$. Note that while we can plug in for $R$ to obtain a tighter bound, we use a looser bound on $R$ as this term will be dominated by the other term (i.e.~$\|\J\J^T/n\|$) and thus a more accurate bound is unnecessary.

Restating the bound on $\sigma_{\max}(\J(\vb,\W))$ for $(\vb,\W)\in \Omega$, given by~\eqref{smaxJ}, we have
\begin{align}
\opnorm{\frac{1}{n}\J\J^T} 
\le \frac{9}{4} {C^2} \sigma_{\max}^2(\W^*) k  \,.\label{eq:H1-3}
\end{align}
 
Combining~\eqref{eq:H1-2} and~\eqref{eq:H1-3}, we arrive at
\begin{align}
\opnorm{\nabla^2_{\W} \cL(\vb,\W)} \le \left({3C^2}\sigma_{\max}^2(\W^*) + 8 v_{\max}^2 BL \right) k\,.\label{eq:H1-4}
\end{align} 

We next bound $\opnorm{\nabla_{\vb}^2\cL(\vb,\W)}$. Starting with Equation~\eqref{Hessianv}, we have
\begin{align}
\opnorm{\nabla_{\vb}^2\cL(\vb,\W)} &= \frac{1}{n} \opnorm{\phi(\W\X)\phi(\W\X)^T}\nonumber\\
&\le \frac{1}{n}\opnorm{\sum_{i=1}^n \phi(\W\x_i)\phi(\W\x_i)^T}\nonumber\\
&\le \frac{1}{n}\sum_{i=1}^n \opnorm{\phi(\W\x_i)\phi(\W\x_i)^T}\nonumber\\
&=  \frac{1}{n}\sum_{i=1}^n \twonorm{\phi(\W\x_i)}^2\nonumber\\
&\stackrel{(a)}{\le} \frac{1}{n} \sum_{i=1}^n  \sum_{\ell=1}^k\left(|\phi(0)|+ B |\wb_\ell ^{*T} \x_i| \right)^2\nonumber\\
&= \frac{1}{n} \sum_{i=1}^n  \left(2 \phi(0)^2 k+ 2B^2 \sum_{\ell=1}^k (\wb_\ell ^{*T} \x_i)^2 \right)\nonumber\\
&= \frac{2}{n} \left(\phi^2(0) kn+ B^2\fronorm{\W^*\X}^2 \right)\nonumber\\
&\le \frac{2}{n}\left(\phi^2(0) kn+ B^2\fronorm{\W^*}^2\opnorm{\X}^2 \right)\nonumber\\
&\stackrel{(b)}{\le} {2k}\left(\phi^2(0)+ 2B^2 w_{\max}^2 \right)\,.\label{eq:H2}
\end{align}
Here, $(a)$ holds because $\phi$ is $B$-Lipschitz; $(b)$ holds because $\fronorm{\W}^2\le k w_{\max}^2$ and $\opnorm{\X}\le \sqrt{2n}$, with probability at least $1-2e^{-2 n}$.

Using bounds~\eqref{eq:H1-4} and~\eqref{eq:H2}, we obtain
\begin{align}
\opnorm{\nabla_{\W}^2\cL(\vb,\W)} + \opnorm{\nabla_{\vb}^2\cL(\vb,\W)} \le \left( {3C^2}\sigma_{\max}^2(\W^*) + 8 v_{\max}^2 BL+ 4B^2w_{\max}^2+2\phi^2(0)\right) k\,.
\end{align}
\end{document}